\renewcommand\footnotetextcopyrightpermission[1]{}
\let\@authorsaddresses\@empty
\newtheorem{theorem}{Theorem}
\newtheorem{lemma}{Lemma}
\newtheorem{definition}{Definition}
\newcommand{\sumT}{\sum_{t=1}^T} 
\newcommand{\N}{\mathcal{N}}
\newcommand{\X}{\mathcal{X}}
\newcommand{\convX}{\text{conv}(\mathcal{X})}
\newcommand{\ti}{\tilde} 
\newcommand{\dtp}[2]{\langle {#1}, {#2} \rangle}
\newcommand{\grd}{\nabla}
\newcommand{\err}{\|\theta_t - \ti\theta_t\|}
\newcommand{\expec}[2]{\mathop{\mathbb{E}}_{#1}\left[{#2}\right]}
\newcommand{\ser}{\left\{ x\right\}_T}
\newcommand{\convXs}{\text{conv}(\mathcal{X}_s)}
\DeclareMathOperator*{\argmax}{argmax}
\newcommand{\convXb}{\text{conv}(\mathcal{X}_b)}
\newcolumntype{Y}{>{\centering\arraybackslash}X}
\begin{document}
\title{Optimistic No-regret Algorithms for Discrete Caching}

\author{Naram Mhaisen}
\affiliation{%
  \institution{Delft University of Technology}
  \city{Delft}
  \country{The Netherlands}
}

\author{Abhishek Sinha}
\affiliation{%
  \institution{Tata Institute of Fundamental Research}
  \city{Mumbai}
  \country{India}
}

\author{Georgios Paschos}
\affiliation{%
  \institution{Amazon}
  \city{Luxembourg}
  \country{Luxembourg}
}

\author{George Iosifidis}
\affiliation{%
  \institution{Delft University of Technology}
  \city{Delft}
  \country{The Netherlands}
}
\renewcommand{\shortauthors}{Naram Mhaisen et al.}

\begin{abstract}
We take a systematic look at the problem of storing whole files in a cache with limited capacity in the context of optimistic learning, where the caching policy has access to a prediction oracle (provided by, e.g., a Neural Network). The successive file requests are assumed to be generated by an adversary, and no assumption is made on the accuracy of the oracle. In this setting, we provide a universal lower bound for prediction-assisted online caching and proceed to design a suite of policies with a range of performance-complexity trade-offs. All proposed policies offer sublinear regret bounds commensurate with the accuracy of the oracle. Our results substantially improve upon all recently-proposed online caching policies, which, being unable to exploit the oracle predictions, offer only $O(\sqrt{T})$ regret. In this pursuit, we design, to the best of our knowledge, the first comprehensive optimistic Follow-the-Perturbed leader policy, which generalizes beyond the caching problem. We also study the problem of caching files with different sizes and the bipartite network caching problem. Finally, we evaluate the efficacy of the proposed policies through extensive numerical experiments using real-world traces.    
\end{abstract}
\settopmatter{printacmref=false}

\begin{CCSXML}
<ccs2012>
  <concept>
      <concept_id>10003033.10003079.10011672</concept_id>
      <concept_desc>Networks~Network performance analysis</concept_desc>
      <concept_significance>500</concept_significance>
    </concept>
 
 </ccs2012>
\end{CCSXML}

\ccsdesc[500]{Networks~Network performance analysis}

\keywords{online algorithms; optimistic learning; caching; regret bounds.}

\maketitle
\section{Introduction}\label{sec:intro}

%(prefetching)
This paper addresses the discrete caching (prefetching) problem: choose files to replicate in a local cache in order to maximize the probability that a new file request is served locally. \emph{Hitting} the cache speeds up CPU, optimizes user experience in CDN's \cite{bektas}, and enhances the performance of wireless networks \cite{femtocaching}. With the perpetual growth of Internet traffic fueled by new services such as AR/VR \cite{chatzopoulos-ARVR}, caching policies that learn fast to maximize cache hits can mitigate the increasing costs of information transportation \cite{paschos-jsac}, and similar benefits can be expected for embedded and other computing systems \cite{embedded-caching}. This work aspires to advance our theoretical understanding of this fundamental problem and proposes new provably-optimal and computationally-efficient caching algorithms using a new modeling and solution approach based on \emph{optimistic learning}.

\vspace{-1mm}
\subsection{Motivation}

\begin{figure*}
    \centering
    \includegraphics[scale=0.7]{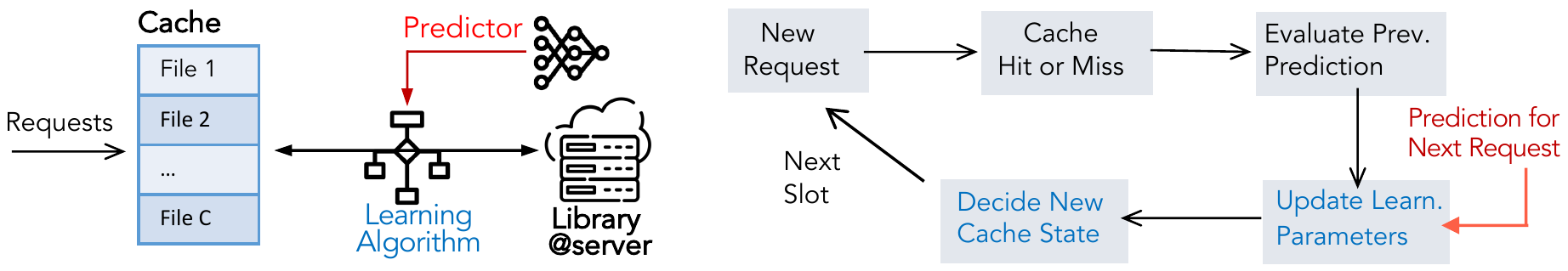}%{Caching-fig2.pdf}
    \vspace{-2mm}
    \caption{Optimistic online caching with predictions: system schematic (\emph{left}) \& algorithm template (\emph{right}).}
    \label{fig:caching-fig}
    \vspace{-5mm}
\end{figure*}

Common caching policies store the newly requested files and employ the Least-Recently-Used (LRU) \cite{lru-sigmetrics08}, Least-Frequently-Used (LFU) \cite{lfu-sigmetrics99} and other similar rules to evict files when the cache capacity is exhausted. Under certain statistical assumptions on the request trace, such policies maintain the cache at an optimal state, see \cite[Sec. 3.1-3.2]{paschos-book}. However, with frequent addition of new content to libraries of online services and the high volatility of file popularity \cite{volatility}, these policies can perform arbitrarily bad. This has spurred intensive research efforts for policies that operate under more general conditions by learning on-the-fly the request distribution or adapting dynamically, e.g., with Reinforcement Learning, to observed requests; see Sec. \ref{sec:related}. Nonetheless, these studies do not offer performance guarantees nor scale well for large libraries. The goal of this work is to design robust caching policies that are able to learn effective caching decisions with the aid of a prediction oracle of \emph{unknown quality} (Fig. \ref{fig:caching-fig} left) even when the file requests are made in an adversarial fashion.

%OCO & no-regret lens for caching & why they work
To that end, we formulate the caching problem as an online convex optimization (OCO) problem \cite{hazan-book}. At each slot $t =1, 2, \dots, T$, a learner (the caching policy) selects a caching vector $x_t\in \X$ from the set of admissible cache states $\X \subseteq \{0,1\}^N$ for a cache of size $C,$ where $N$ is the library size. Then, a $1$-hot vector $\theta_t\in\{0,1\}^N$ with value $1$ for the requested file is revealed, and the learner receives a reward of $f_t(x_t)=\dtp{\theta_t}{x_t}$ for cache hits. The reward is revealed only after committing $x_t$, which naturally matches the dynamic caching operation where the cached files are decided before the next request arrives. Here, the learner makes no statistical assumptions and $\theta_t$ can follow any distribution, even one that is handpicked by an adversary. In the optimistic framework, the learner does not only consider its hit or miss performance so far when deciding $x_t$, but also the predictor's performance and output (Fig. \ref{fig:caching-fig} right). As customary in the online learning literature, we characterize the policy's performance by using the static \emph{regret} metric: 
\begin{align}
	R_T(\ser)\triangleq \sup_{ \{f_t\}_{t=1}^T}\left\{ \sumT f_t({x}^\star) - \sumT f_t({x}_t)\right\},
\end{align}
where ${x}^\star \!=\! \argmax_{x\in\mathcal{X}} \sumT f_t({x})$ is the (typically unknown) \emph{best-in-hindsight} cache decision that can be selected only with access to future requests.\footnote{It is interesting to note that ${x}^\star$ caches the most frequent requests, which coincides with the limit behavior of LFU.} The regret measures the accumulated reward gap between the online decisions $\{x_t\}_t$ and benchmark $x^{\star}$. An algorithm is said to achieve sublinear regret when its average performance gap $R_T/T$ vanishes as $T \rightarrow \infty$. In this context, recent works have proposed caching policies that offer $O(\sqrt{T})$ regret bound \cite{abhishek-sigm20,tareq-jrnl, tareq-conf, paschos-jrnl, naram-jrnl,leadcache,ocol}, which, in fact, is the optimal (as small as possible) achievable regret rate, see \cite[Thm. 5.1]{orabona2021modern}, \cite[Thm. 1]{abhishek-sigm20}.

Most of these regret-optimal algorithms have been designed for \emph{continuous} caching, where it is assumed that each file is encoded and divided into a large number of small chunks such that storing them can be approximated by continuous variables \cite{maddah-ali}. In this case, the set of eligible caching states $\X$ is convex and hence one can readily apply standard OCO algorithms such as the Online Gradient Ascent (OGA). Albeit a handy assumption, there are settings where continuous caching cannot be used for practical reasons. Namely, keeping chunk meta-data consumes non-negligible storage; the coding operation is often computationally demanding; and the number of chunks might not be big enough to render continuous caching a good approximation. Therefore, we consider here the more realistic, and more challenging to solve, \emph{discrete} caching problem. Indeed, in discrete caching the set $\X$ is naturally non-convex (containing binary file-caching decisions) and thus standard OCO policies cannot be employed. While first steps in the study of discrete caching, with equal-sized files, were recently made by \cite{tareq-jrnl, abhishek-sigm20, leadcache}. In this paper, we extend their scope and design algorithms with substantially improved performance guarantees.

Namely, while regret minimization yields robust policies that learn under adversarial conditions, this framework receives the fair criticism that the policies have often suboptimal performance when the requests (cost functions, in general) are predictable, e.g., stationary. In such situations, we would like the policy to gauge the predictability of requests, and optimize aggressively the cache. For instance, requests in services like Facebook are often amenable to accurate forecasts; while in YouTube and Netflix the viewers receive recommendations which can effectively serve as predictions for their forthcoming requests \cite{netflix, recommend2}. Unfortunately, regret-based caching policies, such as \cite{paschos-jrnl, tareq-conf, tareq-jrnl, abhishek-sigm20, leadcache, 10.1145/3491047}, are \emph{pessimistically} designed for the worst-case request sequence and cannot benefit from predictable requests. We tackle this shortcoming by designing a new suite of \emph{optimistic} caching algorithms. An optimistic algorithm \cite{rakhlin-nips2013, mohri-aistats2016} has access to a prediction of an unknown quality for the next-slot utility function. The ultimate goal is to achieve constant (independent of $T$) regret when the predictions are accurate, while maintaining the worst-case regret bounds when predictions fail. This best-of-two-worlds approach, we show theoretically and demonstrate numerically, brings significant performance gains to dynamic caching.

\vspace{-1mm}
\subsection{Methodology and Contributions}
%\vspace{-1mm}

We study key variants of the discrete caching problem, namely the single cache with equal or unequal-sized files and the bipartite caching, and propose a suite of optimistic learning algorithms with different pros and cons.\footnote{Optimistic learning was originally proposed for problems with slowly-varying (hence, predictable) cost functions \cite{rakhlin-nips2013}; in caching, we note the additional motivation coming from the abundance of forecasting models, \emph{e.g.,} by a Neural Network.}
Our first result demonstrates the best achievable regret in the setup we consider, which turns out to be $R_T= \Omega([\sum_t\err]^{\nicefrac{1}{2}})$, indicating a significant potential of obtaining a regret that scales with the predictor's error rather than the time horizon $T$ (Sec. \ref{sec:lb}). We then proceed to propose variants of the seminal Follow-The-Regularized-Leader (FTRL) and Follow-the-Perturbed-Leader (FTPL) algorithms, which can be both viewed as \emph{smoothing} techniques for stabilizing learning decisions (Sec. \ref{sec:smoothing}), whose regret match this lower bound up to constants. In detail, we expand the optimistic FTRL algorithm \cite{mohri-aistats2016, ocol, naram-jrnl} that was designed for convex problems, to handle, through sampling, discrete (non-convex) decisions (Sec. \ref{sec:oftrl-m}). We prove this approach attains expected regret $O(\sqrt T)$ for worst-case predictions and \emph{zero-regret} for perfect predictions with an improved prefactor that does not depend on library size $N$. However, the OFTRL implementation can be hindered by an involved projection step that might be computationally expensive\footnote{In some cases the projection can be optimized, but in general it is $O(N^2)$ even for the non-weighted capped simplex \cite{projection-simplex}.}. Thus, we develop a new optimistic FTPL algorithm that applies prediction-adaptive perturbations to achieve a similar regret bound with linear ($O(N)$) computation overhead (Sec. \ref{sec:oftpl}). The flip side is that its regret bound contains $O(\textit{poly-log}(N))$ term.

We first derive results for equal-sized files, in line with all prior learning-based works for discrete caching \cite{abhishek-sigm20, leadcache, Lykouris-ML, Rohatgi} or continuous caching \cite{paschos-jrnl, naram-jrnl, tareq-jrnl}. Subsequently, we drop this assumption and study the single cache problem with different file sizes (Sec. \ref{sec:arb-sizes}). These first-of-their-kind regret-based algorithms require a point-wise approximation scheme for solving efficiently the NP-Hard Knapsack instance at each slot, while keeping the accumulated regret bound sublinear. To that end, we use the help of a rounding subroutine, \texttt{DepRound} \cite{depround}, to a known almost-discrete optimal solution \texttt{Dantz} \cite{dantzing}. We show that the proposed policies achieve $(1/2)$-approximate regret of $O(\sqrt T)$ and zero-regret for adversarial and perfect predictions, respectively. We also extend the OFTRL analysis to the widely used bipartite network caching model \cite{femtocaching, paschos-jsac} (Sec. \ref{sec:OFTRL-BP}), where we optimize jointly the discrete caching and routing decisions to obtain prediction-modulated performance.

In (Sec. \ref{sec:experts}), we change tack and incorporate the optimism through the celebrated Experts model. The caching system in this case is a \emph{meta-learner} which receives caching advice from an \emph{optimistic expert} that suggests to cache solely w.r.t. predicted requests, and from a \emph{pessimistic expert} that ignores predictions. We propose a tailored OGD-based scheme that allows the meta-learner to adapt to predictions' accuracy ( performance of the optimistic expert) in a way that achieves negative regret when that expert is reliable, and, again, maintains an $O(\sqrt T)$ regret for unreliable predictions.

In summary, we provide a comprehensive toolbox of algorithms having different computation overheads and performance, hence enabling practitioners to select the best approach to their problem. Moreover, we include technical results that are of independent interest, such as the non-convex OFTPL algorithm with improved regret bounds; the approximate non-convex OFTRL algorithm for the Knapsack problem; and an analysis of OFTRL/OFTPL with a probabilistic prediction model.

\textbf{Notation.} We denote sets with calligraphic capital letters, e.g., $\N= \{1,2,\ldots,N\}$; vectors with $x=(x_i, i\in\mathcal{N})$ where $x_i$ is the $i$th component; and denote $x_{i}^t$ the $i$th component of the time-indexed vector $x_t$. The shorthand notation $x_{1:t}$ is used for $\sum_{i=1}^t x_i$. Also, $\{x_t\}_{t=1}^k$ denotes the sequence of vectors $\{x_1, x_2, \ldots, x_k\}$, and we use the succinct version $\{x_t\}_T$ for $\{x_1, x_2, \ldots, x_T\}$. When clear from context, we often drop the notation of actions and denote the regret $R_T(\ser)$ simply with $R_T$.

\vspace{-1mm}
\section{Background and Related Work}\label{sec:related}

\subsection{Caching and Learning}
Research on caching optimization spans several decades and we refer the reader to survey \cite{paschos-book} for an introduction to the recent developments in this area. A large body of works focuses on offline policies which use the anticipated request pattern to proactively populate the caches with files that maximize the expected hits \cite{bektas}. At the other extreme, dynamic caching solutions studied variants of the LFU/LRU policies \cite{lru-sigmetrics08,lfu-sigmetrics99,leonardi18, giovanidis-georgraphic}; tracked the request distribution \cite{snm,kauffman} and optimized accordingly the caching \cite{mathieu}; employed reinforcement learning to adapt the caching decisions to requests \cite{giannakis-q-learning, gunduz-reinforcement, 8790766}; and, more recently, applied online convex optimization towards enabling the policies to handle unknown (adversarial) request patterns \cite{paschos-jrnl, tareq-jrnl, naram-jrnl, abhishek-sigm20, leadcache, tareq-conf,10.1145/3491047}. These latter works assume that the files can be fetched dynamically at each slot to optimize the cache configuration, as opposed to works such as \cite{Lykouris-ML, Rohatgi} which study pure eviction policies.

The interplay between predictions and caching has attracted attention from both machine learning and networking communities. The studies in \cite{jordan-tmc, 9528062, 9681354} formulated the joint caching and recommendation problem, considering static models, and assuming full knowledge of requests and the users' propensity to follow recommendations (i.e., they place assumption on the prediction accuracy). On the other hand, \cite{Lykouris-ML, Rohatgi} presented a mechanism agnostic to requests that uses untrusted predictions to achieve \emph{competitive-ratio} guarantees. Their approach was generalized to metrical task systems by \cite{antoniadis_20} and improved with nearly lower-bound matching for the competitive ratios in \cite{rutten_22}. However, as proved in \cite{pmlr-v30-Andrew13}, algorithms that ensure constant competitive-ratios do not necessarily guarantee sub-linear regret, which is the performance criterion we employ here following the recent regret-based caching research \cite{paschos-jrnl, tareq-jrnl, naram-jrnl, abhishek-sigm20, leadcache, tareq-conf, 10.1145/3491047}. We note that all the above works consider files with equal size, while we extend the framework to the general scenario of \emph{unequally-sized} files. In addition, none of the above works studies discrete caching with predictions. Finally, it is worth stressing that employing predictions for improving the performance of communication/computing systems is not a new idea: predictions have been incorporated in stochastic optimization \cite{longbo-ton18, lui-ton21} which assume the requests and system perturbations are \emph{stationary}; and in online learning \cite{xu-mobihoc19, comden-sigmetrics19} which \emph{do not adapt} to predictions' accuracy (considered known). Here, we make \emph{no assumptions} on the predictions' quality, which can be even adversarial.

\subsection{Adaptive Smoothing} \label{sec:smoothing}

In contrast to the above studies, our optimistic learning approach is based on \emph{adaptive smoothing}. Abernethy \textit{et. al.} \cite{abernethy14} introduced a unified view of FTRL and FTPL as techniques to add smoothing, through \emph{regularization} or \emph{perturbation}, to a non-smooth potential function. This perspective is useful to our work since we leverage both ideas. Namely, let us define:
$\Phi(\theta) \triangleq {\max_{x \in \mathcal{X}}\dtp{x}{\theta}}$, and consider the potential function $\Phi(\Theta_t)$, where  $\Theta_t= \theta_{1:t}$ is the vector of aggregated gradients (file requests). An intuitive strategy is to choose the action that maximizes the rewards seen so far: 
\begin{eqnarray}
    x_t = \argmax_{x\in\mathcal{X}} \dtp{\Theta_{t-1}}{x} = \grd \Phi(\Theta_{t-1}),
    \label{eq:FTL}
\end{eqnarray}
which is known as Follow The Leader (FTL) and is optimal when the utility functions are samples from a stationary statistical distribution. In contrast, FTL has linear regret in the adversarial setting \cite{sarah-between, follow-hedge}, since successive gradients of non-smooth functions can be arbitrarily far from each other, thus leading to unstable actions.  \cite{abernethy14} proposed to stabilize the learner actions by \emph{smoothing} the potential function, and selecting actions based on the \emph{smoothed} potential $\grd\ti\Phi(\theta)$. In FTRL, the smoothing is achieved by adding a strongly convex function to the potential, i.e.,\footnote{While the maximization requires that $x_t$ to be in the convex hull of $\mathcal{X}$, feasibility can be recovered via appropriate rounding.}$x_t = \argmax_{x\in\convX} \dtp{x}{\Theta_t} - r_{1:t}(x)$

where $r_t(x)$ is a $\sigma_t$-strongly convex regularizer. This framework generalizes the Online Gradient Ascent (OGA) and the Exponentiated Weights (EG) algorithms, which were employed for the caching problem in \cite{paschos-jrnl} and \cite{tareq-jrnl} respectively\footnote{We note that these papers present their algorithms as instances of a similar framework to FTRL called Online Mirror Descent (OMD). Nonetheless, there exist equivalence results between these two frameworks (see \cite[Sec. 6.1 ]{mcmahan-survey17}) for specific choices of the \emph{mirror-map} (in OMD), or equivalently the regularizer (in FTRL).}. As for FTPL, the smoothing is done by adding perturbation to the accumulated cost parameter of the potential. And the actions are decided by\footnote{In this case the gradient of the smoothed potential is in fact the expectation $\grd\Phi(\Theta_t+\eta_t\gamma) = \expec{\gamma}{x_t}$} $x_t = \arg\max_{x\in\mathcal{X}} \dtp{x}{\Theta_t+\eta_t\gamma}$,

where $\gamma\!\! \sim\!\! \mathcal{N}(0,\mathbf{1})$ and $\eta_t$ is a scaling factor that controls the smoothing. FTPL was shown to provide optimal regret guarantees for the discrete caching problem in \cite{abhishek-sigm20}. Computationally efficiency is also a notable feature for FTPL updates as it requires an ordering operation instead of projection.

We propose to modulate the regularization $\sigma_t$ and perturbation $\gamma_t$ parameters with the predictions quality. Intuitively, accurate predictions should lead to less regularization/perturbation (less smoothing), enabling the learner to align its decisions more with the predictions. On the other hand, inaccurate predictions induce more smoothing, which alleviates their effects on the decisions. We show that careful tuning of these smoothing parameters leads to regret bounds that interpolate between $R_T \leq 0$, and $R_T \leq O(\sqrt{T})$. Nonetheless, these two algorithms have considerable differences in terms of computational complexity and constants in the bounds, which are discussed in detail. 

\subsection{Optimistic Learning}

For \emph{regret} minimization with predictions, \cite{dekel-nips17} used predictions $\theta_t$ for the gradient $\ti\theta_t = \grd f_t(x_t)$ with guaranteed correlation $\dtp{\ti\theta_t}{\theta_t}\geq \alpha \|\theta_t\|^2$ to improve the regret . In \cite{google-2020}, this assumption was relaxed to allow predictions to fail the correlation condition at some steps, obtaining bounds that interpolate between $O(\log(T))$ and $O(\sqrt{T})$; while this idea was extended to multiple predictors in \cite{google-nips-2020}. A different line of works \cite{rakhlin-nips2013}, \cite{mohri-aistats2016} use adaptive regularizers and define the $t$-slot prediction errors $\|\theta_t - \ti \theta_t\|_2$ to obtain $O([\sum_{t} \|\theta_t-\ti \theta_t \|^2]^{1/2})$ regret bounds. Specifically, OFTRL versions have been proposed in \cite{mohri-aistats2016} and recently used in \cite{daron-ton} for problems with budget constraints, while \cite{naram-jrnl, ocol} tailored these ideas to \emph{continuous} caching. The problem of discrete caching is fundamentally different. Through a careful analysis,  we manage to reuse these results after relaxing the cache integrality constraints, and then employing a randomized rounding technique that recovers the same prediction-modulated regret in expectation. The regret bounds have the desirable property of being \emph{dimension-free}. Nonetheless, we proceed to remark that OFTRL can have a computational bottleneck due to involving a projection step, which can be avoided in FTPL.

Optimistic versions of FTPL were recently investigated in \cite{suggala-1} and \cite{suggala-2}. In \cite{suggala-1}, the regret bound grows polynomially w.r.t. the decision set dimension. In the caching problem, this would imply a highly-problematic polynomial growth of the regret w.r.t. the typically huge library size $N$. The dependence of the regret on the dimension was improved in \cite{suggala-2}, but it still remains linear. On the contrary, our proposed OFTPL exploits the structure of the decision set and utilizes adaptive perturbation to obtain a regret bound that depends on dimension only by $O(\log(N)^{1/4})$, is order-optimal (based on the achievable lower bound), returns zero-regret for perfect predictions, does not require knowing the time-horizon $T$, nor the prediction errors. None of these desirable features is available in these prior works. We kindly refer the reader to the table in Appendix. \ref{appendix:summary-table} for an overview of the presented algorithms in the context of the most related literature.

\section{Achievable regret for Caching with a Predictor}
\label{sec:lb}
We first introduce a lower bound for the regret of any online caching policy $\pi$, working with a cache of capacity $C$, and has access to an \emph{untrusted} and potentially \emph{adversarial} prediction oracle. In general, the predictions refer to the next function $\tilde f_t(\cdot)$. However, since most OCO algorithms learn based on the observed gradients, it suffices to have predictions $\tilde \theta_t=\nabla \tilde f_t(x_t)$. And for caching, this coincides with a prediction for the next request\footnote{In fact this model can be readily generalized to other linear utilities beyond cache-hits, so as to incorporate e.g., file-specific caching gains, time-varying network conditions, and so on; see similar models in \cite{paschos-jrnl, ocol}.}. Now, unlike all prior works in optimistic learning \cite{rakhlin-nips2013, mohri-aistats2016, google-2020}, we adopt here the more general \emph{probabilistic} prediction model where $\tilde\theta_t$ is not necessarily a one-hot vector (as the actual $\theta_t$), but a probability distribution over the library. Thus, each $\ti{\theta}_t$ is drawn from the N-dimensional probability simplex $\Delta_N$. This more general approach is rather intuitive as the forecasting models (e.g., a Neural Network) typically yield probabilistic inferences. It also enhances the performance of our optimistic algorithms and allows efficient training of the forecaster using a convex loss function (please see Appendix \ref{appendix:simulations} for examples and justification). It does require, however, a more elaborate technical analysis, especially for the case of OFTPL. In this setup, we have the following lower bound:
\begin{theorem}
For any online caching policy $\pi$, there exist a sequence of requests $\{\theta_t\}_T$ and predictions $\{\tilde\theta_t\}_T$ for which the regret $R_T$ satisfies
\begin{equation}
    \expec{}{R_T} \geq \sqrt{\frac{C}{2\pi}}\sqrt{\sum_{t=1}^T || \theta_t -\ti\theta_t||_2^2}-\Theta\left(\frac{1}{\sqrt{T}}\right).
\end{equation}
\end{theorem}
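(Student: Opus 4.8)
The plan is to prove the bound by the probabilistic method. I will exhibit one randomized family of request sequences together with a fixed, deliberately \emph{uninformative} predictor, show that the expected regret (over the request randomness and any internal randomization of $\pi$) already exceeds the claimed quantity, and then invoke a standard averaging (Yao) argument to fix a deterministic request sequence. Concretely, I take a library of $N=2C$ files grouped into $C$ disjoint pairs $\{a_i,b_i\}_{i=1}^C$, and at each slot draw $\theta_t$ uniformly from the $2C$ unit vectors (equivalently, pick a pair uniformly, then a file within it by a fair coin). For predictions I use the fixed uniform vector $\ti\theta_t=\mathbf 1/N\in\Delta_N$, which carries no information about the realized request. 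The easy first observation is that, since $x_t$ is committed before $\theta_t$ is revealed and $\theta_t$ is independent of the past, $\expec{}{\dtp{\theta_t}{x_t}\mid \mathcal F_{t-1}}=\dtp{\mathbf 1/N}{x_t}=C/N=\tfrac12$ for \emph{every} policy; hence $\expec{}{\sumT \dtp{\theta_t}{x_t}}=T/2$ regardless of how $\pi$ uses the (useless) predictions.

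Next I will lower bound the benchmark. Rather than analyze $x^\star$ directly (which would entangle the top-$C$ order statistics of a multinomial count vector), I bound it from below by the static cache $\hat x$ that stores the \emph{majority} file of each pair. Writing $n_{a_i},n_{b_i}$ for the request counts of the two files of pair $i$, this gives $\sumT\dtp{\theta_t}{x^\star}\ge \sumT\dtp{\theta_t}{\hat x}=\sum_{i=1}^C\max(n_{a_i},n_{b_i})=\tfrac T2+\tfrac12\sum_{i=1}^C|n_{a_i}-n_{b_i}|$, since $\sum_i(n_{a_i}+n_{b_i})=T$. Combining with the reward identity above and using linearity of expectation reduces the whole theorem to the one-dimensional estimate $\expec{}{R_T}\ge \tfrac12\sum_{i=1}^C \expec{}{|n_{a_i}-n_{b_i}|}$, where each $n_{a_i}-n_{b_i}=\sum_{t=1}^T \xi_t^{(i)}$ is a (lazy) symmetric walk whose increments take values $\pm1$ with probability $1/(2C)$ each and $0$ otherwise, so that its variance is exactly $T/C$.

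The crux is extracting the \emph{sharp} constant from this mean absolute deviation. The elementary Khintchine route yields only $\expec{}{|n_{a_i}-n_{b_i}|}\ge \tfrac1{\sqrt2}\sqrt{T/C}$, which loses the factor needed to reach $\sqrt{C/(2\pi)}$; the correct constant $\sqrt{2/\pi}=\expec{}{|Z|}$ for $Z\sim\N(0,1)$ can only come from a central-limit estimate. I would therefore invoke a Berry–Esseen bound for the lattice walk $\sum_t\xi_t^{(i)}$ to obtain $\expec{}{|n_{a_i}-n_{b_i}|}\ge \sqrt{\tfrac{2}{\pi}\cdot\tfrac{T}{C}}-\varepsilon_T$ with an explicit remainder $\varepsilon_T$, and summing over the $C$ pairs gives $\expec{}{R_T}\ge \tfrac C2\sqrt{\tfrac{2T}{\pi C}}-\tfrac C2\varepsilon_T=\sqrt{\tfrac{CT}{2\pi}}-\Theta(1/\sqrt T)$, the aggregated correction collapsing to the stated $\Theta(1/\sqrt T)$ term. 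For the prediction-error form, note that the uninformative predictor has error $\|\theta_t-\ti\theta_t\|_2^2=\|e_j-\mathbf1/N\|_2^2=1-1/N$ for \emph{every} realization, so $\sum_{t=1}^T\|\theta_t-\ti\theta_t\|_2^2=T\,(1-\tfrac1{2C})$ is deterministic and $\sqrt{CT/(2\pi)}=\sqrt{C/(2\pi)}\sqrt{\sum_{t=1}^T\|\theta_t-\ti\theta_t\|_2^2}\,(1+o(1))$, rewriting the bound as claimed. The main obstacle is exactly this sharp-constant CLT step: producing $\sqrt{2/\pi}$ rather than a loose constant and controlling the Berry–Esseen remainder uniformly across the $C$ lazy walks so that the aggregate stays $\Theta(1/\sqrt T)$; everything else (the reward identity, majority-cache domination, and the prediction-error bookkeeping) is routine.
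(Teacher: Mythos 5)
Your proposal follows the same global architecture as the paper's proof: uniform random requests over a library of $N=2C$ files, the observation that any policy's conditional per-slot reward is bounded by $C/N=\tfrac12$ (note this should be ``$\leq$'' rather than ``$=$'', since $\langle \mathbf{1}, x_t\rangle \leq C$ with equality only for a full cache — harmless, as the inequality points the right way), a benchmark gain of $\sqrt{CT/(2\pi)}$, and a final rewriting of $T$ in terms of the accumulated prediction error. Your two deviations are instructive. First, using the fixed uniform prediction $\ti\theta_t=\mathbf 1/N$ makes $\sum_t\|\theta_t-\ti\theta_t\|_2^2=T(1-1/N)$ deterministic, which is a genuine simplification: the paper instead draws $\ti\theta_t\sim \textsc{Dirichlet}(1,\dots,1)$, computes $\mathbb E[M_T]\leq T$ via moments of the Dirichlet distribution, and then needs a Jensen step to move the square root inside the expectation before applying the probabilistic method to the prediction sequence as well. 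Since the theorem is existential, your deterministic predictor is admissible and removes all of that bookkeeping. Second, where the paper invokes \cite[Lemma 1]{abhishek-sigm20} as a black box for the benchmark bound $\mathbb E\big[\sum_t\langle\xi_t,X_T^\star\rangle\big]\geq \tfrac T2+\sqrt{CT/(2\pi)}-\Theta(1/\sqrt T)$, you inline it via the pairing/majority relaxation — a valid lower bound on the top-$C$ load, and indeed the structure underlying the cited lemma.

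The gap sits exactly where you flagged it, and it is real: plain Berry--Esseen does not deliver the stated $-\Theta(1/\sqrt T)$ remainder. The uniform Berry--Esseen theorem controls the Kolmogorov distance at $O(1/\sqrt T)$, but converting a CDF bound into a mean-absolute-deviation bound requires integrating over an unbounded range; truncating at $\Theta(\sqrt{T\log T})$ using sub-Gaussian tails leaves an integrated error of order $\sqrt{\log T}$ per pair, and even the nonuniform variant $|F(x)-\Phi(x)|\leq A\rho/\big(\sigma^3\sqrt T\,(1+|x|^3)\big)$ integrates only to an $O(1)$ additive loss per pair, i.e. $O(C)$ after summing — neither collapses to $\Theta(1/\sqrt T)$. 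To obtain the stated remainder you need the exact lattice computation: conditioning on the number $K_i\sim\mathrm{Binomial}(T,1/C)$ of requests to pair $i$, the difference $n_{a_i}-n_{b_i}$ is a simple symmetric walk whose mean absolute deviation has the De Moivre closed form $\mathbb E|W_{2m}|=2m\binom{2m}{m}4^{-m}$; sharp Stirling bounds give $\mathbb E|W_k|\geq\sqrt{2k/\pi}-O(1/\sqrt k)$, and a second-order Taylor lower bound on $\mathbb E\sqrt{K_i}$ around $\mathbb E K_i=T/C$ (correction of order $\mathrm{Var}(K_i)/(\mathbb E K_i)^{3/2}=O(\sqrt{C/T})$) then yields the per-pair estimate $\mathbb E|n_{a_i}-n_{b_i}|\geq\sqrt{2T/(\pi C)}-O(\sqrt{C/T})$, which aggregates to $\sqrt{CT/(2\pi)}-\Theta(1/\sqrt T)$ as required. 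That computation is essentially the content of the cited lemma; either reproduce it along these lines or cite it as the paper does — as written, the Berry--Esseen step would only prove the theorem with a weaker additive term.
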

% \begin{theorem}
% The regret $R_T$ of any online caching policy $\pi$, with access to $\{\ti\theta_t\}_t, $ is lower bounded as
% \begin{equation}
%     \expec{}{R_T} \geq \sqrt{\frac{C}{2\pi}}\sqrt{\sum_{t=1}^T || \theta_t -\ti\theta_t||_2^2}-\Theta\left(\frac{1}{\sqrt{T}}\right).
% \end{equation}
% \end{theorem}
\begin{proof}
To prove the lower bound, we show the existence of a request and prediction sequence under which the regret is guaranteed to be larger than the stated bound regardless of the online policy $\pi$. For that, we use the standard probabilistic method \cite{alon2016probabilistic} with an appropriately constructed random file request and prediction sequence as detailed below.

Denote by $\xi_t$ and $\ti\xi_t$ the random variables representing the requested file ($\theta_t$) and its prediction ($\tilde{\theta}_t$) at time $t$, respectively. Denote by $\{X_t^{\pi}\}_{t\geq 1}$ the random variables representing the action of any policy $\pi$.
We use a setup where $N\geq2C$ and consider
an ensemble of caching problems (i.e., request and prediction sequences) where at each slot $t$, the requested file $\xi_t$ is chosen independently and uniformly at random from the library $\mathcal{N}$. The predictions $\ti\xi_t$ are also chosen independently and uniformly at random from the probability simplex $\Delta_N$. Specifically, we let
\begin{eqnarray*}
	\{\tilde{\xi}_t\}_t\stackrel{\textrm{i.i.d.}}{\sim} \textsc{Dirichlet}(\lambda_1, \ldots, \lambda_n, \ldots \lambda_N) \quad \text{with} \quad \lambda_n=1, \forall n\in \mathcal N. 
\end{eqnarray*}
 
Hence, the expected reward obtained by any caching policy $\pi$ on any slot $t$, conditional on the information available to the policy can be bounded as  
\begin{eqnarray*}
	\mathbb{E}\left[\langle \xi_t, X_t^{\pi} \rangle|\{\ti\xi_\tau\}_{\tau=1}^t, \{\xi_\tau\}_{\tau=1}^{t-1}\right] &&\stackrel{(a)}{=}\mathbb{E}\mathbb{E}\left[\langle \xi_t, X_t^{\pi} \rangle| \{\ti\xi_\tau\}_{\tau=1}^t, \{\xi_\tau\}_{\tau=1}^{t-1}, X_t^\pi\right]\\
	&&\stackrel{(b)}{=}\frac{1}{2C}\mathbb{E}\bigg[\langle \bm{1}, X_t^{\pi} \rangle| \{\ti\xi_\tau\}_{\tau=1}^t, \{\xi_\tau\}_{\tau=1}^{t-1}, X_t^\pi\bigg]
	\stackrel{(c)}{\leq} \frac{1}{2}, 
\end{eqnarray*}
	where $(a)$ follows from the tower property of expectations, $(b)$ from the fact $\xi_t \perp \big(\{\ti\xi_\tau\}_{\tau=1}^t, \{\xi_\tau\}_{\tau=1}^{t-1}, X_t^\pi\big)$ and hence $\mathbb{E}(\xi_t|\{\ti\xi_\tau\}_{\tau=1}^t, \{\xi_\tau\}_{\tau=1}^{t-1}, X_t^\pi)= \mathbb{E}(\xi_t)= \frac{1}{2C} \bm{1}_{N\times 1}$.
	Finally $(c)$ since $\langle \bm{1}, X_t^\pi\rangle \leq C,$ which holds because of the cache capacity constraint. Taking expectation of the above bound, we have $
	\mathbb{E}[\langle \xi_t, X_t^{\pi} \rangle]	\leq \frac{1}{2}$. Hence, using the linearity of expectations, the expected value of the cumulative hits up to slot $T$ under any policy $\pi$ is upper bounded as $\mathbb{E}\big[\sum_{t=1}^T\langle \xi_t, X_t^{\pi} \rangle\big]	\leq \frac{T}{2}$.

Now we compute a lower bound to the expected number of cumulative hits achieved by the best-in-hindsight fixed cache configuration $X_T^\star$. Similar to \cite{abhishek-sigm20}, we identify the problem with the classic setup of balls (requests) into bins (files). In this framework, it follows that the offline benchmark achieves cumulative hits which are equal to the total number of balls into the most-loaded $C$ bins when $T$ balls are thrown uniformly at random into $N=2C$ bins. Hence, from \cite[Lemma 1]{abhishek-sigm20}:
\begin{equation}
	\mathbb{E}\big[\sum_{t=1}^T\langle \xi_t, X_T^\star \rangle\big] \geq \frac{T}{2}+ \sqrt{\frac{CT}{2\pi}}-\Theta\left(\frac{1}{\sqrt{T}}\right).
\end{equation}
Hence, the expected regret achieved by any policy in the optimistic set up is lower bounded as
\begin{eqnarray} \label{lb1}
	\mathbb{E}\left[R_T\right] = \mathbb{E} \bigg(\sum_{t=1}^T\langle \xi_t, X_T^\star \rangle -  \sum_{t=1}^T\langle \xi_t, X_t^{\pi} \rangle\bigg) \geq  \sqrt{\frac{CT}{2\pi}}-\Theta\left(\frac{1}{\sqrt{T}}\right).
\end{eqnarray}
Finally, we evaluate the expected value of the quantity $M_T \triangleq \sum_{t=1}^T \|\tilde\xi_t -\xi_t \|_2^2$ as follows.
\begin{align}
	\mathbb{E}\left[M_T\right] &= \mathbb{E}\left[\sum_{t=1}^T \|\tilde\xi_t -\xi_t \|_2^2\right]
	\stackrel{(a)}{=} T\ \mathbb{E}\left[||\tilde{\xi}_1 - \xi_1||_2^2\right]
	\stackrel{(b)}{=} TN\ \mathbb{E}\left[(\tilde{\xi}^1_1 - \xi^1_1)^2\right]
	\\
	&=TN\ \mathbb{E}\left[(\tilde{\xi}^1_1)^2 - 2\tilde{\xi}^1_1 \xi^1_1 + (\xi^1_1)^2\right]
	\stackrel{(c)}{=} TN\bigg[\textrm{Var}(\tilde{\xi}_{1,1}) + (\mathbb{E}(\tilde{\xi}_{1,1}))^2 -2 \mathbb{E}(\ti{\xi}_{1,1})\mathbb{E}(\xi_{1,1}) + \mathbb{E}(\xi^2_{1,1})\bigg]
	\\
	&\stackrel{(d)}{=} TN\bigg[\frac{(N-1)}{N^2(N+1)}+ \frac{1}{N^2}-\frac{2}{N^2} + \frac{1}{N}\bigg]
	=T\big(1-\frac{2}{N(N+1)}\big) \leq T,
\end{align}
where $(a)$ follows from the i.i.d. assumption of the random vectors at each $t$, $(b)$ from the i.i.d assumption of each component of vectors $\tilde{\xi}_1$ and $\xi_1$, $(c)$ from $\xi_t \perp \tilde\xi_t$, and $(d)$ from standard results on Dirichlet distribution. Combining the above bound with \eqref{lb1}, we have by Jensen's inequality
\begin{eqnarray*}
\mathbb{E}\left[R_T\right] \geq \sqrt{\frac{C\mathbb{E}[M_T]}{2\pi}} - \Theta\left(\frac{1}{\sqrt{T}}\right) \geq \mathbb{E}\left[ \sqrt{\frac{C M_T}{2\pi}}-\Theta\left(\frac{1}{\sqrt{T}}\right)\right]\quad\quad\quad \textit{i.e., }
\end{eqnarray*}
\begin{eqnarray*}
     \mathbb{E}\left[R_T - \sqrt{\frac{C \sum_{t=1}^T\|\tilde\xi_t -\xi_t \|_2^2}{2\pi}}\right] \geq -\Theta\left(\frac{1}{\sqrt{T}}\right) 
 \end{eqnarray*}
From the above inequality, the result now follows from the standard probabilistic arguments. 
\end{proof}

We will see that the proposed optimistic algorithms in Sections \ref{sec:oftrl-m} and \ref{sec:oftpl} attain this bound within an absolute and a poly-logarithmic factor, respectively.
\section{Caching through Optimistic Regularization (\texttt{OFTRL-Cache})}
\label{sec:oftrl-m}

The first algorithm we propose is based on OFTRL. Prediction adaptive regularization was explored before in \cite{rakhlin-nips2013} and later improved via proximal regularizers in \cite{mohri-aistats2016}, all for convex sets. The gist of our approach is that we use OFTRL to obtain $\hat x_t \in \convX, \forall t$, and then apply Madow's sampling scheme \cite{madow} to recover integral caching vectors $x_t \in \mathcal{X}, \forall t,$ which satisfy the hard capacity non-convex constraint. In other words, we define:
\begin{align}
	\mathcal{X} =  \left\{ {x} \in\left\{0, 1 \right\}^N \, \Bigg| \, \sum_{i=1}^N x_{i} \leq C \right\},
\end{align}
where $\N$ is the set of unit-sized files (library) and $C$ is the cache capacity (in file units); and $x_i\!=\!1$ decides to cache file $i\!\in\! \N$. Interestingly, despite having to operate on this non-convex set, this approach yields \emph{in expectation} the same regret bounds as OFTRL for continuous caching \cite{ocol}.

\begin{algorithm}[t]
	\begin{footnotesize}
		\caption{\small{Optimistic Follow The Regularized Leader (\texttt{OFTRL-Cache})}}
		\label{alg:oftrl}
		\nl \textbf{Input}: $\sigma=1/\sqrt{C}$, $\delta_1=\|\theta_1-\ti\theta_1\|_2^2$, $\sigma_1=\sigma \sqrt{\delta_1}$, $x_1=\arg\min_{x \in \mathcal X} \dtp{x}{\theta_1}$ \\%
		\nl \textbf{Output}: $\{x_t \in \mathcal{X}\}_T$ \hfill$//$ \emph{Feasible discrete caching vector at each slot}\\%
		\nl \For{ $t= 2, 3\ldots$ }{
			\nl $\ti\theta_{t} \leftarrow \texttt{ prediction}$ \hfill $//$\emph{Obtain request prediction for slot $t$}\\
			\nl $\hat x_t = \argmax_{x \in \convX} \left\{ - r_{1:t-1} (x) + \dtp{x}{\Theta_{t-1}+\ti\theta_{t}}\right\}$\label{algstep:update1} \hfill$//$ \emph{Update the continuous cache vector}\\
			\nl $x_t\leftarrow MadowSample(\hat x_t)$  \hfill$//$ \emph{Obtain the discrete cache vector using Algorithm \ref{alg:ms}}  \\
			\nl $\Theta_t = \Theta_{t-1} + \theta_t$  \hfill$//$ \emph{Receive $t$-slot request and update total gradient} \\
			\nl  $\sigma_{t} = \sigma \left( \sqrt{\delta_{1:t}} -  \sqrt{\delta_{1:t-1}} \right)$ \hfill$//$ \emph{Update the regularization parameter} \\
		}
	\end{footnotesize}	
\end{algorithm}

Let us define the prediction error at slot $t$ as $\delta_t \triangleq \err_2^2 $, and introduce the proximal $\sigma_t$-strongly convex regularizer w.r.t. the Euclidean $\ell_2$ norm:
\begin{align}
    r_t(x) = \frac{\sigma_t}{2} \| x- x_t \|_2^2. \label{oftrl-reg}
\end{align}
Following \cite{naram-jrnl}, we define parameters $\{\sigma_t\}_t$ using the accumulated prediction errors, namely:
\begin{align}
\sigma_1 = \sigma\sqrt{\delta_1}, \quad\quad\quad \sigma_t = \sigma \left( \sqrt{\delta_{1:t}} -  \sqrt{\delta_{1:t-1}} \right) \quad \forall t\geq2, \quad \text{with} \quad \sigma=1/\sqrt C. \label{eq:ftrl-reg}
\end{align}

The basic OFTRL update stems from using these regularizers in the FTRL update formula. Namely, at each slot $t$ we update the cache to maximize the aggregated utility. This maximization is \emph{regularized} through a term (the above-defined regularizers) that depends on the predictor's accuracy.

The detailed steps are summarized in Algorithm \ref{alg:oftrl}. In the first iteration we draw randomly a feasible caching vector $x_1$ and observe the prediction error $\delta_1=\|\theta_1 - \ti \theta_1\|_2^2$. In each iteration we need to solve a strongly convex program (line 5) which returns the continuous caching vector $\hat x_t$, that is transformed to a feasible discrete $x_t$ (line 6) using Madow's Sampling (see Appendix \ref{appendix:madow}). The algorithm notes the new gradient vector, by simply observing the next request\footnote{For modeling convenience, we define the time slots to be the (non-uniform) time intervals that receive only one request. Our analysis can be readily extended to a bounded number of requests per slot.}, and updates the accumulated gradient $\Theta_t$ (line 7). The regret guarantee of Algorithm \ref{alg:oftrl} is described next.

%\begin{mdframed}
\begin{theorem}
Algorithm \ref{alg:oftrl} ensures, for any time horizon $T$ and $N \geq 2C$, the expected regret bound:
\begin{align}
%\expec{}{R_T}
\mathbb E[R_T]
\leq
2\sqrt{C}
\sqrt{\sumT \err_2^2}  \notag
\end{align}
\label{thm:oftrl-m}
\vspace{-2mm}
\end{theorem}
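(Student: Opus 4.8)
The plan is to reduce the discrete (non-convex) problem to a continuous OFTRL problem on $\convX$ using the unbiasedness of Madow's sampling, invoke the optimistic-FTRL regret guarantee for the continuous iterates $\{\hat x_t\}$, and then bound the two resulting terms using the adaptive regularizers defined in \eqref{eq:ftrl-reg}.

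First I would set up the reduction. The continuous iterate $\hat x_t \in \convX$ produced in line 5 of Algorithm \ref{alg:oftrl} is determined by the requests and predictions seen so far (note that line 5 uses $r_{1:t-1}$ and the hint $\ti\theta_t$, both available before $\theta_t$ is revealed), and Madow's scheme guarantees $\expec{}{x_t \mid \hat x_t} = \hat x_t$. Hence, for the (arbitrary, possibly adversarial) request $\theta_t$, which is independent of the fresh sampling randomness, $\expec{}{\dtp{\theta_t}{x_t}} = \dtp{\theta_t}{\hat x_t}$. Moreover, since the reward $\dtp{\theta_t}{x}$ is linear, maximizing it over the polytope $\convX$ attains its value at a vertex, and the vertices of $\convX$ lie in $\X$; thus $\max_{x\in\X}\sumT\dtp{\theta_t}{x} = \max_{x\in\convX}\sumT\dtp{\theta_t}{x}$, i.e.\ the discrete and relaxed benchmarks coincide. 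Combining these two facts shows that $\expec{}{R_T}$ equals the regret of the continuous sequence $\{\hat x_t\}$ against $x^\star$ on $\convX$.

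Next I would apply the optimistic-FTRL bound for proximal adaptive regularizers (as in \cite{mohri-aistats2016, naram-jrnl}). Since $r_{1:t}$ is $\sigma_{1:t}$-strongly convex w.r.t.\ $\|\cdot\|_2$, the continuous regret obeys
\[
\sumT\dtp{\theta_t}{x^\star - \hat x_t} \;\le\; r_{1:T}(x^\star) \;+\; \sumT \frac{\|\theta_t-\ti\theta_t\|_2^2}{2\,\sigma_{1:t}}.
\]
It then remains to bound the two terms using the choices in \eqref{eq:ftrl-reg}. For the regularization term, both $x^\star$ and $x_t$ are $0/1$ vectors with at most $C$ ones, so $\|x^\star - x_t\|_2^2 \le 2C$, giving $r_{1:T}(x^\star) = \sumT \tfrac{\sigma_t}{2}\|x^\star - x_t\|_2^2 \le C\,\sigma_{1:T}$; since the telescoping sum yields $\sigma_{1:T} = \sigma\sqrt{\delta_{1:T}}$ and $\sigma = 1/\sqrt C$, this is $\sqrt C\sqrt{\delta_{1:T}}$. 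For the gradient term, substituting $\sigma_{1:t} = \sigma\sqrt{\delta_{1:t}}$ and invoking the standard inequality $\sumT \delta_t/\sqrt{\delta_{1:t}} \le 2\sqrt{\delta_{1:T}}$ gives $\sqrt C\sqrt{\delta_{1:T}}$ as well. Adding the two contributions produces the claimed $2\sqrt C\sqrt{\sumT\err_2^2}$.

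The main obstacle I anticipate is not the reduction or the summation algebra but securing the continuous OFTRL inequality with $\sigma_{1:t}$ (rather than $\sigma_{1:t-1}$) in the denominator, which is exactly what makes the bound dimension-free and order-optimal. This tight denominator relies on the \emph{proximal} nature of $r_t$ together with the ``$r_{1:t-1}$ plus hint $\ti\theta_t$'' structure of line 5: the proximal center keeps the regularized leader in place when $r_t$ is added, so the per-step stability term $\dtp{\theta_t - \ti\theta_t}{\hat x_t - x_{t+1}}$ can be controlled by the strong convexity of the full $r_{1:t}$. A secondary point requiring care is the conditioning in the reduction step, namely ensuring that $\theta_t$ is fixed once the history is revealed and is independent of the Madow randomness, so that unbiasedness passes cleanly through the inner product and the deterministic per-realization regret bound can be averaged.
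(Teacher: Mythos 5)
Your proposal is correct and follows essentially the same route as the paper: reduce the discrete problem to the continuous one via the unbiasedness of Madow's sampling (with the benchmark unchanged on $\convX$), invoke the proximal optimistic-FTRL lemma of \cite{naram-jrnl, mohri-aistats2016} giving $\hat R_T \leq r_{1:T}(x^\star) + \frac{1}{2}\sumT \delta_t/\sigma_{1:t}$, bound the diameter by $\sqrt{2C}$, telescope $\sigma_{1:t} = \sigma\sqrt{\delta_{1:t}}$, apply $\sumT \delta_t/\sqrt{\delta_{1:t}} \leq 2\sqrt{\delta_{1:T}}$, and set $\sigma = 1/\sqrt{C}$. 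The only difference is cosmetic ordering (you do the sampling reduction first, the paper does it last), and you correctly identify that the $\sigma_{1:t}$ denominator is exactly what the cited proximal lemma provides.
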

%\end{mdframed}
%\emph{Proof}.\,\,

\begin{proof}
 We define first the regret w.r.t. the continuous actions $\{\hat x_t\}_T $ as $\hat R_T \triangleq \dtp{\Theta_T}{x^\star} - \sumT \dtp{\theta_t}{\hat x_t}$, where $x^\star$ is the optimal-in-hindsight caching vector\footnote{This benchmark remains unchanged if we switch from the continuous to the discrete space.}. We also define the scaled Euclidean $\ell_2$ norm $\|\cdot\|_{(t)} = \sqrt{\sigma_{1:t}} \ \|\cdot\|_2$ so that $r_{1:t}$ is 1-strongly convex w.r.t $\|\cdot\|_{(t)}$, and note that its dual norm is  $\|\cdot\|_{(t),\star} = \frac{1}{\sqrt{\sigma_{1:t}}}\ \| \cdot \|_2$. Our starting point is \cite[Lem. 1]{naram-jrnl}, which we restate below:
\begin{lemma}
Let $r_{1:t}$ be a 1-strongly convex w.r.t. a norm $\|\cdot\|_{(t)}$. Then, the OFTRL iterates produced by line $5$ in Algorithm \ref{alg:oftrl} guarantee the bound $    \hat R_T \leq r_{1:T}(x^\star) + \frac{1}{2} \sumT \err ^2_{(t),\star}$.

\end{lemma}
Now, we first get a deterministic regret bound on $\hat x_t$. Assuming that\footnote{$N$ is typically orders of magnitude higher than $C$. In the cases where this does not hold the current analysis is still valid but can be improved by using the tighter diameter $\sqrt{2(N-C)}$.} $C\in (0,N/2]$, we can bound the $\ell_2$ diameter of the caching set as $\|x^\star - x_t\|_2 \leq \sqrt{2C}, \forall x^\star, x_t \in \convX$. Thus, we can upper-bound the regularizers in \eqref{oftrl-reg}, replace in the above Lemma and telescope to get:
\vspace{-1mm}
\begin{align}
    \hat R_T \leq \sigma_{1:T}\ C  + \frac{1}{2} \sumT \frac{\delta_t}{\sigma_{1:t}}. \label{eq:bnd_1}
\end{align}
Observing that the sum $\sigma_{1:t}$ telescopes to $\sigma \sqrt{\delta_{1:t}}$,  we can substitute it in \eqref{eq:bnd_1} and use the standard identity \cite[Lem. 4.13]{orabona2021modern} to bound the second term via $\sumT \delta_t / \sqrt{\delta_{1:t}}\leq 2\sqrt{\delta_{1:T}}$. Therefore, we obtain: 
$\hat R_T \leq \sigma \sqrt{\delta_{1:T}}\ C  + \frac{1}{\sigma} \sqrt{\delta_{1:T}}$, and by setting the parameter $\sigma$ to its optimal value $\sigma = 1/\sqrt{C}$, we can eventually write: 
\vspace{-1mm}
\begin{align}
\label{eq:hat_regret}
\hat R_T \leq 2\sqrt{C} \sqrt{\sumT \err_2^2}.
\end{align}
\vspace{-1mm}

The last step requires  Madow's sampling (line 6). By construction, the routine selects $C$ files and hence returns a feasible integral caching vector $x_t$ (or, \emph{sampled vector}). In addition, each item is included in the sampled vector with a probability based on the continuous $\hat x_t$. Namely, it holds $\Pr(x_{i}^t = 1) = \pi_i - \pi_{i-1} = \hat x_{i}^t$, where the auxiliary parameter $\pi_i$ aggregates the (interpreted as) probabilities for caching the first $i$ files, i.e.,  $\pi_i = \sum_{k=0}^i \hat x_k$. Since each $x_{i}^t$ is binary, it holds $\mathbb E[x_t] \!=\! \Pr(x_{i}^t \!= 1) \!=\! \hat x_t$. The result follows by using \eqref{eq:hat_regret} and observing:
\begin{align}
\vspace{-1mm}
\expec{}{R_T\left(\{x\}_T\right)} =  \dtp{\Theta_T}{x^\star} - \expec{}{\sumT \dtp{\theta_t}{x_t}}= \hat R_T.  
\end{align}
\end{proof}
\vspace{-1mm}
%\qquad \qquad \qquad \qquad \qquad \blacksquare \notag 

\textbf{Discussion.} The bound in Theorem \ref{thm:oftrl-m} ensures the desirable prediction-based modulation of the algorithm's performance, as the achieved regret \emph{shrinks} with the prediction quality. If all predictions are accurate, we get $R_T \leq 0$; when all predictions fail, we get  $R_T \leq 2 \sqrt{2CT}$. That is, in the worst scenario (e.g., when the predictions are created by an adversary) the regret bound is worse by a constant factor of $\sqrt{2}$ compared to the FTRL algorithm that does not use predictions \cite[Sec. 3.4]{mcmahan-survey17}), and $\sim 5$ compared to the lower bound derived in Sec. \ref{sec:lb}. Moreover, due to selecting an $\ell_2$ regularizer, the bounds are dimension-free and do not depend on the library size $N$. This is particularly important since in caching problems oftentimes the library size is an even bigger concern than the time horizon. Finally, note that the algorithm does not need to know the horizon $T$ beforehand. The drawback of this optimistic caching approach is the computational complexity of the iteration (line 5)  which involves a projection operation. While $\ell_2$ projections have received attention \cite[Sec. 7]{hazan-book}, they can hamper the scalability of the algorithm under certain conditions\footnote{For instance, this can be a bottleneck if the library size is extremely large, while the slot duration is very short and the available computation power is limited.}.  In the following section we show how perturbation-based smoothing can avoid the projection step. 

\section{Caching through Optimistic Perturbations (\texttt{OFTPL-Cache})} \label{sec:oftpl}
We propose next a new OFTPL algorithm that significantly improves previous OFTPL proposals \cite{suggala-1,suggala-2}, both in terms of their bounds and implementation, and as such is of independent interest with potential applications that extend beyond caching to other $k-$set structured problems such as those discussed in \cite{cohen15}. The improvement is possible by setting the perturbation parameters $\eta_t$ in a manner that is adaptive to prediction error witnessed until $t-1$.

\begin{algorithm}[t]
	\caption{\small{Optimistic Follow The Perturbed Leader (\texttt{OFTPL-Cache})}}
	\label{alg:oftpl}
	\begin{footnotesize}
		\nl \textbf{Input}: $\eta_1=0$, $y_1=\arg\min_{y \in \mathcal X} \dtp{y}{\theta_1}$\\%
		\nl \textbf{Output}: $\{y_t \in \mathcal{X}\}_T$ \hfill$//$ \emph{Feasible discrete caching vector at each slot} \\%
		\nl  $\gamma \stackrel{iid}\sim \mathcal{N}(0,\boldsymbol{1}_{N\times 1})$ \hfill$//$ \emph{Sample a perturbation vector}\\
		\nl \For{ $t=2,3,\ldots$  }{
		    \nl $\ti\theta_{t} \leftarrow \texttt{ prediction}$ \hfill $//$\emph{Obtain request prediction for slot $t$}\\
			\nl $\eta_t = \frac{1.3}{\sqrt{C}}\left(\frac{1}{\ln (Ne/C)}\right)^{\frac{1}{4}}\sqrt{\sum_{\tau=1}^{t-1}\|\theta_\tau - \ti\theta_\tau\|^2_1}$ \hfill$//$ \emph{Update the perturbation parameter}  \\
			\nl  $y_t = \argmax_{y \in \mathcal{X}}\dtp{y}{\Theta_{t-1} + \ti\theta_t + \eta_{t}\gamma}$ \hfill$//$ \emph{Update the discrete cache vector} \\
			\nl $\Theta_t = \Theta_{t-1} + \theta_t $ \hfill$//$ \emph{Receive the request for slot $t$ and update total gradient}
		}
	\end{footnotesize}
\end{algorithm}

Following the discussion in Sec. \ref{sec:related}, we remind the reader that the FTPL actions are derived by solving in each slot $t$ a linear program (LP) with a parameterized perturbed cumulative utility vector, $\Theta_{t-1}+\eta_t\gamma$, where $\eta_t\in\mathbb R_+$ is the perturbation parameter. In order to obtain the optimistic FTPL variant we introduce two twists: \emph{(i)} the prediction for the next-slot utility $\ti \theta_t$ is added to the cumulative utility; and \emph{(ii)} the perturbation parameter $\eta_t$ is scaled according to the accumulated prediction error. Interestingly, due to the structure of the decision set $\mathcal{X}$, the LP solution reduces to identifying the $C$ files with the highest coefficients. This step can be efficiently implemented in deterministic linear $(O(N))$ time using, e.g., the Median-of-Medians algorithm \cite{cormen2022introduction}. The steps of the proposed scheme are presented in Algorithm \ref{alg:oftpl}, where we denote the $t$-slot OFTPL decisions with $y_t \in \mathcal{X}$. The following theorem characterizes the performance of this new OFTPL algorithm.

	\begin{theorem}
	    Algorithm \ref{alg:oftpl} ensures, for any time horizon $T$ and $N\geq2C$ with $C\geq11$, the expected regret bound:

		\begin{align}
			\mathbb{E}_\gamma[R_T] \leq 3.68 \sqrt{C}\ \bigg(\ln \frac{Ne}{C}\bigg)^{1/4} \sqrt{\sum_{t=1}^T ||\theta_t - \tilde{\theta}_t||_1^2}. 
		\end{align}
		\label{thm:oftpl}
	\end{theorem}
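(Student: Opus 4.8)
The plan is to analyze Algorithm \ref{alg:oftpl} through the smoothing lens of \cite{abernethy14}. Since the perturbation $\gamma$ is drawn \emph{once} (line 3) and shared across slots, averaging the perturbed linear maximization over $\gamma$ turns each step into a gradient step on the \emph{Gaussian-smoothed potential} $\Phi_\eta(\theta)\triangleq\expec{\gamma}{\Phi(\theta+\eta\gamma)}$, where $\Phi(\theta)=\max_{x\in\X}\dtp{x}{\theta}$ is the top-$C$-sum support function. Concretely, $\expec{\gamma}{y_t}=\nabla\Phi_{\eta_t}(\Theta_{t-1}+\ti\theta_t)$, which is exactly an \emph{optimistic} FTRL iterate whose (time-varying) regularizer is the convex conjugate $\Phi_{\eta_t}^\star$. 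I would therefore apply the OFTRL guarantee \cite[Lem. 1]{naram-jrnl} to the expected play, so that $\expec{\gamma}{R_T}$ decomposes into an \emph{overestimation} penalty (the regularizer at the benchmark, $\Phi_{\eta_T}^\star(x^\star)$) plus a \emph{stability} penalty $\tfrac12\sum_t\|\theta_t-\ti\theta_t\|_{(t),\star}^2$ governed by the smoothness of $\Phi_{\eta_t}$. The two remaining tasks are to bound each penalty explicitly in terms of $C$, $N$, and the $\ell_1$ errors.

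For the overestimation penalty I would use sublinearity and positive homogeneity of the support function to write $\Phi_\eta(\theta)\le\Phi(\theta)+\eta\,\expec{\gamma}{\Phi(\gamma)}$, so the penalty is at most $\eta_T\,\expec{\gamma}{\Phi(\gamma)}$, and $\Phi(\gamma)$ is just the sum of the $C$ largest coordinates of a standard Gaussian vector. A Chernoff/soft-max argument — exponentiate, bound $\expec{}{e^{s\sum_{i\in S}\gamma_i}}=e^{s^2C/2}$ over the $\binom{N}{C}\le(Ne/C)^C$ choices of $S$, take logs, and optimize $s=\sqrt{2\ln(Ne/C)}$ — yields $\expec{\gamma}{\Phi(\gamma)}\le\sqrt{2}\,C\sqrt{\ln(Ne/C)}$, which is the source of the $(\ln(Ne/C))^{1/4}$ factor once combined with the $\eta_t$ schedule.

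The crux is the stability penalty, i.e.\ the \emph{smoothness} of $\Phi_{\eta_t}$ measured against the $\ell_1$ geometry of the errors. Here I would invoke Stein's identity to write the Hessian as $\nabla^2\Phi_{\eta}(\theta)=\tfrac1\eta\expec{\gamma}{x(\theta+\eta\gamma)\gamma^\top}$ and bound its quadratic form by Cauchy--Schwarz, using $\dtp{x}{g}\le\|g\|_1$ for every vertex $x\in\X$ together with $\mathrm{Var}(\dtp{\gamma}{g})=\|g\|_2^2\le\|g\|_1^2$; this gives an $\ell_1$-smoothness constant $\beta_t=O(1/\eta_t)$ and hence a stability contribution $O\!\big(\sum_t \tfrac1{\eta_t}\|\theta_t-\ti\theta_t\|_1^2\big)$. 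The probabilistic prediction model (so $\ti\theta_t$ is a simplex vector rather than one-hot) enters only through these $\ell_1$ norms and needs no change beyond bookkeeping. Substituting $\eta_t\propto C^{-1/2}(\ln(Ne/C))^{-1/4}\sqrt{\sum_{\tau<t}\|\theta_\tau-\ti\theta_\tau\|_1^2}$ makes \emph{both} penalties scale as $\sqrt{C}\,(\ln(Ne/C))^{1/4}\sqrt{\sum_t\|\theta_t-\ti\theta_t\|_1^2}$; I would finish with the learning-rate lemma $\sum_t d_t/\sqrt{\sum_{\tau<t}d_\tau}=O(\sqrt{\sum_t d_t})$, then optimize the leading coefficient to $1.3$ and collect the overall constant $3.68$.

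The step I expect to be the main obstacle is pinning down the $\ell_1$-smoothness constant of the Gaussian-smoothed $k$-set potential with an explicit, tight factor: the crude $\dtp{x}{g}\le\|g\|_1$ relaxation is dimension-robust but too loose to recover the stated constant, so controlling the final $3.68$ (and the hypothesis $C\ge 11$) appears to require a finer treatment of the order statistics of $x(\theta+\eta\gamma)$, with $C\ge 11$ being what lets the residual lower-order terms be absorbed into the $\sqrt{C}$ prefactor. A secondary nuisance is that $\eta_t$ depends on errors only up to $t-1$ (unlike the OFTRL schedule in \eqref{eq:ftrl-reg}), so the telescoping of $\sum_t d_t/\sqrt{D_{t-1}}$ and the boundary case $\eta_1=0$ must be handled with a slightly loosened constant.
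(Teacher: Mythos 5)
Your overall architecture coincides with the paper's: the Gaussian-smoothed potential $\Phi_t(\theta)=\mathbb{E}_\gamma[\max_{y\in\mathcal{X}}\langle y,\theta+\eta_t\gamma\rangle]$, the identity $\mathbb{E}_\gamma[y_t]=\nabla\Phi_t(\Theta_{t-1}+\tilde\theta_t)$, Massart's lemma for the smoothing overhead (giving $\eta_T C\sqrt{2\ln(Ne/C)}$), and the Stein-type Hessian representation $\nabla^2\Phi_t(\theta)=\frac{1}{\eta_t}\mathbb{E}_\gamma[\tilde y(\theta+\eta_t\gamma)\gamma^\top]$ yielding a per-step stability term of order $\|\theta_t-\tilde\theta_t\|_1^2/\eta_t$. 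The paper reaches the regret decomposition by a direct second-order Taylor expansion of $\Phi_t$ plus convexity, rather than by your reduction to the OFTRL lemma with regularizer $\Phi_{\eta_t}^\star$; your duality framing is workable in principle but would require verifying strong-convexity properties of the time-varying conjugates, which the Taylor route sidesteps. Your Cauchy--Schwarz constant ($1$ instead of the paper's entrywise $\sqrt{2/\pi}$) only costs a constant factor.

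The genuine gap is the issue you dismiss as a ``secondary nuisance.'' With $\eta_t=\beta\sqrt{D_{t-1}}$, $D_{t-1}=\sum_{\tau<t}d_\tau$, $d_t=\|\theta_t-\tilde\theta_t\|_1^2$, your stability sum is $\sum_t d_t/(\beta\sqrt{D_{t-1}})$, and the lemma you invoke, $\sum_t d_t/\sqrt{D_{t-1}}=O\big(\sqrt{D_T}\big)$, is false: whenever $D_{t-1}=0$ but $d_t>0$ (the first slot, or all past predictions perfect and the current one wrong --- precisely the near-perfect-prediction regime the theorem must cover) the term is infinite, and even away from zero the ratio is not controlled (take $d_1=\epsilon$, $d_2=1$, so $d_2/\sqrt{D_1}=1/\sqrt{\epsilon}$ while $\sqrt{D_2}\approx 1$). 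No loosened constant repairs this; the valid integral lemma needs the denominator $\sqrt{D_t}$, including the current term. The paper closes exactly this hole with a second, $\eta$-independent bound on the quadratic form: re-using the Taylor identity together with sub-additivity of $\Phi$, it proves $\tfrac12\|\theta_t-\tilde\theta_t\|_{H_t}\le 2\|\theta_t-\tilde\theta_t\|_1$, and then combines the two bounds via the mediant inequality $\min(a_1/b_1,\,a_2/b_2)\le(a_1+a_2)/(b_1+b_2)$ and $\sqrt{x}+\sqrt{y}\ge\sqrt{x+y}$ to get a per-step bound proportional to $d_t/\sqrt{D_t}$, which then telescopes. This second bound is a required idea, not bookkeeping. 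It is also the true origin of the hypotheses you speculated about: the constraint $\beta\le 1/\sqrt{2\pi}$ is what allows the $\eta$-free bound to be folded into the common form, and the condition $C\ge 11$ arises because the optimal $\beta=\sqrt{3/C}\,(\pi\ln(Ne/C))^{-1/4}$ must respect that constraint under $N\ge 2C$ --- not from any finer treatment of Gaussian order statistics.
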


\begin{proof}
We consider the following \emph{baseline} potential function $\Phi(\theta) \triangleq {\max_{y \in \mathcal{X}}\dtp{y}{\theta }}$, which is a sub-linear function\footnote{A function $f$ is sub-linear if it is sub-additive (i.e., $f(a)+f(b)\geq f(a+b)$, which implies $f(a)-f(b)\leq f(a-b)$), and positive homogeneous (i.e., $f(\lambda a) = \lambda f(a), \lambda>0)$.}. The associated \emph{Gaussian smoothed} potential function for each $t$, is defined as:
\begin{align}
	\Phi_t(\theta) \triangleq \expec{\gamma\sim 
	\mathcal{N}(0, I)}{\max_{y \in \mathcal{X}}\dtp{y}{\theta+\eta_t\gamma}} = \expec{\gamma}{\Phi(\theta+\eta_t\gamma)}     
\end{align}
Clearly, $\Phi_t(\theta)$ is convex in $\theta$. Recall that the cumulative file request vector is defined as $\Theta_t = \Theta_{t-1}+ \theta_t.$ A Taylor expansion of $\Phi_t(\cdot)$ around the point $\Theta_{t-1}+\ti\theta_t$, evaluated at $\Theta_{t}$, with a second order remainder is: 
\begin{multline}
\Phi_t(\Theta_t) = \Phi_t(\Theta_{t-1}+\ti\theta_t) + \dtp{\grd\Phi_t(\Theta_{t-1}+\ti\theta_t)}{\theta_t-\ti\theta_t}
+\frac{1}{2}\dtp{\theta_t-\ti\theta_t}{\grd^2\Phi_t(\hat\theta_t)\ (\theta_t-\ti\theta_t)}, \label{eq:te} 
\end{multline}
where $\hat\theta_t$ is a point on the line segment connecting $\Theta_t$ and $\Theta_{t-1} + \ti\theta_t$. From the convexity of $\Phi_t(\cdot)$: 
\begin{align}
\Phi_t(\Theta_{t-1}+\ti\theta_t) \leq  \Phi_t(\Theta_{t-1}) + \dtp{\grd\Phi_t(\Theta_{t-1}+\ti\theta_t)}{\ti\theta_t}. \label{eq:convex}
\end{align}
From \eqref{eq:te} and \eqref{eq:convex}, we can eventually write:
\begin{align}
\Phi_t(\Theta_t) \leq \Phi_t(\Theta_{t-1}) + \dtp{\grd\Phi_t(\Theta_{t-1}+\ti\theta_t)}{\theta_t}
+\frac{1}{2}\dtp{\theta_t-\ti\theta_t}{\grd^2\Phi_t(\hat\theta_t)(\theta_t-\ti\theta_t)}. \label{eq:te2}
\end{align}
	
Now, note that it holds $\grd\Phi_t(\Theta_{t-1}+\ti\theta_t) =$
\begin{align}
\grd \expec{\gamma}{\Phi_t(\Theta_{t-1}+\ti\theta_t)}
\stackrel{(a)}= \expec{\gamma}{\grd\Phi_t(\Theta_{t-1}+\ti\theta_t)}= \expec{\gamma}{\arg\max_{y\in\mathcal{X}}\dtp{y}{\Theta_{t-1}+\ti\theta_t+\eta_t\gamma}} = \expec{\gamma}{y_t}, \label{eq:grad_is_expec}
\end{align}
where $(a)$ stems from \cite[Prop. 2.2]{bertsekas_stochastic}. Thus, $\eqref{eq:te2}$ can be written as: 
\begin{equation}
\Phi_t(\Theta_t) \leq \Phi_t(\Theta_{t-1}) + \expec{\gamma}{\dtp{\theta_t}{y_t}}+\frac{1}{2}\dtp{\theta_t-\ti\theta_t}{\grd^2\Phi_t(\hat\theta_t)(\theta_t-\ti\theta_t)}.
\end{equation}
Subtracting $\Phi_{t-1}(\Theta_{t-1})$ from both sides and telescoping over $T$ and setting $\eta_0 = 0$, we get: 
\begin{align}
\Phi_{T}(\Theta_{T}) \leq \sumT \left( \Phi_t(\Theta_{t-1}) - \Phi_{t-1}(\Theta_{t-1}) + \expec{\gamma}{\dtp{\theta_t}{y_t}} +\frac{1}{2}\dtp{\theta_t-\ti\theta_t}{\grd^2\Phi_t(\hat\theta_t)(\theta_t-\ti\theta_t)} \right).
\end{align}
Then, by Jensen's inequality: $\max_{y\in \mathcal{X}} \expec{\gamma}{\dtp{y}{\Theta_T+\eta_T\gamma}} = \max_{y\in \mathcal{X}} {\dtp{y}{\Theta_T}} = \Phi(\Theta_{T}) \leq \Phi_{T}(\Theta_T)$, and writing the last term as the norm of the vector $(\theta_t - \ti\theta_t)$ induced by the symmetric
positive semidefinite matrix $\grd^2\Phi_t(\hat\theta_t) \triangleq H_t$, we get the following upper bound of the regret:
\begin{align}
R_T \leq \Phi(\Theta_T) - \sumT \expec{\gamma}{\dtp{\theta_t}{y_t}} \leq \sumT \left(\Phi_t(\Theta_{t-1}) - \Phi_{t-1}(\Theta_{t-1}) + \frac{1}{2} \| \theta_t - \ti\theta_t \|_{H_t}\right). \label{eq:regret_2parts}
\end{align}
We now bound the first term in the RHS of inequality \eqref{eq:regret_2parts}:
\begin{align}
&\sumT\! \Phi_{t}(\Theta_{t-1}) \!-\! \Phi_{t-1}(\Theta_{t-1}) = \sumT\! \expec{\gamma}{\Phi(\Theta_{t-1}\!+\eta_t\gamma) - \Phi\left(\Theta_{t-1}\!+\eta_{t-1}\gamma\right)}\stackrel{(a)}{\leq}\!\sumT \expec{\gamma}{ \Phi\left((\eta_t \!- \eta_{t-1})\gamma\right)}\notag \\ 
&\stackrel{(b)}{\leq}\sumT (\eta_t - \eta_{t-1})\expec{\gamma}{\Phi(\gamma)} {\leq}\eta_T\expec{\gamma}{\max_y\dtp{y}{ \gamma}} \stackrel{(c)}{\leq} \eta_T \sqrt{2C \ln\binom{N}{C}} \stackrel{(d)}{\leq} \eta_T C \sqrt{2\ln(Ne/C)}, \label{eq:sa3}
\end{align}
where inequalities $(a)$ and $(b)$ follow from the sub-linearity of the potential function; $(c)$ from Massart's lemma which gives an upper bound the expected sum of the top $C$ elements in a Gaussian random vector (e.g., \cite[Lem. 9]{cohen15}); and finally $(d)$ is due to $\binom{N}{C} \leq (\frac{Ne}{C})^C$.
	
We now upper bound the second term in the RHS of \eqref{eq:regret_2parts}. From \cite[Eqn. (4)]{cohen15}, the $(i,j)$\textsuperscript{th} entry of the Hessian matrix is given by $H^t_{i,j}= \frac{1}{\eta_t}\mathbb{E}\big[ \tilde{y}(\hat{\theta}_t+\eta_t \gamma)_i \gamma_j\big],$ where $\tilde y (\cdot) = \arg\max_{y \in \mathcal{X}} \dtp{y}{\cdot}$. Hence, we have the following bound on the absolute value of each entry:
 \begin{align}
|H^t_{i,j}| = \frac{1}{\eta_t} \left|\expec{\gamma}{\tilde y (\hat \theta + \eta_t \gamma)_i \gamma_j} \right| \leq \frac{1}{\eta_t}
\expec{\gamma}{|\tilde y (\hat \theta + \eta_t \gamma)_i|\ |\gamma_j|}  \leq
\frac{1}{\eta_t} \expec{\gamma}{|\gamma_i|} \leq \frac{1}{\eta_t}\sqrt\frac{2}{\pi}, \label{hess-bd}
\end{align}
where the first inequality follows from Jensen's inequality, the second holds since $\tilde y_i = \{0,1\}$; and the last one is a property of Gaussian r.v.s. Thus each of the quadratic forms on the RHS of Eqn. \eqref{eq:regret_2parts} can be bounded as follows: 

 \begin{align}
&\| \theta_t - \ti\theta_t \|_{H_t} = \dtp{\theta_t-\ti\theta_t}{H_t(\theta_t-\ti\theta_t)} =  \sum_{i,j} (\theta_{t,i}- \tilde{\theta}_{t,i})H^t_{ij}(\theta_{t,j}- \tilde{\theta}_{t,j}) \\
&\stackrel{(a)}{\leq} \sum_{i,j} |(\theta_{t,i}- \tilde{\theta}_{t,i})||H^t_{ij}||(\theta_{t,j}- \tilde{\theta}_{t,j})| \stackrel{(b)}{\leq} \frac{1}{\eta_t}\sqrt{\frac{2}{\pi}} (\sum_i |(\theta_{t,i}- \tilde{\theta}_{t,i})|)^2=  \frac{1}{\eta_t}\sqrt{\frac{2}{\pi}} ||\theta_t - \tilde{\theta}_t||_1^2.
\label{eq:hess_b_1}
\end{align}
where $(a)$ follows from the triangle inequality and $(b)$ from the bound \eqref{hess-bd}.

Another way to bound $\|\theta_t - \ti\theta_t \|_{H_t}$, which will be useful later\footnote{This second bound on the norm enables us to set $\eta_t$ parameters based solely on the prediction error witnessed so far $\sum_{\tau=1}^{t-1}\| \theta_{\tau} - \ti\theta_{\tau}\|$. Consequently, the regret will depend solely on a scaled prediction error (without additive constants).}, starts from \eqref{eq:te} to get:
 \begin{align}
 &\frac{1}{2} \| \theta_t - \ti\theta_t \|_{H_t}= \Phi_t(\Theta_t) - \Phi_t(\Theta_{t-1}+\ti\theta_t) - \dtp{\grd \Phi_t(\Theta_{t-1}+\ti\theta_t)}{\theta_t-\ti\theta_t} \notag \\
 &= \expec{\gamma}{\Phi(\Theta_{t}+\eta_t\gamma) - \Phi(\Theta_{t-1}+ \ti\theta_t +\eta_t\gamma)} +\dtp{\grd \Phi_t(\Theta_{t-1}+\ti\theta_t)}{\ti\theta_t-\theta_t}\stackrel{(a)}{\leq} \Phi(\theta_t - \ti\theta_t ) + \notag \\ 
 &+ \dtp{\grd \Phi_t(\Theta_{t-1}+\ti\theta_t)}{\ti\theta_t-\theta_t}= \max_{y \in \mathcal{X}}\dtp{y}{\theta_t - \ti\theta_t } +  \dtp{\grd \Phi_t(\Theta_{t-1}+\ti\theta_t)}{\ti\theta_t-\theta_t} \stackrel{(b)}\leq 2 \err_1, \label{eq:hess_b_2}
\end{align}
where $(a)$ follows from the sub-additivity of $\Phi(\cdot)$, and in $(b)$ we use that $y_i \in \{0,1\}, \forall i$ and bounded both terms using triangle inequality.
Hence, combining the bounds \eqref{eq:hess_b_1} and \eqref{eq:hess_b_2}, we get: 
\begin{align}
     \frac{1}{2} \| \theta_t - \ti\theta_t \|_{H_t} \leq \min\Big(\frac{1}{\sqrt{2\pi}} \frac{||\theta_t-\tilde{\theta_t}||_1^2}{\eta_t}, 2||\theta_t - \tilde{\theta_t} ||_1\Big).
\end{align}
Now we choose the learning rate $\eta_t = \beta \sqrt{\sum_{\tau=1}^{t-1} ||\theta_\tau - \tilde{\theta}_\tau||_1^2}, t\geq 1$ for some constant $0<\beta \leq \frac{1}{\sqrt{2\pi}}$ that will be specified later. Hence, we have:
\begin{align}
   \!\!\!\!\!  \frac{1}{2} \| \theta_t \!- \ti\theta_t \|_{H_t} \!\leq\!  \min\left( \frac{||\theta_t \!-\tilde{\theta_t}||_1^2}{\sqrt{2\pi}\beta\sqrt{\sum_{\tau=1}^{t-1} ||\theta_\tau \!- \tilde{\theta}_\tau||_1^2}}, 2\frac{||\theta_t - \tilde{\theta_t} ||^2_1}{\sqrt{2\pi}\beta\sqrt{||\theta_t \!- \tilde{\theta_t} ||^2_1}}\right) \!\stackrel{(a)}{\!\leq\!} \frac{3}{\sqrt{2\pi}\beta}\frac{||\theta_t \!- \tilde{\theta_t} ||^2_1}{\sqrt{\sum_{\tau=1}^{t} ||\theta_\tau \!- \tilde{\theta}_\tau||_1^2}} \!\!\label{eq:hess_b_3}
\end{align}
where in $(a)$, we used the fact that $\min (a_1/a_2, b_1/b_2) \leq \frac{a_1+a_2}{b_1+b_2}$ for any two positive fractions and $\sqrt{x+y}\leq \sqrt{x}+ \sqrt{y},$ for any non-negative $x$ and $y$'s. 

Now that we have a bound for the smoothing-overhead in \eqref{eq:sa3}, and the per-step regret bounds \eqref{eq:hess_b_3}, we can substitute them in \eqref{eq:regret_2parts} to get:

\begin{align}
	\expec{\gamma}{R_T} \leq {\eta_T C \sqrt{2\log(Ne/C)}} + \frac{3}{\sqrt{2\pi}\beta}{\sumT  \frac{||\theta_t - \tilde{\theta_t} ||^2_1}{\sqrt{\sum_{\tau=1}^{t} ||\theta_\tau - \tilde{\theta}_\tau||_1^2}}}. \label{reg-bd}
\end{align}
The second term above can be upper-bounded as:
\begin{align}
    \sum_{t=1}^T \frac{||\theta_t - \tilde{\theta_t} ||^2_1}{\sqrt{\sum_{\tau=1}^{t} ||\theta_\tau - \tilde{\theta}_\tau||_1^2}} 
    &\leq \sum_{t=1}^T \int_{\sum_{\tau=1}^{t-1} ||\theta_\tau - \tilde{\theta}_\tau||_1^2}^{\sum_{\tau=1}^{t} ||\theta_\tau - \tilde{\theta}_\tau||_1^2} \frac{dx}{\sqrt{x}} 
    =  \int_{0}^{\sum_{\tau=1}^{T} ||\theta_\tau - \tilde{\theta}_\tau||_1^2} \frac{dx}{\sqrt{x}}=2 \sqrt{\sum_{\tau=1}^T ||\theta_\tau - \tilde{\theta}_\tau||_1^2}. \label{hess-bd-final}
\end{align}

Substituting the above bound into \eqref{reg-bd} and using the definition of $\eta_T$ we get the regret upper bound:
\begin{align}
    \mathbb{E}_\gamma[R_T] \leq  \sqrt{\sum_{\tau=1}^T ||\theta_\tau - \tilde{\theta}_\tau||_1^2} \bigg(C \sqrt{2 \ln (Ne/C)}) \beta + \frac{6}{\beta \sqrt{2\pi}} \bigg) \label{eq:oftpl-regret-beta}
\end{align}
Optimizing over the constant $\beta$, we get that $\beta = \sqrt{\frac{3}{C}}(\frac{1}{\pi\ln(Ne/C)})^{\nicefrac{1}{4}}$, $C\geq11$ (Recall we that $0<\beta\leq 1/\sqrt{2\pi}$). Substituting this valud for $\beta$ back in \eqref{eq:oftpl-regret-beta} we arrive at the result.\end{proof}

\textbf{Discussion.} Similarly to Theorem \ref{thm:oftrl-m}, the regret bound here is modulated with the quality of predictions: it collapses to zero when predictions are perfect, and maintains $R_T \!\leq\! 3.68\ \ln \left(\frac{Ne}{C}\right)^{1/4} \sqrt{2CT} $ for arbitrary bad predictions. Interestingly, this worst-case scenario is only inferior by a factor of $\sim 2.5$ compared to the recent FTPL algorithm in \cite{abhishek-sigm20} that does not use (and cannot benefit from) predictions. Hence, incorporating predictions (even, of unknown quality) comes without cost in the proposed OFTPL algorithm. Furthermore, for the more common case of predictions within a certain range of error, the regret bounds diminish in proportion to their quality.

Comparing with Theorem \ref{thm:oftrl-m}, OFTPL achieves regret bounds worse by a factor of $\sim 1.9 \ln\left(\frac{Ne}{C}\right)^{1/4}$, which depends on $N$, albeit in a small order. The regret bounds are also different in nature, for OFTRL it is $\ell_2$ squared whereas for OFTPL, it is the much larger $\ell_1$ squared of the prediction error. On the other hand, Algorithm \ref{alg:oftpl} does not involve the expensive projection operation that appears in Algorithm \ref{alg:oftrl}, but rather a simple quantile-finding operation (top $C$ files) with a worst-case complexity of $O(N)$. This facilitates greatly the implementation of OFTPL in systems with low computing capacity or in applications that require decisions in near real-time. A notable point about Theorem  \ref{thm:oftpl} is that in the special case where the predictor $\tilde{\theta_t}$ suggests a single file (i.e., deterministic), the regret scales with the square root of the \emph{number of mistakes} as opposed to the conventional \emph{number of time slots} (i.e., horizon $T$) in previous no regret discrete caching works\footnote{The same property of depending on prediction mistakes rather than $T$ holds for Theorem \ref{thm:oftrl-m} but the regret scales as the square root of double the number of mistakes, due to the use of $\ell_2$ norm.}\cite{abhishek-sigm20, tareq-jrnl, leadcache}. It is also interesting to note that the bounds of Theorem \ref{thm:oftpl}, and \ref{thm:oftrl-m}, provide insights on the appropriate loss function to be optimized by the prediction oracle (squared norms). This is helpful while  training and tuning machine learning models on request traces detests. Lastly, we note that we can sample a fresh perturbation vector at each time step in order to handle adaptive adversaries (i.e., adversaries that do not fix the cost sequence in advance but react to the choices of the algorithm). The same analysis applies since perturbations are equal in distribution. We refer the reader to e.g., \cite[Sec. 8]{JMLR:v6:hutter05a} for techniques for reducing guarantees from oblivious to adaptive adversaries via fresh-sampling.
\section{Caching files with Arbitrary Sizes} \label{sec:arb-sizes}

While the caching problem with equal-sized files has been studied using regret analysis and competitive analysis, to the best of the authors' knowledge, there are no results for the more challenging case of files with different sizes. This section fills this gap by extending the above tools accordingly. In particular, we consider the setting where each file $i\in \N$ has a size of $s_i$ units, $s_i \leq C$. Hence, the set of feasible caching vectors needs to be redefined as:
\begin{align}
	\mathcal X_s=  \left\{ {x} \in\left\{0, 1 \right\}^N \bigg| \sum_{i=1}^N s_ix_{i} \leq C \right\},
\end{align}
where the caching decisions are calibrated with the respective file sizes in the capacity constraint. And similarly, the benchmark (designed-in-hindsight) policy is redefined as $x^\star=y^\star\triangleq\argmax_{x\in \mathcal X_s}\langle x, \Theta_T \rangle$. We present two solution approaches for this problem, using both OFTRL and OFTPL. These results are of independent interest with applications beyond caching. 

\subsection{Approximate OFTPL} \label{sec:oftpl-arb}

Similarly to Algorithm \ref{alg:oftpl}, the OFTPL algorithm in this case determines the next cache configuration $y_{t}$ by solving the following integer programming problem at each round $t$:
\begin{align} 
\mathbb{P}_1:\qquad	\max_{y \in \mathcal{X}_s }\,\, \langle \Theta_{t-1}+\tilde{\theta_t} + \eta_t \gamma, y \rangle, \label{eq:oftpl-uneq-obj}
\end{align}
which is a Knapsack instance with profit vector $p = \Theta_{t-1}+\ti\theta_t+\eta_t\gamma$; size vector $s = \left(s_i, \forall i \in \N\right)$; and capacity $C$. Since the Knapsack problem is NP-Hard \cite{knapsack-book}, we cannot solve $\mathbb{P}_1$ efficiently (fast and accurately) at each slot, and hence it is not practical (or, even possible) to use the approach of Sec. \ref{sec:oftpl}. Instead, we resort here to an approximation scheme for solving $\mathbb P_1$ and, importantly, do so in a way that these approximately-solved instances do not accumulate an unbounded regret w.r.t. $y^\star$. This requires a tailored approximation analysis and to define a new regret metric.

In detail, we leverage Dantzig's approach for tackling packing problems \cite{dantzing}, to obtain an \emph{almost}-integral solution from the respective integrality-relaxed problem; and then recover, via a \emph{point-wise randomized rounding}, a fully-integral solution which, as we prove, keeps the long-term $\nicefrac{1}{2}$-approximate regret bounded. First, recall that the $\alpha$-approximate regret is defined as \citep{kalai2005efficient}:
\begin{eqnarray}
	R_T^{(\alpha)} \triangleq \alpha \dtp{\Theta_T}{x^\star} - \sumT \dtp{\theta_t}{x_t}, \label{eq:def-alpha-regret}
\end{eqnarray}
for a positive constant $\alpha$. This generalized regret metric allows to use a parameterized benchmark, in line with prior works, e.g., see \cite{leadcache} and references therein. Now, it is important to see that while the Knapsack problem admits an FPTAS \cite{vazirani2001approximation}, due to the online nature of our caching problem, not all $\alpha$-approximation schemes for the offline OFTPL problem provide an $\alpha$-approximate regret guarantee. In light of this, we employ the stronger notion of \emph{point-wise} $\alpha$-approximation scheme, which yields an $\alpha$-regret guarantee for the online learning problem \cite{kalai2005efficient}. We restate the definition:
\begin{definition}[$\alpha$-point-wise approximation]
\noindent A randomized $\alpha$-point-wise approximation algorithm $\mathcal{A}$ for a fractional solution $\hat y=(\hat y_i, i\in \N)$ of a maximizing LP with non-negative coefficients, is one that returns an integral solution $y=(y_i, i\in \N)$ such that $\expec{}{y_i} \geq \alpha \hat{y_i}, \forall i \in \N$ and some $\alpha>0$; where the expectation is taken over possible random choices made by algorithm $\mathcal{A}$.
\end{definition}
\noindent In our case, we set $\alpha=1/2$ and propose an $(\nicefrac{1}{2})$-point-wise approximation algorithm for $\mathbb{P}_1$. 

Our starting point is Dantzig's approach which operates on the integrality-relaxed version of $\mathbb P_1$. In particular, the integrality-relaxed LP for the Knapsack problem with profit vector $p$, weight vector $s$, and capacity $C$, through the following steps:

\vspace{1mm}
\underline{\texttt{Dantz}$(C, p, s)$:}
\begin{enumerate}
	\item Index files in decreasing profit-to-size ratios, \emph{i.e., } $({p_1}/{s_1}) \geq ({p_2}/{s_2}) \geq \ldots \geq ({p_N}/{s_N})$.

	\item	Set $k = \min \big\{j \,\big \vert \, \sum_{i=1}^j s_i >C\big\}$ and $\widetilde{C} = C - \sum_{i=1}^{k-1}s_i$.
	
	\item	Assign the continuous variables $\hat y_i\in[0,1], i\in\N$ as follows:
	\begin{eqnarray*}
		\hat{y}_i	= \begin{cases}
			1, ~~ \,\,\,\textrm{if}\,\,\,~ i\in [k-1]	\\
			\frac{\widetilde{C}}{s_k}, ~~ \textrm{if}\,\,\,~ i=k\\
			0, ~~ \,\,\,\textrm{otherwise}.
		\end{cases}
	\end{eqnarray*}
\end{enumerate}
To streamline presentation, we denote with \texttt{Dantz}$(C, p, s)$ the operation of steps (1)-(3) above on the Knapsack instance $(C, p, s)$, which return the solution $\hat y$ and the value of parameter $k$. An interesting property of returned solution $\hat{y}$, which we exploit in our randomized approximation scheme, is that at most one component of the vector $\hat{y}$ is non-integral.

\begin{algorithm}[t]
	\caption{\small{\texttt{OFTPL-UneqCache}}}
	\label{alg:oftpl-unequal}
	\begin{footnotesize}
	\nl \textbf{Input}: $\eta_1=0$, $y_1=\arg\min_{y \in \mathcal X_s} \dtp{y}{\theta_1}$, $s=(s_i, i\in \N)$\\ %$p_i=0, \forall i\in \N$\\%
	\nl \textbf{Output}: $\{y_t\in\X_s\}_T$ \quad  \hfill$//$ \emph{Feasible discrete caching vector at each slot} \\%
	\nl $\gamma \stackrel{iid}\sim \mathcal{N}(0,\boldsymbol{1}_{N\times 1})$ \hfill$//$ \emph{Sample a perturbation vector}\\
	\nl \For{ $t=2,3,\ldots$  }{
        \nl $\ti\theta_{t} \leftarrow \texttt{ prediction}$ \hfill $//$\emph{Obtain request prediction for slot $t$}\\
        \nl $ \eta_t = \frac{1.3}{\sqrt{C}}\left(\frac{1}{\ln(Ne/C)}\right)^{1/4}\sqrt{\sum_{\tau=1}^{t-1}\|\theta_\tau - \ti\theta_\tau\|^2_1}$\hfill$//$ \emph{Update the perturbation parameter}  \\
        \nl $p\leftarrow \Theta_{t-1}+\ti\theta_t+\eta_t\gamma$ \hfill$//$ \emph{Update the profit vector}\\
		\nl $(\hat y_t, k)\leftarrow$ \texttt{Dantz}$(C,p,s)$ \hfill $//$ \emph{Compute the "almost integral" cache vector}\\
		\nl  $y_t \leftarrow$ \texttt{Rand}$(\hat y_t, k)$ \hfill $//$ \emph{Perform Randomized Rounding}\\
		\nl $\Theta_t = \Theta_{t-1} + \theta_t$ \hfill$//$ \emph{Receive $t$-slot request and update total grad}
	}
\end{footnotesize}
\end{algorithm}

The detailed steps of the proposed OFTPL scheme are presented in Algorithm \ref{alg:oftpl-unequal}. At each slot we obtain a new (probabilistic) prediction for the next requested file $\ti \theta_t$ (line 5) and update the perturbation parameter $\eta_t$ (line 6). Then we calculate the new profits $p_i=\Theta_{t-1}+\ti\theta_t+\eta_t\gamma$, $i\in \N$, and solve the relaxed Knaspack by invoking \texttt{Dantz}$(C, p, s)$ to obtain the almost-integral $\hat y_t$ and parameter $k$ (line 7). This vector has $k-1$ components equal to 1, one additional non-negative component, and $N-k$ components equal to 0. This solution is then rounded through the randomization scheme:

\vspace{1mm}
\underline{\texttt{Rand}$(\hat y_t, k)$:}
\begin{enumerate}
	\item Set $S={1,2,\ldots,k-1}\triangleq [k-1]$ with probability $\nicefrac{1}{2}$; Set $S = \{k\}$ with probability $\nicefrac{1}{2}$.
	\item Set $y^t_{i}=0, \forall i \in \N$; and update to $y^t_{i}=1$ for each $i\in S$.
\end{enumerate}
\vspace{1mm}
The $\texttt{Rand}$ operation is invoked and creates integral caching vector $y_t$ which satisfies the capacity constraint (line 9). Finally, we observe the new gradient, update the aggregate gradient vector and repeat the process (line 10). The following theorem, proved in Appendix. \ref{appendix:theorem-oftpl-unequal}, characterizes the guarantees of Algorithm \ref{alg:oftpl-unequal}.

%\begin{mdframed}
	\begin{theorem}
		Algorithm \ref{alg:oftpl-unequal} ensures, for any time horizon $T$, the expected regret bound:
		\begin{align}
			\expec{}{R^{(\nicefrac{1}{2})}_T}\leq
			1.84\sqrt{C}\left(\ln{\frac{Ne}{C}}\right)^{1/4}
			\sqrt{\sumT \err_1^2}  \notag
		\end{align}

		\label{thm:oftpl-unequal}
	\end{theorem}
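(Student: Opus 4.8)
The plan is to reduce the analysis of the $(\nicefrac{1}{2})$-approximate regret to the \emph{exact} perturbed-leader regret of the \emph{fractional} Knapsack, so as to reuse the entire machinery of Theorem \ref{thm:oftpl} and absorb the approximation loss into the prefactor $\nicefrac{1}{2}$ (note $1.84 = 3.68/2$). To this end I introduce the relaxed potential $\Phi_s(\theta) \triangleq \max_{y\in\X_s^{\mathrm r}}\dtp{y}{\theta}$, where $\X_s^{\mathrm r}=\{y\in[0,1]^N:\sum_i s_i y_i\le C\}$ is the box/LP relaxation over which \texttt{Dantz} is exactly optimal. Consequently $\hat y_t = \arg\max_{y\in\X_s^{\mathrm r}}\dtp{y}{\Theta_{t-1}+\ti\theta_t+\eta_t\gamma}$ is the gradient of the Gaussian-smoothed $\Phi_s$ at $\Theta_{t-1}+\ti\theta_t$, exactly as in \eqref{eq:grad_is_expec}, and the fractional regret $\hat R_T \triangleq \Phi_s(\Theta_T) - \sumT \expec{\gamma}{\dtp{\theta_t}{\hat y_t}}$ is an instance of the object already bounded in Theorem \ref{thm:oftpl}.

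First I would verify that \texttt{Rand} is a point-wise $(\nicefrac{1}{2})$-approximation of $\hat y_t$ returning a feasible vector. Checking the three coordinate groups of $\hat y_t$ (the $k-1$ unit entries, the single fractional entry $\widetilde C/s_k\le 1$, and the zeros) directly gives $\expec{}{y_{t,i}\mid \hat y_t} \ge \tfrac12 \hat y_{t,i}$ for all $i$, while both outcomes $S=[k-1]$ and $S=\{k\}$ respect the capacity. Since each request $\theta_t$ is non-negative, conditioning on $\gamma$ yields $\expec{}{\dtp{\theta_t}{y_t}\mid\gamma}\ge \tfrac12\dtp{\theta_t}{\hat y_t}$. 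Combining this with $\dtp{\Theta_T}{y^\star}\le\Phi_s(\Theta_T)$ (valid because $y^\star\in\X_s\subseteq\X_s^{\mathrm r}$) gives the key reduction
\begin{align}
\expec{}{R_T^{(\nicefrac{1}{2})}} = \tfrac12\dtp{\Theta_T}{y^\star} - \sumT\expec{}{\dtp{\theta_t}{y_t}} \le \tfrac12\Big(\Phi_s(\Theta_T) - \sumT\expec{\gamma}{\dtp{\theta_t}{\hat y_t}}\Big) = \tfrac12\, \hat R_T. \notag
\end{align}

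It then remains to bound $\hat R_T$ verbatim as in the proof of Theorem \ref{thm:oftpl}, since $\Phi_s$ is still sub-linear (a support function) and $\hat y_t\in[0,1]^N$. Two ingredients must be re-checked. First, the smoothing overhead \eqref{eq:sa3} requires $\expec{\gamma}{\Phi_s(\gamma)}\le C\sqrt{2\ln(Ne/C)}$; this holds because, with file sizes at least one unit ($s_i\ge1$), every $y\in\X_s^{\mathrm r}$ satisfies $\sum_i y_i \le \sum_i s_i y_i \le C$, so $\X_s^{\mathrm r}\subseteq\{y\in[0,1]^N:\sum_i y_i\le C\}$ and $\Phi_s(\gamma)$ is dominated by the top-$C$ sum, to which Massart's lemma applies exactly as before. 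Second, the two Hessian bounds \eqref{hess-bd} and \eqref{eq:hess_b_2} only use $|\hat y_i|\le1$ and $\max_{y}\dtp{y}{\theta_t-\ti\theta_t}\le\err_1$; both survive because $\hat y_i\in[0,1]$ and $\dtp{y}{v}\le\|v\|_1$ on $[0,1]^N$. With these, the identical choice of $\eta_t$ and the same optimization over $\beta$ give $\hat R_T \le 3.68\sqrt{C}\,(\ln(Ne/C))^{1/4}\sqrt{\sumT\err_1^2}$, and the reduction above yields the stated bound.

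The main obstacle I anticipate is the first ingredient: unlike the equal-size case, \texttt{Dantz} can return a genuinely fractional solution whose LP value strictly exceeds the best integral value, so one must argue that the relaxed potential does not inflate the Massart term. The containment $\X_s^{\mathrm r}\subseteq\{y:\sum_i y_i\le C\}$, which hinges on $s_i\ge1$, is precisely what keeps the smoothing overhead — and hence the final constant — identical (up to the factor $\nicefrac{1}{2}$) to the equal-size bound of Theorem \ref{thm:oftpl}. A secondary point requiring care is that the gradient-as-expectation identity \eqref{eq:grad_is_expec} together with the Taylor/remainder decomposition must be carried out for $\Phi_s$ with its relaxed maximizer $\hat y_t$ rather than the integral $y_t$, so that the rounding randomness of \texttt{Rand} enters only at the very last step, after the $\gamma$-deterministic regret decomposition is complete.
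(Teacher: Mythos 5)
Your proposal is correct and takes essentially the same route as the paper: \texttt{Rand} is a point-wise $(\nicefrac{1}{2})$-approximation of the fractional \texttt{Dantz} iterates $\hat y_t$, so the $\nicefrac{1}{2}$-regret is at most half the fractional perturbed-leader regret, which is then bounded by rerunning the analysis of Theorem \ref{thm:oftpl}, giving $1.84 = 3.68/2$. Your explicit re-verification that the relaxed knapsack potential remains sub-linear, that the Massart term is not inflated (via $s_i \geq 1$, hence $\mathcal{X}_s^{\mathrm r} \subseteq \{y \in [0,1]^N : \sum_i y_i \leq C\}$), and that the Hessian bounds only need $\hat y_i \in [0,1]$, is precisely the content the paper compresses into a footnote citing Theorem \ref{thm:oftpl}, so your write-up is a more careful rendering of the same argument.
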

%\end{mdframed}
\begin{comment}

\begin{proof}
	Since ${y_{ti}} \in \{0, 1\}\ \forall t, i$, and the sampling in line $8$ is uniform, we get $\expec{}{y_{ti}} = \frac{1}{2}$. Hence  
	\begin{eqnarray}
		\expec{}{y_{ti}}  \geq \frac{1}{2} \hat y_{ti}. \label{eq:pw_app}
	\end{eqnarray}
	where we have used that $\hat y_t \in [0, 1]^N$. Now, from the definition of $\nicefrac{1}{2}$-Regret we have:
\begin{align}
R_T^{(\nicefrac{1}{2})}&= \frac{1}{2}\sum_{t=1}^T \langle \theta_t, y^* \rangle - \mathbb{E}\left[\sum_{t=1}^T \langle \theta_t, {y}_t \rangle\right] \stackrel{(a)}{\leq} \frac{1}{2}\left(\sum_{t=1}^T \langle \theta_t, y^* \rangle - \sum_{t=1}^T \langle \theta_t, \hat{y}_t \rangle\right) \\
&= 1.84\sqrt{C}\left(\ln\frac{Ne}{C}\right)^{1/4}
\sqrt{\sumT \err_1^2}. 
\end{align}
where inequality $(a)$ follows from the $\nicefrac{1}{2}$-approximation property of the randomized rounding algorithm \eqref{eq:pw_app}; and $(b)$ follows from the result of Theorem\footnote{Theorem \ref{thm:oftpl} operates on integral decisions $y_t$. Nonetheless, even if we allow $y^\star, y_t \in \text{conv}(\mathcal{X})$, they are still integral due to the linear program in line $6$ of Algorithm \ref{alg:oftpl} ($\{0, 1\}$ decision variables with non-negative coefficients).} \ref{thm:oftpl}.
	
\end{proof}

\end{comment}

\textbf{Discussion.} The bounds of Theorem \ref{alg:oftpl-unequal} possess the desirable property of being modulated with the prediction errors, and in fact are improved by a factor of half compared to the equal-sizes bound. However, we remind the reader that the regret metric in this section is defined w.r.t. a weaker benchmark, i.e., a benchmark that achieves $\nicefrac{1}{2}$ the utility of the best-in-hindsight utility $\langle \Theta_T,y^\star \rangle$. Relying upon such approximations is an inevitable concession for NP-hard problems -- especially when having to solve them repeatedly. Interestingly, the computational complexity of Algorithm \ref{alg:oftpl-unequal} is comparable to that of Algorithm \ref{alg:oftpl}, which is quite surprising since we are able to handle integral caching decisions with arbitrary-sized files. To the best of the authors' knowledge, this work is the first to propose an OCO-based solution for this variant of the online caching problem. We proceed next to remove the effect of the library size on the regret bound.

\subsection{Approximate OFTRL} \label{sec:oftrl-arb}

We introduce next an OFTRL algorithm that can handle arbitrary file sizes. To that end, we use the OFTRL update on the integrality-relaxed version of the problem, then modify the obtained vector so as to be almost integral, and finally employ the same randomized rounding technique as above to correct for feasibility. The new intermediate step (that yields the almost-integral vector) is necessary since, unlike in Sec. \ref{sec:oftpl-arb}, we cannot apply the $(\nicefrac{1}{2})$-sampling technique directly on $\{\hat x_t\}_t$ because these vectors do not have this required almost-integral form. Thus, we leverage the \texttt{DepRound} subroutine from \cite{depround}, which is known to achieve the useful property re-stated below. 
\begin{lemma}
\label{lemm:dep-round}
For $a \in [0, 1]^N$, the \texttt{DepRound} scheme in Algorithm \ref{alg:dr} returns a vector $b$ such that 
\begin{itemize}
    \item All elements of $b$ are integral except one: $b_i \in \{0, 1\}$ $\forall i \in \N / j$, $b_j \in [0,1]$.
    \item $b_i$ is an unbiased estimator of $a_i$: $\expec{}{b_i} = a_i, \forall i \in \N$.
    \item $b$ respects the linear constraints satisfied by $a$: $\Pr[\sum_{i \in\N} s_i\ a_i = \sum_{i \in\N} s_i\ b_i] = 1$.
\end{itemize}
\end{lemma}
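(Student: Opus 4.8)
The plan is to analyze \texttt{DepRound} (Algorithm \ref{alg:dr}) as an iterative pairwise-rounding procedure and to verify the three claimed properties as invariants maintained across its iterations. \texttt{DepRound} maintains a running fractional vector, initialized at $a$, and at each step it selects two still-fractional coordinates $i,j$ (i.e.\ $a_i, a_j \in (0,1)$) and perturbs only this pair while freezing every other coordinate. The perturbation transfers weighted mass between the two coordinates: one branch increases $a_i$ and decreases $a_j$, the other does the reverse, with the step magnitudes chosen so that (i) the combined weighted contribution $s_i a_i + s_j a_j$ is left unchanged and (ii) at least one of the two coordinates is driven to $\{0,1\}$.

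First I would establish the third property (the linear constraint is preserved with probability one). Because each elementary step is constructed to keep $s_i a_i + s_j a_j$ exactly invariant — the increment to one coordinate's weighted value is matched by an equal decrement to the other's — the total weighted sum $\sum_{i\in\N} s_i a_i$ is deterministically unchanged by every step, regardless of the random branch taken. Since untouched coordinates are frozen, a straightforward induction over iterations yields $\sum_{i\in\N} s_i b_i = \sum_{i\in\N} s_i a_i$ with probability one, which is exactly the third bullet.

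Next I would address the first property (termination with at most one fractional coordinate). By construction the two candidate step magnitudes are precisely the amounts that first drive one of $a_i, a_j$ up or down to a $\{0,1\}$ boundary; hence taking either branch rounds at least one of the pair to integrality. Therefore each iteration strictly decreases the number of fractional coordinates by at least one. Starting from at most $N$ fractional entries, the process halts after finitely many (at most $N-1$) steps, and it can only terminate once fewer than two fractional coordinates remain, i.e.\ with at most a single non-integral $b_j$.

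The part I expect to require the most care is the second property, unbiasedness. For the chosen pair $(i,j)$, let the two admissible step magnitudes be $\beta$ and $\gamma$ (in weighted units). Selecting the ``$+i,-j$'' branch with probability $\gamma/(\beta+\gamma)$ and the ``$-i,+j$'' branch with probability $\beta/(\beta+\gamma)$ makes the conditional expected change in each touched coordinate vanish: $\mathbb{E}[\Delta a_i \mid \text{current state}] = \tfrac{\gamma}{\beta+\gamma}\cdot\tfrac{\beta}{s_i} - \tfrac{\beta}{\beta+\gamma}\cdot\tfrac{\gamma}{s_i}=0$, and symmetrically for $a_j$, while frozen coordinates are trivially conserved. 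Thus the running fractional vector is a coordinatewise martingale. The obstacle is to promote this one-step conditional statement to the unconditional claim $\mathbb{E}[b_i]=a_i$ across a (possibly random) number of iterations; I would do so by the tower property, conditioning successively on the state before each step — equivalently, an optional-stopping argument for the bounded martingale, valid since the iteration count is deterministically at most $N-1$ — which gives $\mathbb{E}[b_i] = a_i$ for every $i \in \N$, completing the proof.
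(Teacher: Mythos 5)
Your proof is correct, but it is worth noting that the paper itself does not prove this lemma at all: it simply cites the original dependent-rounding work (\cite[Lem.~2.1]{depround}) and gives only an informal one-paragraph description of the algorithm's operation in the appendix. What you have written is essentially the standard self-contained argument from that literature, and it is sound: the weighted-sum invariance of each pairwise step gives the third bullet deterministically; the fact that each iteration drives at least one of the two selected coordinates to $\{0,1\}$ (and never re-fractionalizes a frozen coordinate) gives termination in at most $N-1$ steps with at most one fractional entry; and the zero-drift property of each step, promoted through the tower property over a deterministically bounded number of iterations, gives $\mathbb{E}[b_i]=a_i$. The one caveat is that your unbiasedness argument is phrased for the canonical two-branch formulation with step magnitudes $\beta,\gamma$ and branch probabilities $\gamma/(\beta+\gamma)$ and $\beta/(\beta+\gamma)$, whereas Algorithm~\ref{alg:dr} as printed is organized as a four-case analysis with differently expressed probabilities (e.g., $s_ja_j/(s_ia_i+s_ja_j)$ in the first case) together with the adjustment rules in its final four lines; to claim a proof of that algorithm verbatim you would need to verify zero conditional drift case by case, e.g., in the first case $\mathbb{E}[b_i]=\Pr[b_j=0]\left(a_i+\tfrac{s_j}{s_i}a_j\right)=\tfrac{s_ia_i}{s_ia_i+s_ja_j}\cdot\tfrac{s_ia_i+s_ja_j}{s_i}=a_i$, which does hold. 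So your route buys a self-contained and more informative proof than the paper's citation, at the cost of needing this small reconciliation with the paper's specific case-based transcription.
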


With this almost-integral solution at hand, we re-use the sampling technique \texttt{Rand}, to achieve an $(\nicefrac{1}{2})$-Regret guarantee w.r.t. the same benchmark as in Sec. \ref{sec:oftpl-arb}. The steps are presented in Algorithm \ref{alg:oftrl_US}. The diligent reader will observe that essentially we merge steps from Algorithm \ref{alg:oftrl} and Algorithms \ref{alg:oftpl-unequal}. The detailed steps of the \texttt{DepRound} subroutine are presented in the Appendix. The next theorem, proved in the Appendix \ref{appendix:theorem-oftrl_US}, characterizes the algorithm's performance.

\begin{algorithm}[t]
\caption{\small{\texttt{OFTRL-UneqCache}}}
\label{alg:oftrl_US}
\begin{footnotesize}
\nl \textbf{Input}: $\sigma=1/\sqrt{C}$, $\delta_1=\|\theta_1-\ti\theta_1\|_2^2$, $\sigma_1=\sigma \sqrt{\delta_1}$, $x_1=\arg\min_{x \in \mathcal X_s} \dtp{x}{\theta_1}$ \\
\nl \textbf{Output}: $\{y_t\in\X_s\}_T$ \quad  \hfill$//$ \emph{Feasible discrete caching vector at each slot} \\%
\nl \For{ $t=2,3, \ldots$  }{
	\nl $\ti\theta_{t} \leftarrow \texttt{ prediction}$ \hfill $//$\emph{Obtain gradient prediction for slot $t$}\\	
    \nl $\hat x_t = \arg\max_{x \in \convXs} \left\{-r_{1:t-1} (x) + \dtp{x}{\Theta_{t-1}+\ti\theta_{t}}\right\}$\\
    \nl $\bar x_t \longleftarrow$ \texttt{DepRound}$(\hat x_t)$\hfill $//$ \emph{Compute the "almost integral" cache vector}\\
	\nl  $x_t \longleftarrow$ \texttt{Rand}$(\bar x_t,k)$ \hfill $//$ \emph{Perform Randomized Rounding}\\    
	\nl $\Theta_t = \Theta_{t-1} + \theta_t$ \hfill$//$ \emph{Receive $t$-slot request and update total grad}\\
    \nl  $\sigma_{t} = \sigma \left( \sqrt{\delta_{1:t}} -  \sqrt{\delta_{1:t-1}} \right)$ \hfill$//$ \emph{Update the regularization parameter} \\	
}
\end{footnotesize}
\end{algorithm}

%\begin{mdframed}
\begin{theorem}\label{thm:oftrl_US}
Algorithm \ref{alg:oftrl_US} ensures, for any time horizon $T$, the expected regret bound:
\begin{align}
\expec{}{R_T^{(\nicefrac{1}{2})}}\leq
\sqrt{C}
\sqrt{\sumT \err_2^2}  \notag
\end{align}
\end{theorem}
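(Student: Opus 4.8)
The plan is to reduce the $\nicefrac{1}{2}$-regret of the integral iterates $\{x_t\}_T$ to the continuous regret of the relaxed \texttt{OFTRL} iterates $\{\hat x_t\}_T$, analogous to the argument used for Theorem \ref{thm:oftpl-unequal}, with the extra \texttt{DepRound} stage absorbed via the tower property. First I would establish that the two-stage rounding (\texttt{DepRound} followed by \texttt{Rand}) is a point-wise $\nicefrac{1}{2}$-approximation of $\hat x_t$, i.e. $\expec{}{x_{i}^t} \geq \tfrac{1}{2}\hat x_{i}^t$ for every $i$ and $t$. Conditioning on the intermediate vector $\bar x_t$, the \texttt{Rand} step gives $\expec{}{x_i^t \mid \bar x_t} \geq \tfrac{1}{2}\bar x_i^t$, which I would verify by checking the three cases produced by \texttt{DepRound}: an integral one (giving $\tfrac{1}{2}$), an integral zero (giving $0$), and the single fractional coordinate $\bar x_i^t \leq 1$ (giving $\tfrac{1}{2}\geq\tfrac{1}{2}\bar x_i^t$). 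Then, since $\bar x_t$ is an unbiased estimator of $\hat x_t$ by Lemma \ref{lemm:dep-round} ($\expec{}{\bar x_i^t}=\hat x_i^t$), the tower property yields $\expec{}{x_i^t} = \expec{}{\expec{}{x_i^t\mid \bar x_t}} \geq \tfrac{1}{2}\expec{}{\bar x_i^t} = \tfrac{1}{2}\hat x_i^t$.

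Next I would invoke the continuous regret guarantee. Define $\hat R_T \triangleq \dtp{\Theta_T}{x^\star} - \sumT \dtp{\theta_t}{\hat x_t}$ for the relaxed iterates produced by line $5$. Because this line is the identical \texttt{OFTRL} update used in Algorithm \ref{alg:oftrl}, and the analysis of Theorem \ref{thm:oftrl-m} depends only on the proximal regularizer \eqref{oftrl-reg}--\eqref{eq:ftrl-reg} and on the $\ell_2$-diameter of the feasible set, the bound $\hat R_T \leq 2\sqrt{C}\,\sqrt{\sumT \err_2^2}$ carries over once we confirm that $\|x^\star - \hat x_t\|_2 \leq \sqrt{2C}$ still holds on $\convXs$. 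This follows because, with file sizes at least one unit ($s_i\geq 1$), every $x\in\convXs$ satisfies $\|x\|_1=\sum_i x_i \leq \sum_i s_i x_i \leq C$, exactly reproducing the diameter estimate of Theorem \ref{thm:oftrl-m}.

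Finally I would chain the two facts. Using $\theta_t \geq 0$ componentwise together with the point-wise bound, $\expec{}{\dtp{\theta_t}{x_t}} = \dtp{\theta_t}{\expec{}{x_t}} \geq \tfrac{1}{2}\dtp{\theta_t}{\hat x_t}$, so
\begin{align*}
\expec{}{R_T^{(\nicefrac{1}{2})}} = \tfrac{1}{2}\dtp{\Theta_T}{x^\star} - \expec{}{\sumT \dtp{\theta_t}{x_t}} \leq \tfrac{1}{2}\Big(\dtp{\Theta_T}{x^\star} - \sumT \dtp{\theta_t}{\hat x_t}\Big) = \tfrac{1}{2}\hat R_T \leq \sqrt{C}\,\sqrt{\sumT \err_2^2}.
\end{align*}

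The main obstacle I anticipate is the careful bookkeeping across the two rounding stages: verifying that \texttt{DepRound} delivers exactly the almost-integral form (a single fractional coordinate, with the capacity constraint preserved with probability one) that \texttt{Rand} requires, and that the selected set $S$ always obeys $\sum_{i\in S} s_i \leq C$ so that $x_t \in \X_s$ surely and not merely in expectation. Once these structural facts are in place, the result is a direct composition of Lemma \ref{lemm:dep-round}, the $\nicefrac{1}{2}$-sampling property of \texttt{Rand}, and the already-established continuous bound of Theorem \ref{thm:oftrl-m}.
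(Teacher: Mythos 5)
Your proposal is correct and follows essentially the same route as the paper's proof: establish the point-wise $\nicefrac{1}{2}$-approximation $\expec{}{x_t}\geq\tfrac{1}{2}\hat x_t$ by composing Lemma \ref{lemm:dep-round} (unbiasedness of \texttt{DepRound}) with the uniform-sampling property of \texttt{Rand}, then reduce the $\nicefrac{1}{2}$-regret to the continuous OFTRL guarantee of Theorem \ref{thm:oftrl-m}, whose factor $2\sqrt{C}$ is halved to $\sqrt{C}$. You are in fact slightly more careful than the paper in two places it glosses over: the explicit tower-property step across the two rounding stages, and the verification that the $\ell_2$-diameter bound $\sqrt{2C}$ (and hence the continuous bound) still holds on $\convXs$ via $s_i\geq 1$.
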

\textbf{Discussion.} Compared to the approach we used to extend OFTPL to unequal-sized files, an additional rounding technique (\texttt{DepRound}) was necessary to extend OFTRL. The complexity of this sub-routine is linear in the library size. Thus, despite preserving the order-level complexity of Algorithm \ref{alg:oftrl_US}, handling such files increases the overhead to get the almost-integral caching vectors $\{\bar x_t\}_t$. The important point is that we recover the dimension-free bounds for the regret, which are better by a constant factor of $1.84 \left(\ln\frac{Ne}{C}\right)^{1/4}$ compared to \texttt{OFTPL-UneqCache}, at the expense of performing a (potentially challenging) projection step. Hence, one can select the most suitable method depending on the requirements of the application (size of library, slot length, etc.) and the computing resources when executing the algorithm. Lastly, in Algorithm \ref{alg:oftrl_US}, and all introduced algorithms for the single cache case, the request vector $\theta$ can be extended to include (time-varying and unknown) weights that depend on users or network properties. 

\section{Bipartite Caching Through Optimism (\texttt{OFTRL-BipCache})}\label{sec:OFTRL-BP}

Finally, we extend our study to caching networks where a set of edge caches ${\mathcal J}=\{1,2,\dots, J\}$ serves a set of users  ${\mathcal I}=\{1,2,\dots, I\}$ requesting files from the library $\mathcal N$. The connectivity between $\mathcal I$ and $\mathcal J$ is modeled with parameters $d=\big(d_{ij}\in \{0,1\}: i\!\in\!\mathcal{I}, j\in\mathcal{J} \big)$, where $d_{ij}=1$ if cache $j$ can be reached from user location $i$. Each user can be (potentially) served by any connected cache. This is a widely-studied non-capacitated bipartite model \cite{paschos-book, femtocaching}. 

We introduce the new caching variables $k$ and routing variables $u$. Namely, $k_{nj}^t\!\in\!\{0, 1\}$ decides whether file $n\!\in\!\mathcal N$ is stored at cache $j\!\in\! \mathcal J$ at the beginning of slot $t$, and the $t$-slot caching vector  $k_t\!=\!(k_{nj}^t: n\!\in\! \mathcal N, j\!\in\!\mathcal J)$ belongs to
$
\mathcal{K}=\left\{k\in \{0,1\}^{N\cdot J} ~\big|~ \sum_{n\in\mathcal{N}}k_{nj}\leq C_j, ~j\in \mathcal J\right\},
$
where $C_j$ is the capacity of cache $j\in \mathcal J$. We use the routing variable $u_{nij}^t\in\{0,1\}$ to decide the delivery of file $n$ to user $i$ from cache $j$, and define the $t$-slot routing vector $u_t=(u_{nij}^t\!\in\![0,1]: n\!\in\!\mathcal N, i\!\in\!\mathcal I, j\!\in\!\mathcal J)$ that is selected from the set:
$
\mathcal{U}=\left\{u\in \{0,1\}^{N\cdot J\cdot I} ~\big|~ \sum_{j\in\mathcal{J}}u_{nij}\leq 1, ~n\in \mathcal N, i\in\mathcal I\right\}.
$
Note also that unserved requests, i.e., when the summation is strictly smaller than $1$, are satisfied by the root cache. This option, however, yields zero benefit for the users (no cache-hit gains); see also \cite{femtocaching}. The request vector $\theta_t$ is redefined to reflect a request's source and destination: $\theta_t\! =\!(\theta_{ni}^t\!\in\!\{0,1\}: n\!\in\!\mathcal N, i\!\in\!\mathcal I)$, and is drawn from the set:
$
	\mathcal{Q}=\left\{\theta\in \{0,1\}^{N\cdot I} ~\big |~ \sum_{n\in\mathcal{N}}\sum_{i\in\mathcal{I}} \theta_{ni}=1\right\}. \notag
$

\begin{algorithm}[t]
	\caption{\small{\texttt{OFTRL-BipCache} }}
	\label{alg:oftrl-bp}
	\begin{footnotesize}
		\nl \textbf{Input}: $\sigma=1/\sqrt{1+JC}$, $\delta_1=\|\theta_1-\ti\theta_1\|_2^2$, $\sigma_1=\sigma \sqrt{\delta_1}$, $x_1=\arg\min_{x \in \mathcal X_b} \dtp{x}{\theta_1}$ \\%
		%%%%%%%%%%%%%%%%%%
		\nl \textbf{Output}: $\{x_t=(k_t, u_t) \in \X_b\}_T$\quad  \hfill$//$ \emph{Feasible discrete caching/routing vector at each slot}\\%
		\nl \For{ $t=2,3,\ldots$  }{
			\nl $\ti\theta_{t} \leftarrow \texttt{ prediction}$ \hfill $//$\emph{Obtain request prediction for slot $t$}\\	
			\nl $\hat x_t = \arg\max_{x \in \convXb} \left\{ - r_{1:t-1} (x) + \dtp{x}{\Theta_{t-1}+\ti\theta_{t}}\right\}$\label{algstep:updateb} \hfill$//$ \emph{Update the continuous policy vector}\\    
			\nl $(k_{j}^t) \longleftarrow MadowSample(\hat k_t)$, $\forall j\in\mathcal J$  \hfill$//$ \emph{Obtain the discrete cache vector for each cache independently}  \\    
			\nl $\Theta_t = \Theta_{t-1} + \theta_t$  \hfill$//$ \emph{Receive $t$-slot request and update total grad}\\
			\nl Set $u^t_{n i j'} \gets 1 $ for a randomly selected $j' \in \mathcal{J}^{ni}$ \hfill$//$\emph{Update the discrete routing vector}\\
            \nl  $\sigma_{t} = \sigma \left( \sqrt{\delta_{1:t}} -  \sqrt{\delta_{1:t-1}} \right)$ \hfill$//$ \emph{Update the regularization parameter} \\				
		}
	\end{footnotesize}
\end{algorithm}

We can now introduce the $t$-slot utility function:
\begin{equation}
    f_t(x_t) = \sum_{n\in\mathcal{N}}\sum_{i\in\mathcal{I}}\sum_{j\in\mathcal{J}} \theta^t_{ni}k_{nij}^t\ .
\end{equation}
where we abuse notation and redefine $x_t \triangleq (k_t, u_t)$. Therefore, the utility-maximizing caching-routing policy at each slot $t$ is found by solving the following problem:

\begin{align}
\mathbb P_2:\,\, \max_{x}  \,\,\,& \sum_{t=1}^T f_t(x)\,\,\,\,\,\,\,
\text{s.t.}\,\,\,\,\,\, u\in \mathcal U, \,\,\,\, k\in\mathcal K; \,\,\,\,u_{nij} \leq k_{nj}d_{ij},\, i\in\mathcal I, j\in\mathcal J, n\in\mathcal N,
\end{align} 
and we define the feasible caching/routing set as $\mathcal{X}_b\triangleq\left\{ \{\mathcal K \times \mathcal U\} \cap \{ u_{nij}\leq k_{nj}d_{ij}\} \right\}$. $\mathbb P_2$ is known to be NP-Hard via a reduction to the set cover problem \cite[Sec. 3]{femtocaching}, \cite[Sec. 4.1]{leadcache}. Hence, we will be using below also its integrality-relaxed version $\mathbb{P}_{2}^{\prime}$ with continuous variables $\hat x_t \triangleq (\hat k_t, \hat u_t)$.  

Our strategy for tackling $\mathbb P_2$ is to use OFTRL on the convex hull of $\mathcal{X}_b$ (essentially learning w.r.t. $\mathbb P_2^{\prime}$) to optimize $\hat x_t$, and then obtain discrete caching vectors with Madow's sampling applied to each cache separately. As last step we select a proper routing solution for the received request $\theta_t$. Namely, upon receiving a request for file $n$ from user $i$, the corresponding routing variable is set to $1$ if \emph{any} cache connected to $i$ stores file $n$. Thus, we define the auxiliary set $\mathcal{J}^{ni} = \left\{j \in \mathcal{J} \big|\ y_{nj}\ d_{ij} = 1\right\}$,
and assign $u^t_{nij'} = 1$ for the $(n, i)$ pair and some $j' \in \mathcal{J}^{ni}$. It is important to stress that in such uncapacitated models, the routing plan is directly determined once a caching vector is fixed\footnote{In other words, the routing variables are auxiliary in the Femtocaching model, and indeed in \cite{femtocaching} these variables where omitted, while they appear with different name in subsequent works, e.g., as virtual caching variables in \cite{leadcache}.}. The detailed steps of the proposed OFTRL schemed are presented in Algorithm \ref{alg:oftrl-bp}, where we reuse the regularization scheme from Sec. \ref{sec:oftrl-m} with the difference that it operates now on the newly defined variables and request vectors. The performance of the algorithm is characterized with the next theorem, the proof of which can be found in Appendix. \ref{appendix:oftrl-m-bp}
%\begin{mdframed}
\begin{theorem}\label{thm:oftrl-m-bp}
Algorithm \ref{alg:oftrl-bp} ensures, for any horizon $T$, the expected $(1-\nicefrac{1}{e})-$Regret bound:
\begin{align}
\expec{}{R_T^{(1-\nicefrac{1}{e})}}\leq
1.3 \sqrt{1+JC} \sqrt{\sumT\err_2^2}  \notag
\end{align}
\end{theorem}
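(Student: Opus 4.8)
The plan is to split the argument into a \emph{continuous} regret bound for the integrality-relaxed problem $\mathbb{P}_2'$ and a \emph{rounding} step that costs only a $(1-\nicefrac{1}{e})$ factor in expectation. Let $\hat R_T \triangleq \dtp{\Theta_T}{x^\star} - \sumT\dtp{\theta_t}{\hat x_t}$ denote the regret of the continuous iterates $\{\hat x_t\}$ of line 5, where $x^\star$ is the optimum of $\mathbb P_2'$ over $\convXb$. Because Algorithm \ref{alg:oftrl-bp} is exactly OFTRL with the same proximal $\ell_2$ regularizer now acting on $\convXb$, I would invoke the restated OFTRL lemma from the proof of Theorem \ref{thm:oftrl-m} verbatim, telescope $\sigma_{1:t}=\sigma\sqrt{\delta_{1:t}}$, and use $\sumT \delta_t/\sqrt{\delta_{1:t}}\le 2\sqrt{\delta_{1:T}}$ to obtain $\hat R_T \le \sigma(1+JC)\sqrt{\delta_{1:T}} + \tfrac{1}{\sigma}\sqrt{\delta_{1:T}}$; the input choice $\sigma=1/\sqrt{1+JC}$ then collapses this to $\hat R_T \le 2\sqrt{1+JC}\sqrt{\sumT\err_2^2}$.

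The only fresh element of this first stage is the diameter of $\convXb$, which fixes the constant $1+JC$. I would bound it blockwise. For the caching block, each coordinate lies in $[0,1]$ and obeys $\sum_n \hat k_{nj}\le C_j$, so $\|k^\star-\hat k_t\|_2^2 \le \|k^\star\|_1+\|\hat k_t\|_1 \le 2\sum_{j}C_j = 2JC$. The routing block must then contribute only the residual constant, so that $\|x^\star-\hat x_t\|_2^2 \le 2(1+JC)$ and $r_{1:T}(x^\star)\le \tfrac{\sigma_{1:T}}{2}\cdot 2(1+JC)$. This is where the uncapacitated structure genuinely enters: routing is auxiliary and slaved to the caching decision, so its contribution to the diameter must not scale with $N$ or $I$, and pinning down this routing term cleanly is the one subtlety of stage one.

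The crux of the proof, and the step I expect to be hardest, is the per-slot guarantee $\expec{}{\dtp{\theta_t}{x_t}} \ge (1-\nicefrac{1}{e})\dtp{\theta_t}{\hat x_t}$. Two structural facts drive it: (i) running Madow's sampling \emph{independently per cache} (line 6) produces Bernoulli marginals $\Pr[k^t_{nj}=1]=\hat k^t_{nj}$ with the events $\{k^t_{nj}=1\}_{j\in\mathcal J}$ mutually independent for each fixed file $n$; and (ii) the routing rule serves a requested pair $(n,i)$ precisely when some connected cache holds file $n$, so that $\Pr[(n,i)\text{ served}] = 1-\prod_{j:\,d_{ij}=1}(1-\hat k^t_{nj})$. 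I would then combine the coverage inequality $1-\prod_j(1-p_j)\ge(1-\nicefrac{1}{e})\min(1,\sum_j p_j)$ with the routing-feasibility bound $\sum_j \hat u^t_{nij}\le \min(1,\sum_j d_{ij}\hat k^t_{nj})$ to conclude that the expected reward $\sum_{n,i}\theta^t_{ni}\Pr[(n,i)\text{ served}]$ is at least a $(1-\nicefrac{1}{e})$ fraction of the continuous reward $\dtp{\theta_t}{\hat x_t}$. Making the two-level (within-cache and across-cache) independence rigorous, and tying the coverage probability to the \emph{continuous} routing mass $\hat u_t$ rather than to a pointwise-optimal routing, is the delicate bookkeeping of this stage.

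Finally, since the discrete benchmark $y^\star$ of the $(1-\nicefrac{1}{e})$-regret is feasible for the relaxation and hence satisfies $\dtp{\Theta_T}{y^\star}\le\dtp{\Theta_T}{x^\star}$, summing the per-slot guarantee gives
\begin{align}
\expec{}{R_T^{(1-\nicefrac{1}{e})}} &= (1-\tfrac1e)\dtp{\Theta_T}{y^\star} - \expec{}{\sumT\dtp{\theta_t}{x_t}} \notag\\
&\le (1-\tfrac1e)\Big(\dtp{\Theta_T}{x^\star}-\sumT\dtp{\theta_t}{\hat x_t}\Big) = (1-\tfrac1e)\hat R_T. \notag
\end{align}
Substituting the stage-one bound and using $2(1-\nicefrac{1}{e})\approx 1.264\le 1.3$ then yields $\expec{}{R_T^{(1-\nicefrac{1}{e})}} \le 1.3\sqrt{1+JC}\sqrt{\sumT\err_2^2}$, as claimed.
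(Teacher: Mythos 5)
Your proposal follows the same three-stage route as the paper's proof (Appendix \ref{appendix:oftrl-m-bp}): a continuous OFTRL regret bound over $\convXb$ with constant $2\sqrt{1+JC}$, a point-wise $(1-\nicefrac{1}{e})$ rounding guarantee, and a combination step using linearity of the reward, ending with $2(1-\nicefrac{1}{e})\approx 1.264\le 1.3$. Your rounding stage is in fact stated more carefully than the paper's: the paper writes $\expec{}{u_{nij}}=\Pr[\vee_{j\in\mathcal{J}^i}k_{nj}=1]$ coordinate-wise, which is not literally correct since line 8 of Algorithm \ref{alg:oftrl-bp} sets only one randomly chosen $u_{nij'}$ to one; the statement that actually matters (and the one you prove) concerns the served event, i.e., the total routing mass $\sum_j u_{nij}$, and your coverage inequality is equivalent to the paper's chain based on $1+x\le e^x$ and the concavity of $1-e^{-x}$ on $[0,1]$. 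Your closing observation $\dtp{\Theta_T}{y^\star}\le\dtp{\Theta_T}{x^\star}$ is also the right way to handle the discrete-versus-relaxed benchmark, which the paper leaves implicit.

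The genuine gap is in your stage one. The paper never derives the continuous bound itself: it invokes \cite[Thm.~1]{naram-jrnl} as a black box, and that citation is the sole source of the constant $\sqrt{1+JC}$. You instead attempt the derivation and assert that the routing block of $\convXb$ contributes only the residual constant $2$ to the squared diameter; you correctly flag this as the subtlety, but you never establish it, and as stated it fails for the raw set $\convXb$. The routing coordinates only satisfy $\sum_j \hat u_{nij}\le 1$ per pair $(n,i)$, so every pair carrying routing mass can contribute up to $2$ to $\|x^\star-\hat x_t\|_2^2$; and the relaxed benchmark $u^\star$ optimally places full routing mass on \emph{every} requested pair that is coverable under $k^\star$, a number of pairs that can scale like $\min\{T,\, I\sum_j C_j,\, NI\}$ rather than like a constant. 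Hence $r_{1:T}(x^\star)\le \sigma_{1:T}(1+JC)$ does not follow from a pure diameter argument on $\convXb$. Closing this hole requires either importing the cited theorem, as the paper does, or adding structure your sketch does not contain — e.g., restricting the routing variables entering the regularizer to the currently requested/predicted pair, so that only $O(1)$ routing coordinates are active per term. Without one of these, the constant $1+JC$ in your bound is unsupported, even though the rest of your argument (stages two and three) is sound and matches the paper.
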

%\end{mdframed}

 \textbf{Discussion.} Similar to the single cache, Theorem \ref{thm:oftrl-m-bp} improves the regret by removing the effect of library size $N$, as opposed to the recently-proposed bipartite OFTPL algorithm for equal-sized files in \cite{leadcache}\footnote{We note that the bound in \cite{leadcache} contains additionally the number of users since they consider a different request model of one request per user per time slot. Our work can be readily extended in that direction, as explained above.}. This improvement comes at the expense of a projection operation in the OFTRL step. Additionally, Algorithm \ref{alg:oftrl-bp} manages to reduce further the constant terms when it has access to high quality predictions for the next-slot requests.
 
\section{Expert-based Optimistic Caching}\label{sec:experts}
Changing tack in this section, we explore a different approach for optimism that is based on the classical experts framework. Specifically, we consider a model with two experts: a pessimistic (or robust) learner and an optimistic learner. The pessimist expert \emph{proposes} caching decisions based on the OGA policy \cite{paschos-jrnl} that does not use predictions, and provides adversarial regret guarantees. The optimistic expert proposes a caching policy that is optimized solely w.r.t. the predicted request, i.e., as if the predictions are fully reliable. Finally, a meta-learner receives the proposals from the two experts and gradually discerns which of them should be trusted. The expert-based approach to optimistic learning has been previously proposed for \emph{continuous} caching in \cite{ocol, naram-jrnl}. We expand it here to handle discrete decisions and demonstrate it using the single cache scenario. 

Formally, the pessimistic expert $(p)$ proposes caching $\{\hat z^{(p)}_t\}_t$ according to \emph{adaptive} OGA:
\begin{eqnarray}
    \hat z^{(p)}_t = \mathcal{P}_{\text{conv}(\mathcal{X})} \left\{ z^{(p)}_{t-1} + \frac{1}{\sqrt{t}} \theta_t \right\},
    \label{eq:pes-action}
\end{eqnarray}
where $\mathcal P_{\convX}$ is the Euclidean projection onto the convex hull of $\X$. We denote the regret of this expert by $R_T^{(p)} = \dtp{\Theta_T}{\hat z^\star} - \sumT \dtp{\theta_t}{\hat z^{(p)}_t}$, where $z^\star = \argmax_{z\in\convX}\langle \Theta_T, z \rangle$. On the other hand, the optimistic expert $(o)$ solves the following LP $z^{(o)} = \arg\max_{z\in\mathcal{X}} \dtp{\ti\theta_t}{z}$,

and we denote its regret with $R_T^{(o)} = \dtp{\Theta_T}{ z^\star} - \sumT \dtp{\theta_t}{ z^{(o)}_t}$.

Unlike the previous sections where predictions were used to modify the perturbation and regularization parameters, here they are treated independently through the optimistic expert. The challenge is then to learn which of the two experts’ proposals to follow. To that end, a meta-learner combines the proposals through a set of learned weights. Namely, the meta-learner's decision variable is $w \triangleq (w^{(p)}, w^{(o)}) \in \Delta_2$, where $\Delta_2 = \left\{w\in[0, 1]^2 \big| \|w\|_1=1\right\}$, and is used to create a convex combination of the provided caching proposals, i.e., $ \hat z_t = w^{(p)}_{t} \hat z^{(p)}_t + w^{(o)}_{t} z^{(o)}_t.$

Clearly,  by its definition, it holds that $\hat z_t \in \text{conv}(\mathcal{X})$. The weights are updated with adaptive OGA:
\begin{eqnarray}
     \hat w_t = \mathcal{P}_{\text{conv}(\Delta_2)} \left\{ w_{t-1} + \frac{1}{\sqrt{t}} l_t \right\},
     \label{eq:exps-weights-update}
\end{eqnarray}
where $l_t \triangleq \left(\dtp{\theta_t}{\hat z^{(p)}_t}, \dtp{\theta_t}{z^{(o)}_t}\right)$ is the experts' utility vector at slot $t$.
We then have the following result for the regret of the actual mixed action \cite[Thm. 3]{ocol}:
\begin{eqnarray}
    \hat R_T\{\hat z \}_T = \dtp{\Theta_T}{\hat z^\star} - \sumT \dtp{\theta_t}{\hat z_t} \leq 
    R_T^{(w)} + \text{min}\left\{R_T^{(p)},  R_T^{(o)}\right\} \leq 2 \sqrt{2T} + \text{min}\left\{R_T^{(p)},  R_T^{(o)}\right\}
\end{eqnarray}
Finally, similar to what we have shown in the OFTRL section, it is possible to use Madow's sampling to recover integral cache states $z_t \in \mathcal{X}$ with the associated bound
\begin{align}
\expec{}{R_T\{z_t\}} \leq R_T^{(w)} + \text{min}\left\{R_T^{(p)}, R_T^{(o)}\right\}. \label{experts-regret}
\end{align}
The steps of this scheme are summarized in Algorithm \ref{alg:exps}.

\begin{algorithm}[t]
	\caption{\small{\texttt{Experts-Cache}}}
	\label{alg:exps}
	\begin{footnotesize}
		\nl \textbf{Input}: $z_1\in\X$\\%
		\nl \textbf{Output}: $\{z_t \in \mathcal{X}\}_T$ \hfill $//$ \emph{Feasible caching vector at each slot} \\%
		\nl \For{ $t=2,3,\ldots$  }{
			\nl $\hat z^{(p)}_t = \mathcal{P}_{\text{conv}(\mathcal{X})} \left\{ z^{(p)}_{t-1} + \frac{1}{\sqrt{t}} \theta_t \right\}$ \hfill $//$ \emph{Pessimistic expert makes proposal}\\
			\nl $z^{(o)} = \arg\max_{z\in\mathcal{X}} \dtp{\ti\theta_t}{z}$ \hfill $//$ \emph{Optimistic expert makes proposal based on the oracle's prediction}\\
			\nl $\hat z_t = w^{(p)}_{t} \hat z^{(p)}_t + w^{(o)}_{t} z^{(o)}_t$ \hfill $//$ \emph{Meta-learner combines proposals}\\
            \nl $z_t\leftarrow MadowSample(\hat z_t)$  \hfill$//$ \emph{Obtain the discrete cache vector using Algorithm \ref{alg:ms}}  \\			
			\nl $\Theta_t = \Theta_{t-1} + \theta_t $ \hfill $//$ \emph{Receive the request for slot $t$ and update total grad} \\
			\nl $\hat w_t = \mathcal{P}_{\text{conv}(\Delta_2)} \left\{ w_{t-1} + \frac{1}{\sqrt{t}} l_t \right\}$ \hfill $//$ \emph{Meta-learner observes losses $l_t$ \& updates weights}\\
		}
	\end{footnotesize}
\end{algorithm}

\textbf{Discussion.} The performance advantage of the bound in \eqref{experts-regret} is that it can be strictly negative, depending on the optimistic expert's regret. For example, in case of perfect predictions and non-fixed cost functions, the $\text{min}$ term evaluates to $-\epsilon T$ for some $\epsilon>0$, making the meta-regret negative for large enough $T$. In all cases, the meta-regret is upper bounded by $O(\sqrt{T})$ due to the existence of the robust expert's regret in the $\text{min}$ term, hence we maintain the order-optimal regret for worst-case scenarios with this approach as well. From a computational load perspective, the most challenging step is the projection involved in the calculation of the OGD-based policy (pessimistic expert). However, one can leverage the tailored fast projection proposed in \cite{paschos-jrnl} for that operation. It is also important to stress that this framework allows to combine more than one expert, in order to either to e.g., include more than one  predictor, see discussion also in \cite{naram-jrnl}.

We note that since experts-based optimism is a meta-algorithm whose regret is characterized by that of the experts (i.e., learning algorithms), it can be applied to the other setups of unequal sizes and bipartite caching. The (possibly $\alpha$) meta regret will then be related to that of the (possibly $\alpha$) regret of the optimistic and pessimistic experts. Finally, it is worth noting that the idea of using the experts model for combining multiple caching policies has been previously proposed in \cite{ismail-nips02}, and evaluated in several cases, e.g., see \cite{rodriguez-usenis21} and reference therein, which however do not consider predictors nor provide any theoretical analysis (or, bounds) for the performance of this approach.

\section{Experiments}\label{sec:evaluation}

We compare the performance of our algorithms with carefully-selected competitors: the FTRL policy which generalizes the OGD from \cite{paschos-jrnl}, and the FTPL method from \cite{abhishek-sigm20}. We note that these competitors already showed superior performance to the classical methods of LRU and LFU in their experiments. The request traces are created using the MovieLens dataset \cite{mlds} which contains time-stamped movie ratings. We assume a request is initiated to a CDN in the same chronological order as their ratings' timestamps. We consider movies with at least $8$ ratings, leading to a library of $N\!=\!10379$ and we set capacity $C=150$. Each prediction is assumed correct with probability $\rho$. Specifically, we generate a one-hot $\ti\theta_t$ that has $1$ at the file to be requested with probability $\rho$, or at any other random file with probability $1-\rho$. We also experiment with \emph{probabilistic} predictions where the vector components represent the probabilities of files being requested (details in Appendix \ref{appendix:simulations}). For the experiments with unequal-sized files, we generate the sizes uniformly $s_i \sim  U[1, 10]$ and set $C\!=\!500$. For the bipartite network, we use the $100$k variation of the MovieLens dataset and consider files with at least $10$ ratings, leading to $N\!=\!1152$. The network consists of $3$ caches ($C\!=\!150$) and $4$ user locations, the first two connected to caches $1$ \& $2$, and the rest to caches $2$ \& $3$.  

\begin{figure}
     \centering
     \begin{subfigure}[b]{0.24\textwidth}
         \centering
         \includegraphics[width=\textwidth]{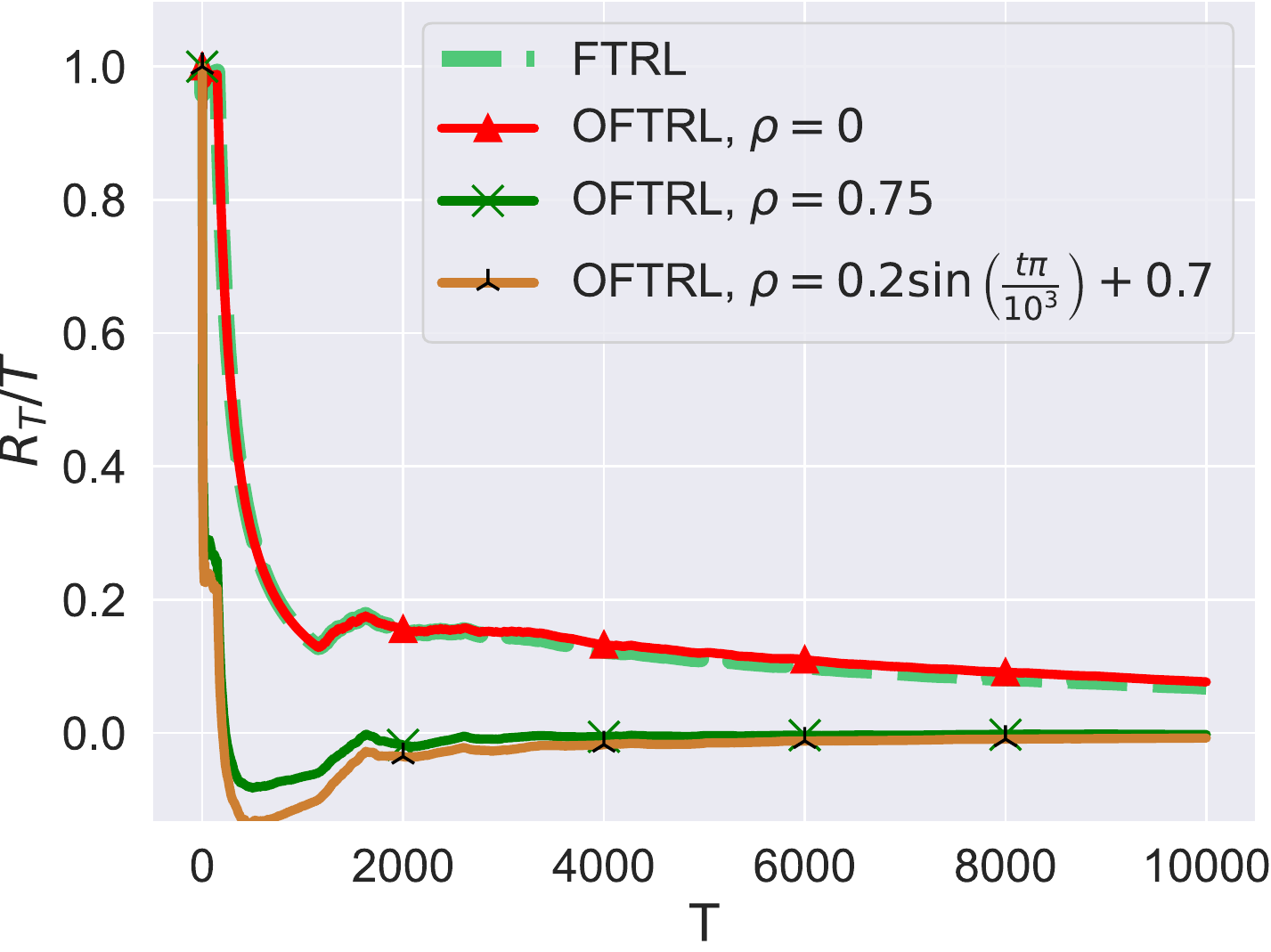}
         \caption{\footnotesize{\texttt{OFTRL-Cache}}}
         \label{fig:es_a}
     \end{subfigure}
     \hfill
     \begin{subfigure}[b]{0.24\textwidth}
         \centering
         \includegraphics[width=\textwidth]{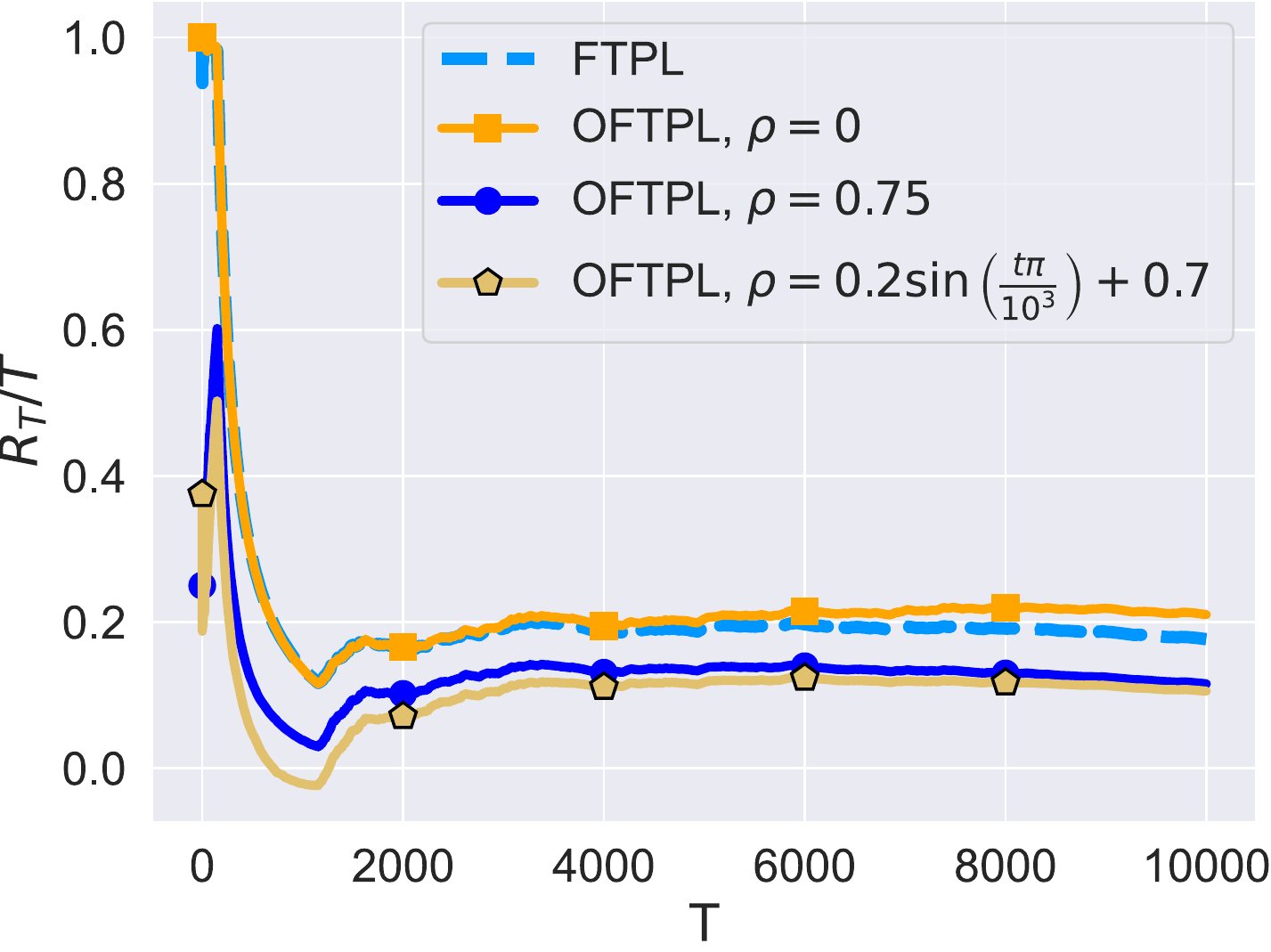}
         \caption{\footnotesize{\texttt{OFTPL-Cache}}}
         \label{fig:es_b}
     \end{subfigure}
     \hfill
     \begin{subfigure}[b]{0.24\textwidth}
         \centering
         \includegraphics[width=\textwidth]{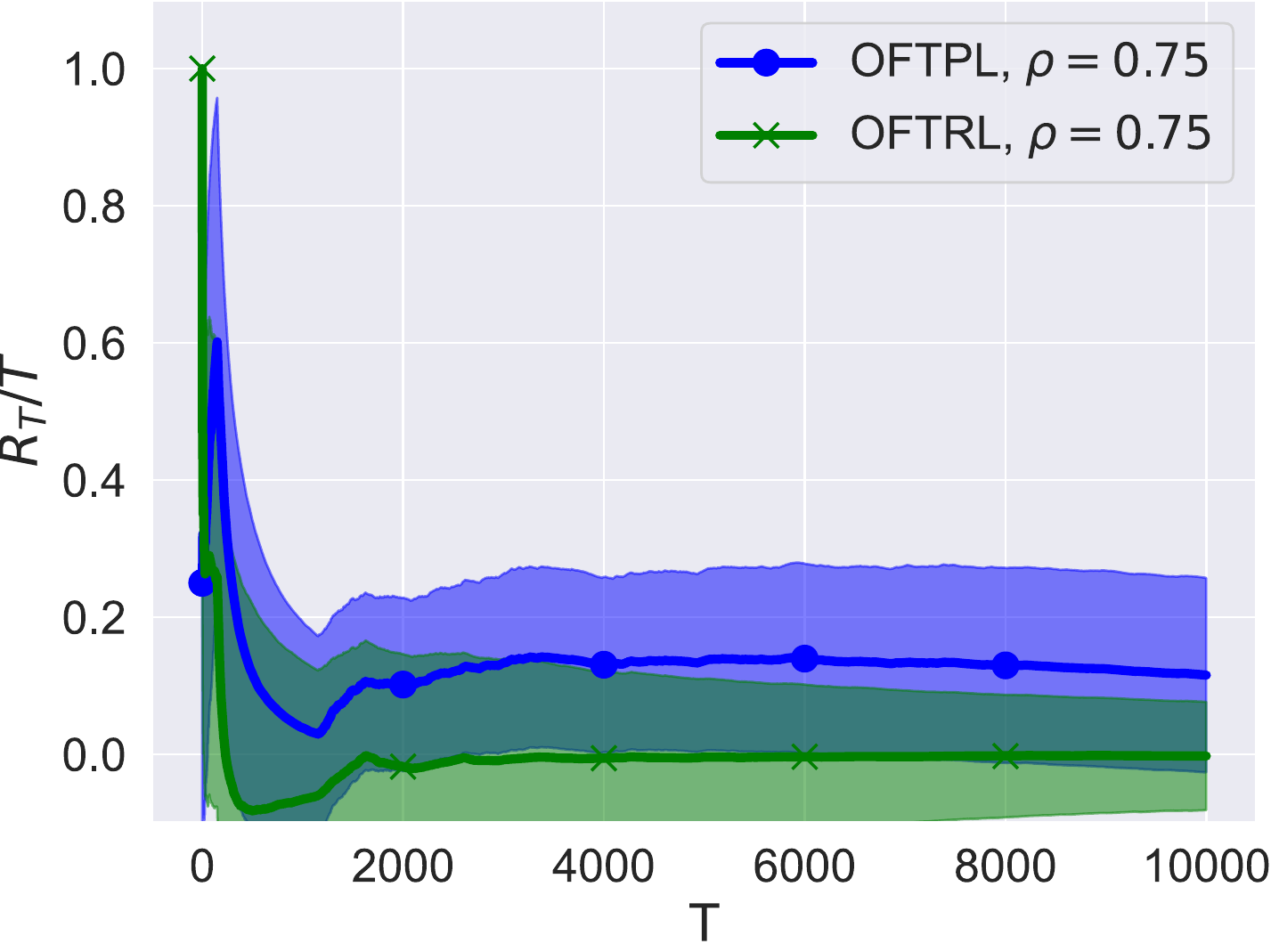}
         \caption{\footnotesize{\texttt{OFTRL/OFTPL,} $\rho\!=\!0.75$}}
         \label{fig:es_c}
     \end{subfigure}
     \begin{subfigure}[b]{0.24\textwidth}
         \centering
         \includegraphics[width=\textwidth]{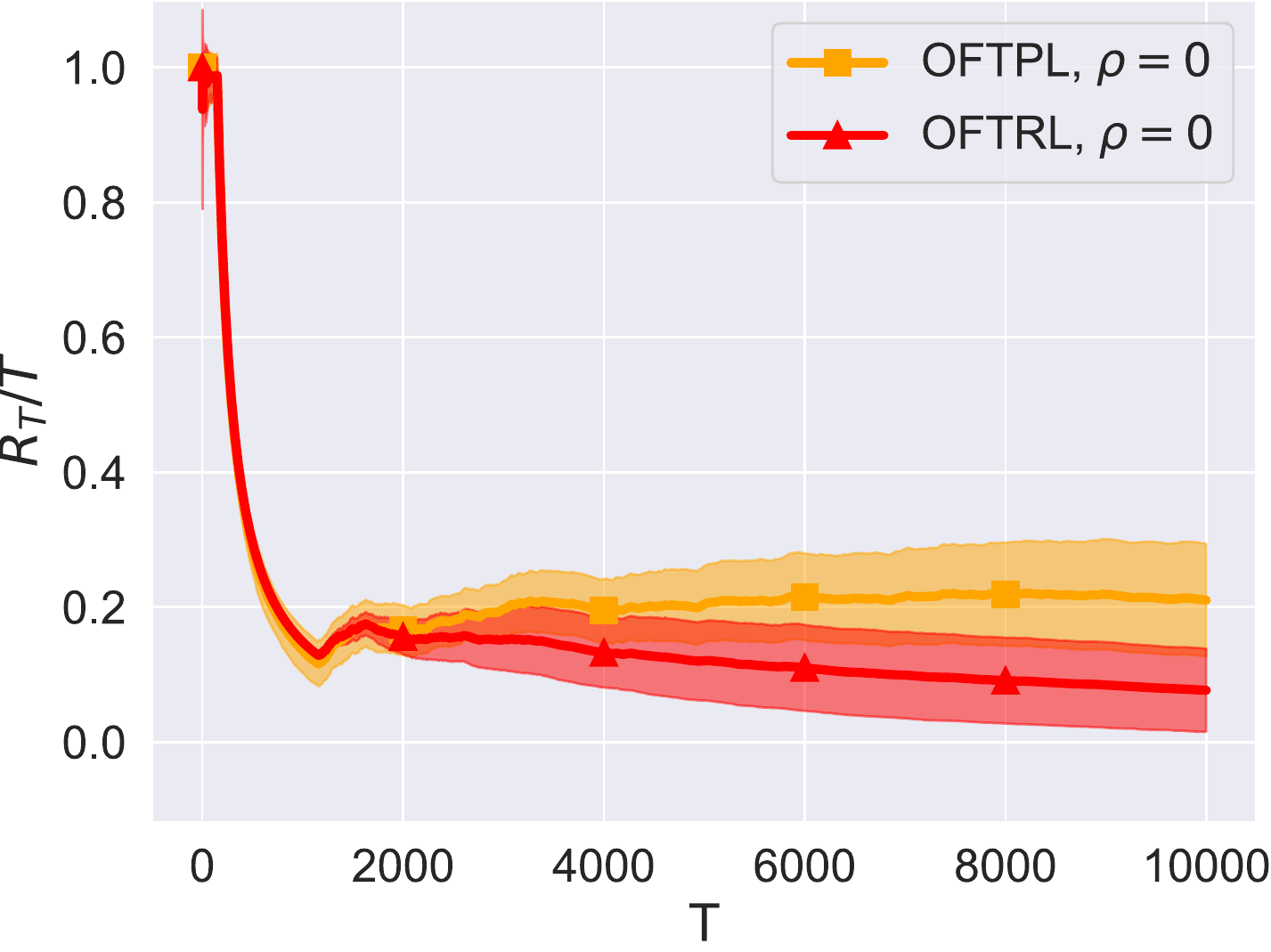}
         \caption{\footnotesize{\texttt{OFTRL/OFTPL,} $\rho\!=\!0$}}
         \label{fig:es_d}
     \end{subfigure}
        \vspace{-2mm}
        \caption{Comparison of $R_T/T$ for \textbf{equal-sized} files in a \textbf{single cache} and different policies: (a) \texttt{OFTRL-Cache} vs. FTRL/OGD; (b) \texttt{OFTPL-Cache} vs. FTPL \cite{abhishek-sigm20}; (c) \texttt{OFTRL-Cache} vs. \text{OFTPL-Cache} for good predictions and in (d) for worst-case predictions. In (c), (d) we plot the $0.95$-confidence interval ($8$ runs).}
        \label{fig:es}
\end{figure}

\begin{figure}
     \centering
     \begin{subfigure}[b]{0.24\textwidth}
         \centering
         \includegraphics[width=\textwidth]{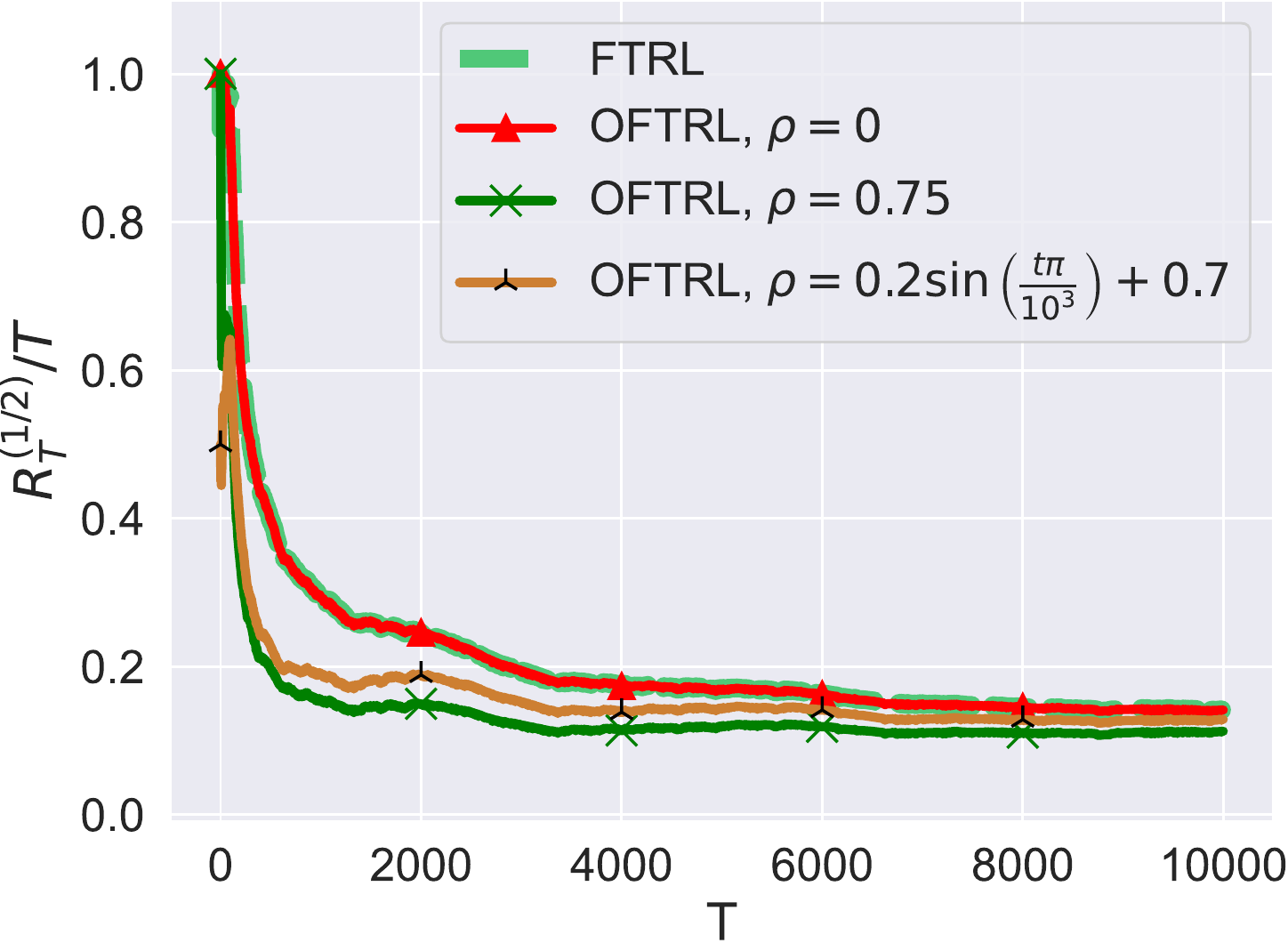}
         \caption{\footnotesize{\texttt{OFTRL-UneqCache}}}
         \label{fig:us_a}
     \end{subfigure}
     \hfill
     \begin{subfigure}[b]{0.24\textwidth}
         \centering
         \includegraphics[width=\textwidth]{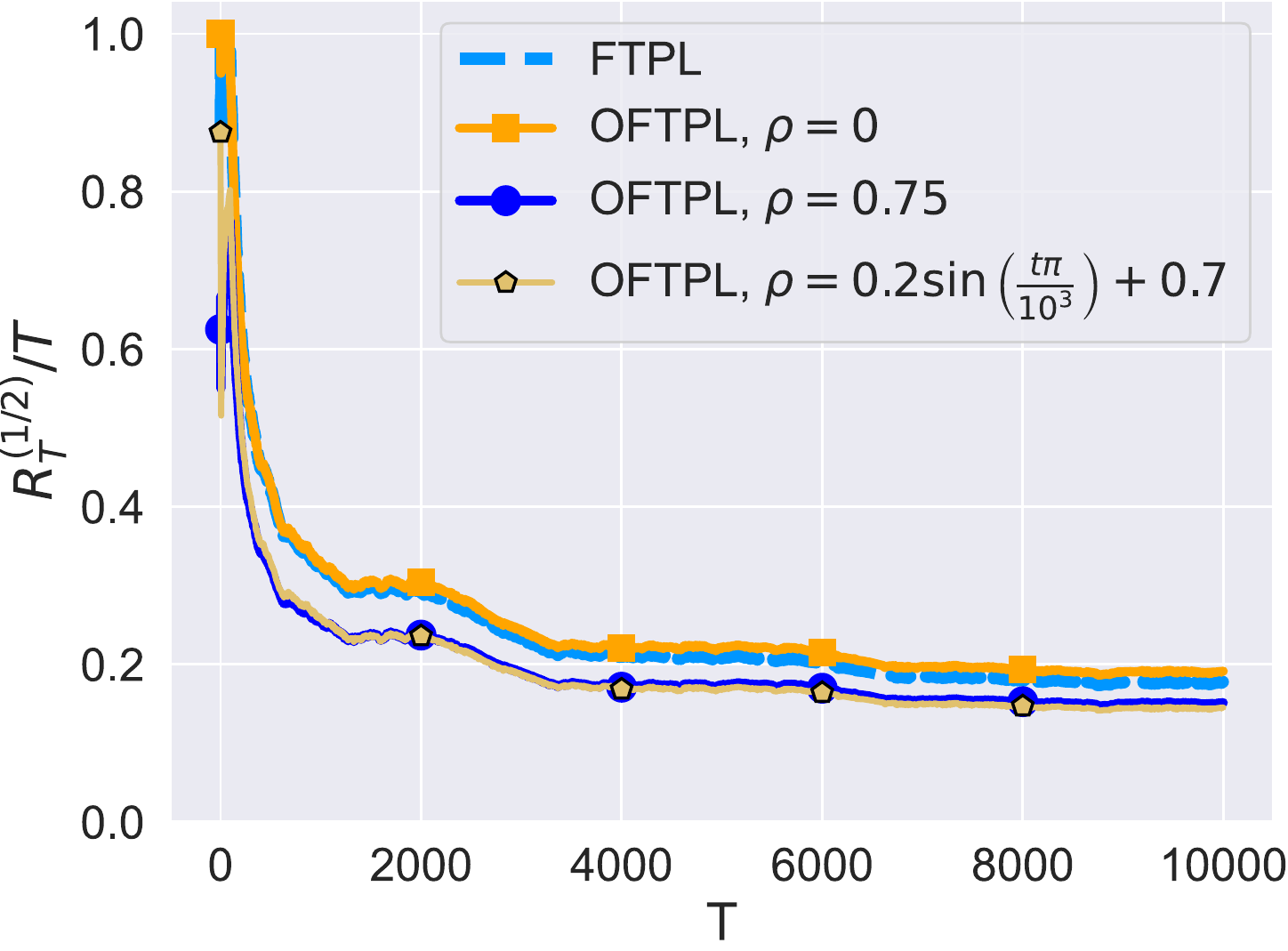}
         \caption{\footnotesize{\texttt{OFTPL-UneqCache}}}
         \label{fig:us_b}
     \end{subfigure}
     \hfill
     \begin{subfigure}[b]{0.24\textwidth}
         \centering
         \includegraphics[width=\textwidth]{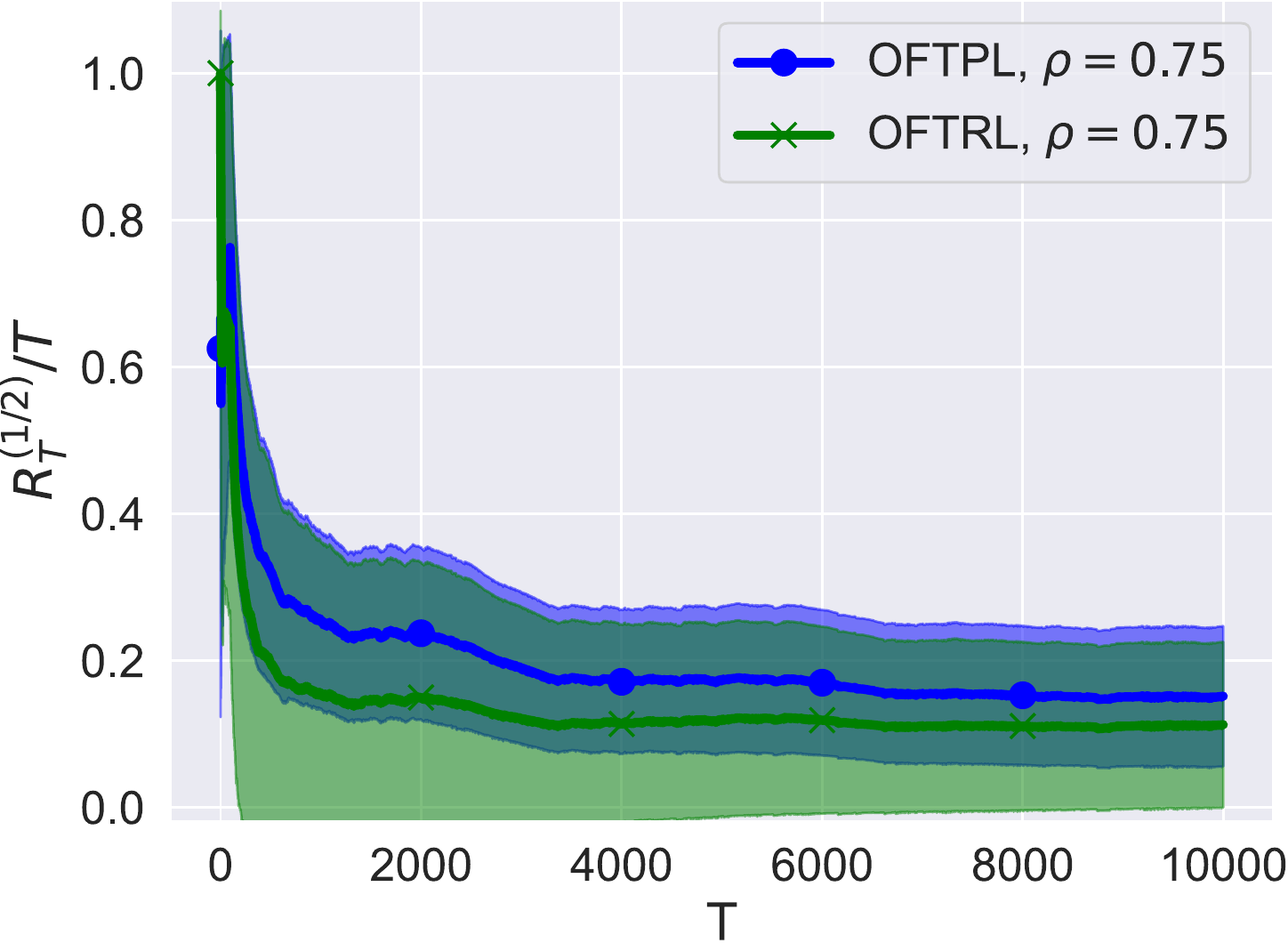}
         \caption{\footnotesize{\texttt{OFTRL/OFTPL,} $\rho\!=\!0.75$}}
         \label{fig:us_c}
     \end{subfigure}
     \begin{subfigure}[b]{0.24\textwidth}
         \centering
         \includegraphics[width=\textwidth]{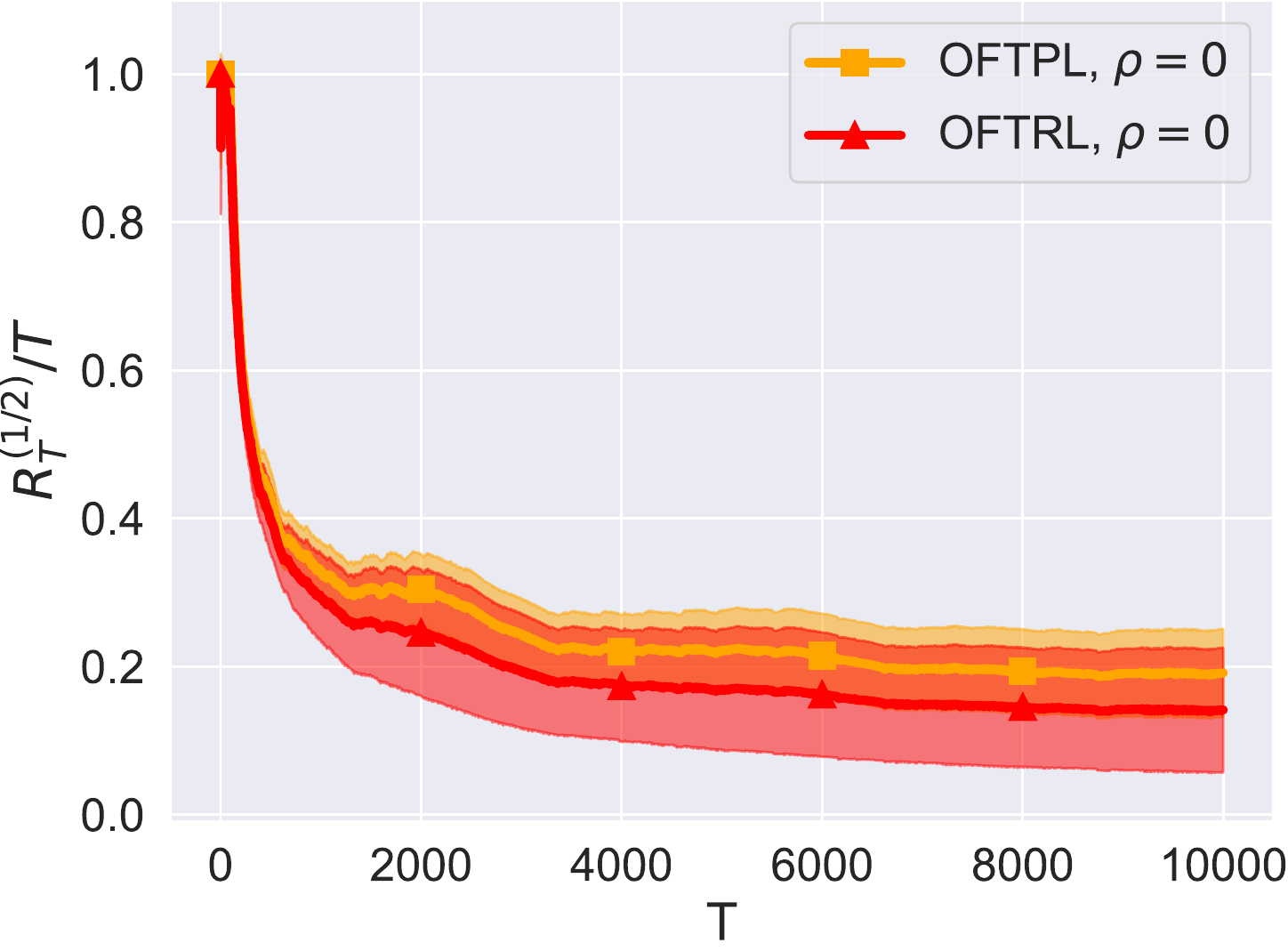}
         \caption{\footnotesize{\texttt{OFTRL/OFTPL,} $\rho\!=\!0$}}
         \label{fig:us_d}
     \end{subfigure}
        \vspace{-3mm}
    \caption{Comparing $R_T^{(\nicefrac{1}{2})}/T$ for \textbf{unequal-sized} files, \textbf{single cache}. (a) \texttt{OFTRL-UneqCache} vs. FTRL/OGD; (b) \texttt{OFTPL-UneqCache} vs. FTPL \cite{abhishek-sigm20}; (c)-(d) \texttt{OFTRL-UneqCache} vs. \text{OFTPL-UneqCache} for good (bad) predictions.}
        \label{fig:us}
        \vspace{-3mm}
\end{figure}

Fig. \ref{fig:es} shows the average regret (hit-rate gap to the optimal) growth with time for FTPL \cite{abhishek-sigm20}, FTRL \cite{paschos-jrnl}, and their proposed optimistic counterparts. We experiment with $\rho=0$, and $\rho=0.75$. If, e.g., the request predictions were based on recommendations, these reflect the cases where the users do not follow the recommendations ($\rho=0$), or actually request the recommended movie/file with probability $75\%$, ($\rho=0.75$). In addition, we experiment with a sinusoidal  $\rho$, which varies between $\rho=0.5$ and $\rho=0.9$, with a period of $10^3$ slots. We observe that optimism accelerates and improves learning the best files to cache, reaching an average improvement of $104\%$ (for OFTRL) and $37.1\%$ (for OFTPL) when $\rho=0.75$, compared to their "vanilla" counterparts (no predictions). Moreover, the performance degradation due to inaccurate predictions is almost negligible: $\leq 8.3 \%$ for OFTRL and $\leq 6.6\%$ for OFTPL. We also plot the $0.95$-confidence interval of $R_T$ in Figures \ref{fig:es_c} and \ref{fig:es_d}, where we note the more condensed distribution for OFTRL: $44.3\%$ and $26.1\%$ tighter at $t=10k$ when $\rho=0.75, \rho=0$, respectively. This is because the distribution ($\{\hat x_t\}_T$ iterates in OFTRL) becomes more concentrated with time; an argument that is not directly applicable to OFTPL, where the randomness is due to solving a \emph{perturbed} linear program. In Fig. \ref{fig:us} we evaluate the algorithms for the \emph{unequal} sizes case and plot the $\nicefrac{1}{2}$-regret. We observe the same pattern of negligible performance degradation when $\rho=0$, while $\rho=0.75$ enables an improvement of $35\%$ (for OFTRL) and $18.8\%$ (for OFTPL). We kindly refer the reader to the appendix for additional experiments for the experts-caching algorithm, the bipartite caching problem, and with probabilistic predictions of varying qualities.

\section{Conclusions and Future Work} \label{sec:conclusions}
In this paper, we presented several provably optimal algorithms that exploit predictions of unknown quality to improve the regret bounds for important variants of the discrete caching problem, while maintaining worst-case guarantees. The tackled problems are general (e.g., the Knapsack problem) and extend beyond caching; and hence the corresponding proposed optimistic algorithms can be applied to other similar problems. Our approach was based on the unified view of FTRL and FTPL algorithms as smoothing operations, where we proposed to make such smoothing adaptive to the predictions' accuracy. This allowed us to obtain a regret that interpolates between $0$ and $O(\sqrt{T})$.

This work also paves the road for several promising extensions. Given that eviction-only policies such as LFU or LRU have provably linear worst-case regret \cite{abhishek-sigm20}, we studied policies that can dynamically prefetch files. Thus, balancing the cache hits with prefetching costs remains to be tackled. Moreover, we note that \emph{static regret} algorithms, like ours, can be used as a subroutine in algorithms with stronger benchmarks, such as the $\Phi$-regret \cite{gordon2008no} and the minimum regret over all Finite-State-Predictors \cite{joshi2022universal}, and extending the study towards such more-refined benchmarks is certainly interesting. Finally, considering unequal routing utility (e.g., link-capacitated model \cite{6883210})  and unequal-sized files for the bipartite network model remains an open question \cite{mukhopadhyay2022k}.

%\newpage

\begin{acks}
This publication has emanated from
research conducted with the financial support of the European Commission
through Grant No. 101017109 (DAEMON). Abhishek Sinha is supported in part by a US-India NSF-DST collaborative research grant coordinated by IDEAS-Technology Innovation Hub (TIH) at the Indian Statistical Institute, Kolkata.
\end{acks}

\bibliography{references, george}

%%% -*-BibTeX-*-
%%% Do NOT edit. File created by BibTeX with style
%%% ACM-Reference-Format-Journals [18-Jan-2012].

\begin{thebibliography}{74}

%%% ====================================================================
%%% NOTE TO THE USER: you can override these defaults by providing
%%% customized versions of any of these macros before the \bibliography
%%% command.  Each of them MUST provide its own final punctuation,
%%% except for \shownote{}, \showDOI{}, and \showURL{}.  The latter two
%%% do not use final punctuation, in order to avoid confusing it with
%%% the Web address.
%%%
%%% To suppress output of a particular field, define its macro to expand
%%% to an empty string, or better, \unskip, like this:
%%%
%%% \newcommand{\showDOI}[1]{\unskip}   % LaTeX syntax
%%%
%%% \def \showDOI #1{\unskip}           % plain TeX syntax
%%%
%%% ====================================================================

\ifx \showCODEN    \undefined \def \showCODEN     #1{\unskip}     \fi
\ifx \showDOI      \undefined \def \showDOI       #1{#1}\fi
\ifx \showISBNx    \undefined \def \showISBNx     #1{\unskip}     \fi
\ifx \showISBNxiii \undefined \def \showISBNxiii  #1{\unskip}     \fi
\ifx \showISSN     \undefined \def \showISSN      #1{\unskip}     \fi
\ifx \showLCCN     \undefined \def \showLCCN      #1{\unskip}     \fi
\ifx \shownote     \undefined \def \shownote      #1{#1}          \fi
\ifx \showarticletitle \undefined \def \showarticletitle #1{#1}   \fi
\ifx \showURL      \undefined \def \showURL       {\relax}        \fi
% The following commands are used for tagged output and should be
% invisible to TeX
\providecommand\bibfield[2]{#2}
\providecommand\bibinfo[2]{#2}
\providecommand\natexlab[1]{#1}
\providecommand\showeprint[2][]{arXiv:#2}

\bibitem[{A. Giovanidis, and A. Avranas}(2016)]%
        {giovanidis-georgraphic}
\bibfield{author}{\bibinfo{person}{{A. Giovanidis, and A. Avranas}}.}
  \bibinfo{year}{2016}\natexlab{}.
\newblock \showarticletitle{{Spatial Multi-LRU: Distributed Caching for
  Wireless Networks with Coverage Overlaps}}.
\newblock \bibinfo{journal}{\emph{{arXiv:1612.04363}}} (\bibinfo{year}{2016}).
\newblock


\bibitem[Abernethy et~al\mbox{.}(2014)]%
        {abernethy14}
\bibfield{author}{\bibinfo{person}{Jacob Abernethy}, \bibinfo{person}{Chansoo
  Lee}, \bibinfo{person}{Abhinav Sinha}, {and} \bibinfo{person}{Ambuj Tewari}.}
  \bibinfo{year}{2014}\natexlab{}.
\newblock \showarticletitle{Online Linear Optimization via Smoothing}. In
  \bibinfo{booktitle}{\emph{Proc. of COLT}}.
\newblock


\bibitem[Alon and Spencer(2016)]%
        {alon2016probabilistic}
\bibfield{author}{\bibinfo{person}{Noga Alon} {and} \bibinfo{person}{Joel~H
  Spencer}.} \bibinfo{year}{2016}\natexlab{}.
\newblock \bibinfo{booktitle}{\emph{The probabilistic method}}.
\newblock \bibinfo{publisher}{John Wiley \& Sons}.
\newblock


\bibitem[Amatriain(2012)]%
        {recommend2}
\bibfield{author}{\bibinfo{person}{Xavier Amatriain}.}
  \bibinfo{year}{2012}\natexlab{}.
\newblock \showarticletitle{{Building Industrial-Scale Real-World Recommender
  Systems}}. In \bibinfo{booktitle}{\emph{Proc. of RecSys}}.
\newblock


\bibitem[Anderson et~al\mbox{.}(2022)]%
        {daron-ton}
\bibfield{author}{\bibinfo{person}{Daron Anderson}, \bibinfo{person}{George
  Iosifidis}, {and} \bibinfo{person}{Douglas Leith}.}
  \bibinfo{year}{2022}\natexlab{}.
\newblock \showarticletitle{{Lazy Lagrangians with Predictions for Online
  Learning}}.
\newblock \bibinfo{journal}{\emph{{arXiv preprint arXiv:2201.02890}}}
  (\bibinfo{year}{2022}).
\newblock


\bibitem[Andrew et~al\mbox{.}(2013)]%
        {pmlr-v30-Andrew13}
\bibfield{author}{\bibinfo{person}{Lachlan Andrew}, \bibinfo{person}{Siddharth
  Barman}, \bibinfo{person}{Katrina Ligett}, \bibinfo{person}{Minghong Lin},
  \bibinfo{person}{Adam Meyerson}, \bibinfo{person}{Alan Roytman}, {and}
  \bibinfo{person}{Adam Wierman}.} \bibinfo{year}{2013}\natexlab{}.
\newblock \showarticletitle{A Tale of Two Metrics: Simultaneous Bounds on
  Competitiveness and Regret}. In \bibinfo{booktitle}{\emph{Proc. of COLT}}.
\newblock


\bibitem[Antoniadis et~al\mbox{.}(2020)]%
        {antoniadis_20}
\bibfield{author}{\bibinfo{person}{Antonios Antoniadis},
  \bibinfo{person}{Christian Coester}, \bibinfo{person}{Marek Elias},
  \bibinfo{person}{Adam Polak}, {and} \bibinfo{person}{Bertrand Simon}.}
  \bibinfo{year}{2020}\natexlab{}.
\newblock \showarticletitle{{Online Metric Algorithms with Untrusted
  Predictions}}. In \bibinfo{booktitle}{\emph{Proc. of ICML}}.
\newblock


\bibitem[Bektas et~al\mbox{.}(2007)]%
        {bektas}
\bibfield{author}{\bibinfo{person}{T Bektas}, \bibinfo{person}{O Oguz}, {and}
  \bibinfo{person}{Ouveysi I.}} \bibinfo{year}{2007}\natexlab{}.
\newblock \showarticletitle{{Designing Cost-effective Content Distribution
  Networks}}.
\newblock \bibinfo{journal}{\emph{{Computers \& Operations Research}}}
  \bibinfo{volume}{34} (\bibinfo{year}{2007}), \bibinfo{pages}{2436--2449}.
\newblock


\bibitem[Bertsekas(1973)]%
        {bertsekas_stochastic}
\bibfield{author}{\bibinfo{person}{Dimitri~P Bertsekas}.}
  \bibinfo{year}{1973}\natexlab{}.
\newblock \showarticletitle{{Stochastic optimization problems with
  nondifferentiable cost functionals}}.
\newblock \bibinfo{journal}{\emph{Journal of Optimization Theory and
  Applications}} \bibinfo{volume}{12}, \bibinfo{number}{2}
  (\bibinfo{year}{1973}), \bibinfo{pages}{218--231}.
\newblock


\bibitem[Bhaskara et~al\mbox{.}(2020a)]%
        {google-2020}
\bibfield{author}{\bibinfo{person}{Aditya Bhaskara}, \bibinfo{person}{Ashok
  Cutkosky}, \bibinfo{person}{Ravi Kumar}, {and} \bibinfo{person}{Manish
  Purohit}.} \bibinfo{year}{2020}\natexlab{a}.
\newblock \showarticletitle{{Online Learning with Imperfect Hints}}. In
  \bibinfo{booktitle}{\emph{Proc. of ICML}}.
\newblock


\bibitem[Bhaskara et~al\mbox{.}(2020b)]%
        {google-nips-2020}
\bibfield{author}{\bibinfo{person}{A. Bhaskara}, \bibinfo{person}{A. Cutkosky},
  \bibinfo{person}{R. Kumar}, {and} \bibinfo{person}{M. Purohit}.}
  \bibinfo{year}{2020}\natexlab{b}.
\newblock \showarticletitle{Online Learning with Many Hints}. In
  \bibinfo{booktitle}{\emph{Proc. of NeurIPS}}.
\newblock


\bibitem[Bhattacharjee et~al\mbox{.}(2020)]%
        {abhishek-sigm20}
\bibfield{author}{\bibinfo{person}{Rajarshi Bhattacharjee},
  \bibinfo{person}{Subhankar Banerjee}, {and} \bibinfo{person}{Abhishek
  Sinha}.} \bibinfo{year}{2020}\natexlab{}.
\newblock \showarticletitle{{Fundamental Limits on the Regret of Online
  Network-Caching}}.
\newblock \bibinfo{journal}{\emph{Proc. ACM Meas. Anal. Comput. Syst.}}
  \bibinfo{volume}{4}, \bibinfo{number}{2} (\bibinfo{year}{2020}),
  \bibinfo{numpages}{31}~pages.
\newblock


\bibitem[Byrka et~al\mbox{.}(2017)]%
        {depround}
\bibfield{author}{\bibinfo{person}{Jaros\l{}aw Byrka}, \bibinfo{person}{Thomas
  Pensyl}, \bibinfo{person}{Bartosz Rybicki}, \bibinfo{person}{Aravind
  Srinivasan}, {and} \bibinfo{person}{Khoa Trinh}.}
  \bibinfo{year}{2017}\natexlab{}.
\newblock \showarticletitle{An Improved Approximation for K-Median and Positive
  Correlation in Budgeted Optimization}.
\newblock \bibinfo{journal}{\emph{ACM Trans. Algorithms}} \bibinfo{volume}{13},
  \bibinfo{number}{2} (\bibinfo{year}{2017}).
\newblock


\bibitem[Chatzieleftheriou et~al\mbox{.}(2019)]%
        {jordan-tmc}
\bibfield{author}{\bibinfo{person}{Livia~Elena Chatzieleftheriou},
  \bibinfo{person}{Merkouris Karaliopoulos}, {and} \bibinfo{person}{Iordanis
  Koutsopoulos}.} \bibinfo{year}{2019}\natexlab{}.
\newblock \showarticletitle{{Jointly Optimizing Content Caching and
  Recommendations in Small Cell Networks}}.
\newblock \bibinfo{journal}{\emph{IEEE Trans. Mobile Comput.}}
  \bibinfo{volume}{18}, \bibinfo{number}{1} (\bibinfo{year}{2019}),
  \bibinfo{pages}{125--138}.
\newblock


\bibitem[Cohen and Hazan(2015)]%
        {cohen15}
\bibfield{author}{\bibinfo{person}{Alon Cohen} {and} \bibinfo{person}{Tamir
  Hazan}.} \bibinfo{year}{2015}\natexlab{}.
\newblock \showarticletitle{{Following the Perturbed Leader for Online
  Structured Learning}}. In \bibinfo{booktitle}{\emph{Proc. of the ICML}}.
\newblock


\bibitem[Comden et~al\mbox{.}(2019)]%
        {comden-sigmetrics19}
\bibfield{author}{\bibinfo{person}{J. Comden}, \bibinfo{person}{S. Yao},
  \bibinfo{person}{N. Chen}, \bibinfo{person}{H. Xing}, {and}
  \bibinfo{person}{Z. Liu}.} \bibinfo{year}{2019}\natexlab{}.
\newblock \showarticletitle{{Online Optimization in Cloud Resource
  Provisioning: Predictions, Regrets and Algorithms}}.
\newblock \bibinfo{journal}{\emph{Proc. ACM Meas. Anal. Comput. Syst.}}
  \bibinfo{volume}{1}, \bibinfo{number}{3} (\bibinfo{year}{2019}),
  \bibinfo{numpages}{30}~pages.
\newblock


\bibitem[Cormen et~al\mbox{.}(2022)]%
        {cormen2022introduction}
\bibfield{author}{\bibinfo{person}{Thomas~H Cormen}, \bibinfo{person}{Charles~E
  Leiserson}, \bibinfo{person}{Ronald~L Rivest}, {and}
  \bibinfo{person}{Clifford Stein}.} \bibinfo{year}{2022}\natexlab{}.
\newblock \bibinfo{booktitle}{\emph{Introduction to algorithms}}.
\newblock \bibinfo{publisher}{MIT press}.
\newblock


\bibitem[{D. Chatzopoulos, C. Bermejo, Z. Huang, and P. Hui}(2017)]%
        {chatzopoulos-ARVR}
\bibfield{author}{\bibinfo{person}{{D. Chatzopoulos, C. Bermejo, Z. Huang, and
  P. Hui}}.} \bibinfo{year}{2017}\natexlab{}.
\newblock \showarticletitle{{Mobile Augmented Reality Survey: From Where We Are
  to Where We Go}}.
\newblock \bibinfo{journal}{\emph{{IEEE Access}}}  \bibinfo{volume}{5}
  (\bibinfo{year}{2017}), \bibinfo{pages}{6917--950}.
\newblock


\bibitem[Dantzig(1957)]%
        {dantzing}
\bibfield{author}{\bibinfo{person}{George~B. Dantzig}.}
  \bibinfo{year}{1957}\natexlab{}.
\newblock \showarticletitle{Discrete-Variable Extremum Problems}.
\newblock \bibinfo{journal}{\emph{Operations Research}} \bibinfo{volume}{5},
  \bibinfo{number}{2} (\bibinfo{year}{1957}), \bibinfo{pages}{266--277}.
\newblock


\bibitem[De~Rooij et~al\mbox{.}(2014)]%
        {follow-hedge}
\bibfield{author}{\bibinfo{person}{Steven De~Rooij}, \bibinfo{person}{Tim
  Van~Erven}, \bibinfo{person}{Peter~D. Gr\"{u}nwald}, {and}
  \bibinfo{person}{Wouter~M. Koolen}.} \bibinfo{year}{2014}\natexlab{}.
\newblock \showarticletitle{Follow the Leader If You Can, Hedge If You Must}.
\newblock \bibinfo{journal}{\emph{J. Mach. Learn. Res.}} \bibinfo{volume}{15},
  \bibinfo{number}{1} (\bibinfo{year}{2014}), \bibinfo{pages}{1281–1316}.
\newblock


\bibitem[{E. Leonardi, and G. Neglia}(2018)]%
        {leonardi18}
\bibfield{author}{\bibinfo{person}{{E. Leonardi, and G. Neglia}}.}
  \bibinfo{year}{2018}\natexlab{}.
\newblock \showarticletitle{{Implicit Coordination of Caches in Small-cell
  Networks Under Unknown Popularity Profiles}}.
\newblock \bibinfo{journal}{\emph{{IEEE J. Sel. Areas Commun.}}}
  \bibinfo{volume}{36}, \bibinfo{number}{6} (\bibinfo{year}{2018}),
  \bibinfo{pages}{1276--1285}.
\newblock


\bibitem[Fu et~al\mbox{.}(2022)]%
        {9681354}
\bibfield{author}{\bibinfo{person}{Yaru Fu}, \bibinfo{person}{Yue Zhang},
  \bibinfo{person}{Angus Wong}, {and} \bibinfo{person}{Tony~Q.S. Quek}.}
  \bibinfo{year}{2022}\natexlab{}.
\newblock \showarticletitle{Revenue Maximization: The Interplay Between
  Personalized Bundle Recommendation and Wireless Content Caching}.
\newblock \bibinfo{journal}{\emph{{IEEE} Trans. on Mobile Comput.}}
  (\bibinfo{year}{2022}).
\newblock
\urldef\tempurl%
\url{https://doi.org/10.1109/TMC.2022.3142809}
\showDOI{\tempurl}


\bibitem[{G. Gracioli, A. Alhammad, R. Mancuso, A. A. Frohlich, R.
  Pellizzoni}(2015)]%
        {embedded-caching}
\bibfield{author}{\bibinfo{person}{{G. Gracioli, A. Alhammad, R. Mancuso, A. A.
  Frohlich, R. Pellizzoni}}.} \bibinfo{year}{2015}\natexlab{}.
\newblock \showarticletitle{{A Survey on Cache Management Mechanisms for
  Real-Time Embedded Systems}}.
\newblock \bibinfo{journal}{\emph{{ACM Comput. Surv.}}} \bibinfo{volume}{48},
  \bibinfo{number}{2} (\bibinfo{year}{2015}), \bibinfo{numpages}{37}~pages.
\newblock


\bibitem[Giannakas et~al\mbox{.}(2021)]%
        {9528062}
\bibfield{author}{\bibinfo{person}{Theodoros Giannakas},
  \bibinfo{person}{Pavlos Sermpezis}, {and} \bibinfo{person}{Thrasyvoulos
  Spyropoulos}.} \bibinfo{year}{2021}\natexlab{}.
\newblock \showarticletitle{Network Friendly Recommendations: Optimizing for
  Long Viewing Sessions}.
\newblock \bibinfo{journal}{\emph{{IEEE} Trans. on Mobile Comput.}}
  (\bibinfo{year}{2021}).
\newblock
\urldef\tempurl%
\url{https://doi.org/10.1109/TMC.2021.3109727}
\showDOI{\tempurl}


\bibitem[Gomez-Uribe and Hunt(2016)]%
        {netflix}
\bibfield{author}{\bibinfo{person}{Carlos~A. Gomez-Uribe} {and}
  \bibinfo{person}{Neil Hunt}.} \bibinfo{year}{2016}\natexlab{}.
\newblock \showarticletitle{{The Netflix Recommender System: Algorithms,
  Business Value, and Innovation}}.
\newblock \bibinfo{journal}{\emph{ACM Trans. Manage. Inf. Syst.}}
  \bibinfo{volume}{6}, \bibinfo{number}{4} (\bibinfo{year}{2016}),
  \bibinfo{numpages}{19}~pages.
\newblock


\bibitem[Gordon et~al\mbox{.}(2008)]%
        {gordon2008no}
\bibfield{author}{\bibinfo{person}{Geoffrey~J Gordon}, \bibinfo{person}{Amy
  Greenwald}, {and} \bibinfo{person}{Casey Marks}.}
  \bibinfo{year}{2008}\natexlab{}.
\newblock \showarticletitle{No-regret learning in convex games}. In
  \bibinfo{booktitle}{\emph{Proceedings of the 25th international conference on
  Machine learning}}. \bibinfo{pages}{360--367}.
\newblock


\bibitem[Gramacy et~al\mbox{.}(2002)]%
        {ismail-nips02}
\bibfield{author}{\bibinfo{person}{Robert Gramacy}, \bibinfo{person}{Manfred
  Warmuth}, \bibinfo{person}{Scott Brandt}, {and} \bibinfo{person}{Ismail
  Ari}.} \bibinfo{year}{2002}\natexlab{}.
\newblock \showarticletitle{{Adaptive Caching by Refetching}}. In
  \bibinfo{booktitle}{\emph{Proc. of NIPS}}.
\newblock


\bibitem[Guillemin et~al\mbox{.}(2013)]%
        {volatility}
\bibfield{author}{\bibinfo{person}{Fabrice Guillemin}, \bibinfo{person}{Thierry
  Houdoin}, {and} \bibinfo{person}{Stéphanie Moteau}.}
  \bibinfo{year}{2013}\natexlab{}.
\newblock \showarticletitle{Volatility of YouTube content in Orange networks
  and consequences}. In \bibinfo{booktitle}{\emph{Proc. of ICC}}.
\newblock


\bibitem[Harper and Konstan(2015)]%
        {mlds}
\bibfield{author}{\bibinfo{person}{F.~Maxwell Harper} {and}
  \bibinfo{person}{Joseph~A. Konstan}.} \bibinfo{year}{2015}\natexlab{}.
\newblock \showarticletitle{{The MovieLens Datasets: History and Context}}.
\newblock \bibinfo{journal}{\emph{ACM Trans. Interact. Intell. Syst.}}
  \bibinfo{volume}{5}, \bibinfo{number}{4} (\bibinfo{year}{2015}),
  \bibinfo{numpages}{19}~pages.
\newblock


\bibitem[Hazan(2019)]%
        {hazan-book}
\bibfield{author}{\bibinfo{person}{Elad Hazan}.}
  \bibinfo{year}{2019}\natexlab{}.
\newblock \bibinfo{title}{{Introduction to Online Convex Optimization}}.
\newblock
\newblock
\urldef\tempurl%
\url{https://arxiv.org/abs/1909.05207}
\showURL{%
\tempurl}


\bibitem[Hutter and Poland(2005)]%
        {JMLR:v6:hutter05a}
\bibfield{author}{\bibinfo{person}{Marcus Hutter} {and} \bibinfo{person}{Jan
  Poland}.} \bibinfo{year}{2005}\natexlab{}.
\newblock \showarticletitle{Adaptive Online Prediction by Following the
  Perturbed Leader}.
\newblock \bibinfo{journal}{\emph{Journal of Machine Learning Research}}
  \bibinfo{volume}{6}, \bibinfo{number}{22} (\bibinfo{year}{2005}),
  \bibinfo{pages}{639--660}.
\newblock


\bibitem[Jelenkovi\'{c} and Kang(2008)]%
        {lru-sigmetrics08}
\bibfield{author}{\bibinfo{person}{Predrag~R. Jelenkovi\'{c}} {and}
  \bibinfo{person}{Xiaozhu Kang}.} \bibinfo{year}{2008}\natexlab{}.
\newblock \showarticletitle{{Characterizing the Miss Sequence of the LRU
  Cache}}.
\newblock \bibinfo{journal}{\emph{SIGMETRICS Perform. Eval. Rev.}}
  \bibinfo{volume}{36}, \bibinfo{number}{2} (\bibinfo{year}{2008}),
  \bibinfo{pages}{119--121}.
\newblock


\bibitem[Joshi and Sinha(2022)]%
        {joshi2022universal}
\bibfield{author}{\bibinfo{person}{Ativ Joshi} {and} \bibinfo{person}{Abhishek
  Sinha}.} \bibinfo{year}{2022}\natexlab{}.
\newblock \showarticletitle{Universal Caching}.
\newblock \bibinfo{journal}{\emph{arXiv preprint arXiv:2205.04860}}
  (\bibinfo{year}{2022}).
\newblock


\bibitem[{K. Chen, and L. Huang}(2018)]%
        {longbo-ton18}
\bibfield{author}{\bibinfo{person}{{K. Chen, and L. Huang}}.}
  \bibinfo{year}{2018}\natexlab{}.
\newblock \showarticletitle{{Timely-Throughput Optimal Scheduling With
  Prediction}}.
\newblock \bibinfo{journal}{\emph{{IEEE/ACM Tran. on Networking}}}
  \bibinfo{volume}{26}, \bibinfo{number}{6} (\bibinfo{year}{2018}),
  \bibinfo{pages}{2457--2470}.
\newblock


\bibitem[Kalai and Vempala(2005)]%
        {kalai2005efficient}
\bibfield{author}{\bibinfo{person}{Adam Kalai} {and} \bibinfo{person}{Santosh
  Vempala}.} \bibinfo{year}{2005}\natexlab{}.
\newblock \showarticletitle{Efficient algorithms for online decision problems}.
\newblock \bibinfo{journal}{\emph{J. Comput. System Sci.}}
  \bibinfo{volume}{71}, \bibinfo{number}{3} (\bibinfo{year}{2005}),
  \bibinfo{pages}{291--307}.
\newblock


\bibitem[Leconte et~al\mbox{.}(2016)]%
        {mathieu}
\bibfield{author}{\bibinfo{person}{Mathieu Leconte}, \bibinfo{person}{Georgios
  Paschos}, \bibinfo{person}{Lazaros Gkatzikis}, \bibinfo{person}{Moez Draief},
  \bibinfo{person}{Spyridon Vassilaras}, {and} \bibinfo{person}{Symeon
  Chouvardas}.} \bibinfo{year}{2016}\natexlab{}.
\newblock \showarticletitle{{Placing Dynamic Content in Caches with Small
  Population}}. In \bibinfo{booktitle}{\emph{Proc. of IEEE INFOCOM}}.
\newblock


\bibitem[Lee et~al\mbox{.}(1999)]%
        {lfu-sigmetrics99}
\bibfield{author}{\bibinfo{person}{Donghee Lee}, \bibinfo{person}{Jongmoo
  Choi}, \bibinfo{person}{Jong-Hun Kim}, \bibinfo{person}{Sam~H. Noh},
  \bibinfo{person}{Sang~Lyul Min}, \bibinfo{person}{Yookun Cho}, {and}
  \bibinfo{person}{Chong~Sang Kim}.} \bibinfo{year}{1999}\natexlab{}.
\newblock \showarticletitle{{On the Existence of a Spectrum of Policies That
  Subsumes the Least Recently Used (LRU) and Least Frequently Used (LFU)
  Policies}}.
\newblock \bibinfo{journal}{\emph{SIGMETRICS Perform. Eval. Rev.}}
  \bibinfo{volume}{27}, \bibinfo{number}{1} (\bibinfo{year}{1999}),
  \bibinfo{pages}{134--143}.
\newblock


\bibitem[Li et~al\mbox{.}(2021)]%
        {10.1145/3491047}
\bibfield{author}{\bibinfo{person}{Yuanyuan Li}, \bibinfo{person}{Tareq
  Si~Salem}, \bibinfo{person}{Giovanni Neglia}, {and} \bibinfo{person}{Stratis
  Ioannidis}.} \bibinfo{year}{2021}\natexlab{}.
\newblock \showarticletitle{Online Caching Networks with Adversarial
  Guarantees}.
\newblock \bibinfo{journal}{\emph{Proc. ACM Meas. Anal. Comput. Syst.}}
  \bibinfo{volume}{5}, \bibinfo{number}{3} (\bibinfo{year}{2021}),
  \bibinfo{numpages}{39}~pages.
\newblock


\bibitem[Lykouris and Vassilvtiskii(2018)]%
        {Lykouris-ML}
\bibfield{author}{\bibinfo{person}{Thodoris Lykouris} {and}
  \bibinfo{person}{Sergei Vassilvtiskii}.} \bibinfo{year}{2018}\natexlab{}.
\newblock \showarticletitle{{Competitive Caching with Machine Learned Advice}}.
  In \bibinfo{booktitle}{\emph{Proc. of ICML}}.
\newblock


\bibitem[{M. A. Maddah-Ali, and U. Niesen}(2014)]%
        {maddah-ali}
\bibfield{author}{\bibinfo{person}{{M. A. Maddah-Ali, and U. Niesen}}.}
  \bibinfo{year}{2014}\natexlab{}.
\newblock \showarticletitle{{Fundamental Limits of Caching}}.
\newblock \bibinfo{journal}{\emph{{IEEE Trans. Inf. Theory}}}
  \bibinfo{volume}{60}, \bibinfo{number}{5} (\bibinfo{year}{2014}),
  \bibinfo{pages}{2856--2867}.
\newblock


\bibitem[Madow(1949)]%
        {madow}
\bibfield{author}{\bibinfo{person}{William~G. Madow}.}
  \bibinfo{year}{1949}\natexlab{}.
\newblock \showarticletitle{{On the Theory of Systematic Sampling}}.
\newblock \bibinfo{journal}{\emph{{The Annals of Mathematical Statistics}}}
  \bibinfo{volume}{20}, \bibinfo{number}{3} (\bibinfo{year}{1949}),
  \bibinfo{pages}{333--354}.
\newblock


\bibitem[Martello and Toth(1990)]%
        {knapsack-book}
\bibfield{author}{\bibinfo{person}{Silvano Martello} {and}
  \bibinfo{person}{Paolo Toth}.} \bibinfo{year}{1990}\natexlab{}.
\newblock \bibinfo{booktitle}{\emph{{Knapsack Problems: Algorithms and Computer
  Implementations}}}.
\newblock \bibinfo{publisher}{J. Willey \& Sons}.
\newblock


\bibitem[McMahan(2017)]%
        {mcmahan-survey17}
\bibfield{author}{\bibinfo{person}{H.~Brendan McMahan}.}
  \bibinfo{year}{2017}\natexlab{}.
\newblock \showarticletitle{{A Survey of Algorithms and Analysis for Adaptive
  Online Learning}}.
\newblock \bibinfo{journal}{\emph{J. Mach. Learn. Res.}} \bibinfo{volume}{18},
  \bibinfo{number}{1} (\bibinfo{year}{2017}), \bibinfo{pages}{3117--3166}.
\newblock


\bibitem[Mhaisen et~al\mbox{.}(2022a)]%
        {naram-jrnl}
\bibfield{author}{\bibinfo{person}{Naram Mhaisen}, \bibinfo{person}{George
  Iosifidis}, {and} \bibinfo{person}{Douglas Leith}.}
  \bibinfo{year}{2022}\natexlab{a}.
\newblock \bibinfo{title}{Online Caching with no Regret: Optimistic Learning
  via Recommendations}.
\newblock
\newblock
\urldef\tempurl%
\url{https://arxiv.org/abs/2204.09345}
\showURL{%
\tempurl}


\bibitem[Mhaisen et~al\mbox{.}(2022b)]%
        {ocol}
\bibfield{author}{\bibinfo{person}{Naram Mhaisen}, \bibinfo{person}{George
  Iosifidis}, {and} \bibinfo{person}{Douglas Leith}.}
  \bibinfo{year}{2022}\natexlab{b}.
\newblock \showarticletitle{{Online Caching with Optimistic Learning}}. In
  \bibinfo{booktitle}{\emph{Proc. of IFIP Networking}}.
\newblock


\bibitem[Mohri and Yang(2016)]%
        {mohri-aistats2016}
\bibfield{author}{\bibinfo{person}{Mehryar Mohri} {and} \bibinfo{person}{Scott
  Yang}.} \bibinfo{year}{2016}\natexlab{}.
\newblock \showarticletitle{{Accelerating Online Convex Optimization via
  Adaptive Prediction}}. In \bibinfo{booktitle}{\emph{Proc. of AISTATS}}.
\newblock


\bibitem[Mukhopadhyay et~al\mbox{.}(2022)]%
        {mukhopadhyay2022k}
\bibfield{author}{\bibinfo{person}{Samrat Mukhopadhyay},
  \bibinfo{person}{Sourav Sahoo}, {and} \bibinfo{person}{Abhishek Sinha}.}
  \bibinfo{year}{2022}\natexlab{}.
\newblock \showarticletitle{k-experts-Online Policies and Fundamental Limits}.
  In \bibinfo{booktitle}{\emph{International Conference on Artificial
  Intelligence and Statistics}}. PMLR, \bibinfo{pages}{342--365}.
\newblock


\bibitem[O.~Dekel(2017)]%
        {dekel-nips17}
\bibfield{author}{\bibinfo{person}{et~al. O.~Dekel}.}
  \bibinfo{year}{2017}\natexlab{}.
\newblock \showarticletitle{{Online Learning with a Hint}}. In
  \bibinfo{booktitle}{\emph{Proc. of NeurIPS}}.
\newblock


\bibitem[Olmos et~al\mbox{.}(2014)]%
        {kauffman}
\bibfield{author}{\bibinfo{person}{Felipe Olmos}, \bibinfo{person}{Bruno
  Kauffmann}, \bibinfo{person}{Alain Simonian}, {and} \bibinfo{person}{Yannick
  Carlinet}.} \bibinfo{year}{2014}\natexlab{}.
\newblock \showarticletitle{{Catalog dynamics: Impact of content publishing and
  perishing on the performance of a LRU cache}}. In
  \bibinfo{booktitle}{\emph{Proc. of ITC}}.
\newblock


\bibitem[Orabona(2019)]%
        {orabona2021modern}
\bibfield{author}{\bibinfo{person}{Francesco Orabona}.}
  \bibinfo{year}{2019}\natexlab{}.
\newblock \bibinfo{title}{{A Modern Introduction to Online Learning}}.
\newblock
\newblock
\urldef\tempurl%
\url{https://arxiv.org/abs/1912.13213}
\showURL{%
\tempurl}


\bibitem[Paria and Sinha(2021)]%
        {leadcache}
\bibfield{author}{\bibinfo{person}{Debjit Paria} {and}
  \bibinfo{person}{Abhishek Sinha}.} \bibinfo{year}{2021}\natexlab{}.
\newblock \showarticletitle{LeadCache: Regret-Optimal Caching in Networks}. In
  \bibinfo{booktitle}{\emph{Proc. of NeurIPS}}.
\newblock


\bibitem[Paschos et~al\mbox{.}(2020b)]%
        {paschos-book}
\bibfield{author}{\bibinfo{person}{Georgios Paschos}, \bibinfo{person}{George
  Iosifidis}, {and} \bibinfo{person}{Giuseppe Caire}.}
  \bibinfo{year}{2020}\natexlab{b}.
\newblock \showarticletitle{{Cache Optimization Models and Algorithms}}.
\newblock \bibinfo{journal}{\emph{FnT in Communications and Information
  Theory}} \bibinfo{volume}{16}, \bibinfo{number}{3–4}
  (\bibinfo{year}{2020}), \bibinfo{pages}{156--345}.
\newblock


\bibitem[Paschos et~al\mbox{.}(2020a)]%
        {paschos-jrnl}
\bibfield{author}{\bibinfo{person}{Georgios~S. Paschos},
  \bibinfo{person}{Apostolos Destounis}, {and} \bibinfo{person}{George
  Iosifidis}.} \bibinfo{year}{2020}\natexlab{a}.
\newblock \showarticletitle{{Online Convex Optimization for Caching Networks}}.
\newblock \bibinfo{journal}{\emph{IEEE/ACM Trans. Networking}}
  \bibinfo{volume}{28}, \bibinfo{number}{2} (\bibinfo{year}{2020}),
  \bibinfo{pages}{625--638}.
\newblock


\bibitem[Paschos et~al\mbox{.}(2018)]%
        {paschos-jsac}
\bibfield{author}{\bibinfo{person}{Georgios~S. Paschos},
  \bibinfo{person}{George Iosifidis}, \bibinfo{person}{Meixia Tao},
  \bibinfo{person}{Don Towsley}, {and} \bibinfo{person}{Giuseppe Caire}.}
  \bibinfo{year}{2018}\natexlab{}.
\newblock \showarticletitle{{The Role of Caching in Future Communication
  Systems and Networks}}.
\newblock \bibinfo{journal}{\emph{IEEE J. Select. Areas Commun.}}
  \bibinfo{volume}{36}, \bibinfo{number}{6} (\bibinfo{year}{2018}),
  \bibinfo{pages}{1111--1125}.
\newblock


\bibitem[Poularakis et~al\mbox{.}(2014)]%
        {6883210}
\bibfield{author}{\bibinfo{person}{Konstantinos Poularakis},
  \bibinfo{person}{George Iosifidis}, {and} \bibinfo{person}{Leandros
  Tassiulas}.} \bibinfo{year}{2014}\natexlab{}.
\newblock \showarticletitle{{Approximation Algorithms for Mobile Data Caching
  in Small Cell Networks}}.
\newblock \bibinfo{journal}{\emph{IEEE Trans. Commun.}} \bibinfo{volume}{62},
  \bibinfo{number}{10} (\bibinfo{year}{2014}), \bibinfo{pages}{3665--3677}.
\newblock


\bibitem[Rakhlin and Sridharan(2013)]%
        {rakhlin-nips2013}
\bibfield{author}{\bibinfo{person}{Alexander Rakhlin} {and}
  \bibinfo{person}{Karthik Sridharan}.} \bibinfo{year}{2013}\natexlab{}.
\newblock \showarticletitle{{Optimization, Learning, and Games with Predictable
  Sequences}}. In \bibinfo{booktitle}{\emph{Proc. of NeurIPS}}.
\newblock


\bibitem[Rodriguez et~al\mbox{.}(2021)]%
        {rodriguez-usenis21}
\bibfield{author}{\bibinfo{person}{Liana Rodriguez}, \bibinfo{person}{Farzana
  Yusuf}, \bibinfo{person}{Steven Lyons}, \bibinfo{person}{Eysler Paz}, {and}
  \bibinfo{person}{Raju Rangaswami}.} \bibinfo{year}{2021}\natexlab{}.
\newblock \showarticletitle{{Learning Cache Replacement with Cacheus}}. In
  \bibinfo{booktitle}{\emph{Proc. of USENIX Conferecne on File and Storage
  Technologies}}.
\newblock


\bibitem[Rohatgi(2020)]%
        {Rohatgi}
\bibfield{author}{\bibinfo{person}{Dhruv Rohatgi}.}
  \bibinfo{year}{2020}\natexlab{}.
\newblock \showarticletitle{Near-Optimal Bounds for Online Caching with Machine
  Learned Advice}. In \bibinfo{booktitle}{\emph{Proc. of ACM-SIAM SODA}}.
\newblock


\bibitem[Rutten et~al\mbox{.}(2022)]%
        {rutten_22}
\bibfield{author}{\bibinfo{person}{Daan Rutten}, \bibinfo{person}{Nico
  Christianson}, \bibinfo{person}{Debankur Mukherjee}, {and}
  \bibinfo{person}{Adam Wierman}.} \bibinfo{year}{2022}\natexlab{}.
\newblock \bibinfo{title}{{Online Optimization with Untrusted Predictions}}.
\newblock
\newblock
\urldef\tempurl%
\url{https://arxiv.org/abs/2202.03519}
\showURL{%
\tempurl}


\bibitem[Sachs et~al\mbox{.}(2022)]%
        {sarah-between}
\bibfield{author}{\bibinfo{person}{Sarah Sachs}, \bibinfo{person}{Hédi
  Hadiji}, \bibinfo{person}{Tim van Erven}, {and} \bibinfo{person}{Cristóbal
  Guzmán}.} \bibinfo{year}{2022}\natexlab{}.
\newblock \bibinfo{title}{Between Stochastic and Adversarial Online Convex
  Optimization: Improved Regret Bounds via Smoothness}.
\newblock
\newblock
\urldef\tempurl%
\url{https://arxiv.org/abs/2202.07554}
\showURL{%
\tempurl}


\bibitem[Sadeghi et~al\mbox{.}(2018)]%
        {giannakis-q-learning}
\bibfield{author}{\bibinfo{person}{Alireza Sadeghi}, \bibinfo{person}{Fatemeh
  Sheikholeslami}, {and} \bibinfo{person}{Georgios~B. Giannakis}.}
  \bibinfo{year}{2018}\natexlab{}.
\newblock \showarticletitle{{Optimal and Scalable Caching for 5G Using
  Reinforcement Learning of Space-Time Popularities}}.
\newblock \bibinfo{journal}{\emph{IEEE J. Select. Areas Commun.}}
  \bibinfo{volume}{12}, \bibinfo{number}{1} (\bibinfo{year}{2018}),
  \bibinfo{pages}{180--190}.
\newblock


\bibitem[Sadeghi et~al\mbox{.}(2019)]%
        {8790766}
\bibfield{author}{\bibinfo{person}{Alireza Sadeghi}, \bibinfo{person}{Fatemeh
  Sheikholeslami}, \bibinfo{person}{Antonio~G. Marques}, {and}
  \bibinfo{person}{Georgios~B. Giannakis}.} \bibinfo{year}{2019}\natexlab{}.
\newblock \showarticletitle{{Reinforcement Learning for Adaptive Caching With
  Dynamic Storage Pricing}}.
\newblock \bibinfo{journal}{\emph{IEEE J. Select. Areas Commun}}
  \bibinfo{volume}{37}, \bibinfo{number}{10} (\bibinfo{year}{2019}),
  \bibinfo{pages}{2267--2281}.
\newblock


\bibitem[Shanmugam et~al\mbox{.}(2013)]%
        {femtocaching}
\bibfield{author}{\bibinfo{person}{Karthikeyan Shanmugam},
  \bibinfo{person}{Negin Golrezaei}, \bibinfo{person}{Alexandros~G Dimakis},
  \bibinfo{person}{Andreas~F Molisch}, {and} \bibinfo{person}{Giuseppe Caire}.}
  \bibinfo{year}{2013}\natexlab{}.
\newblock \showarticletitle{{Femtocaching: Wireless Content Delivery Through
  Distributed Caching Helpers}}.
\newblock \bibinfo{journal}{\emph{IEEE Trans. Inform. Theory}}
  \bibinfo{volume}{59}, \bibinfo{number}{12} (\bibinfo{year}{2013}),
  \bibinfo{pages}{8402--8413}.
\newblock


\bibitem[Si~Salem et~al\mbox{.}(2021a)]%
        {tareq-jrnl}
\bibfield{author}{\bibinfo{person}{T. Si~Salem}, \bibinfo{person}{G. Neglia},
  {and} \bibinfo{person}{S. Ioannidis}.} \bibinfo{year}{2021}\natexlab{a}.
\newblock \bibinfo{title}{No-Regret Caching via Online Mirror Descent}.
\newblock
\newblock
\urldef\tempurl%
\url{https://arxiv.org/abs/2101.12588}
\showURL{%
\tempurl}


\bibitem[Si~Salem et~al\mbox{.}(2021b)]%
        {tareq-conf}
\bibfield{author}{\bibinfo{person}{Tareq Si~Salem}, \bibinfo{person}{Giovanni
  Neglia}, {and} \bibinfo{person}{Stratis Ioannidis}.}
  \bibinfo{year}{2021}\natexlab{b}.
\newblock \showarticletitle{{No-Regret Caching via Online Mirror Descent}}. In
  \bibinfo{booktitle}{\emph{Proc. of ICC}}.
\newblock


\bibitem[Somuyiwa et~al\mbox{.}(2018)]%
        {gunduz-reinforcement}
\bibfield{author}{\bibinfo{person}{Samuel~O. Somuyiwa},
  \bibinfo{person}{András György}, {and} \bibinfo{person}{Deniz Gündüz}.}
  \bibinfo{year}{2018}\natexlab{}.
\newblock \showarticletitle{{A Reinforcement-Learning Approach to Proactive
  Caching in Wireless Networks}}.
\newblock \bibinfo{journal}{\emph{IEEE J. Select. Areas Commun.}}
  \bibinfo{volume}{36}, \bibinfo{number}{6} (\bibinfo{year}{2018}),
  \bibinfo{pages}{1331--1344}.
\newblock


\bibitem[Suggala and Netrapalli(2020a)]%
        {suggala-2}
\bibfield{author}{\bibinfo{person}{Arun Suggala} {and}
  \bibinfo{person}{Praneeth Netrapalli}.} \bibinfo{year}{2020}\natexlab{a}.
\newblock \showarticletitle{Follow the Perturbed Leader: Optimism and Fast
  Parallel Algorithms for Smooth Minimax Games}. In
  \bibinfo{booktitle}{\emph{Proc. of NeurIPS}}.
\newblock


\bibitem[Suggala and Netrapalli(2020b)]%
        {suggala-1}
\bibfield{author}{\bibinfo{person}{Arun~Sai Suggala} {and}
  \bibinfo{person}{Praneeth Netrapalli}.} \bibinfo{year}{2020}\natexlab{b}.
\newblock \showarticletitle{Online Non-Convex Learning: Following the Perturbed
  Leader is Optimal}. In \bibinfo{booktitle}{\emph{Proc. of ALT}}.
\newblock


\bibitem[Traverso et~al\mbox{.}(2013)]%
        {snm}
\bibfield{author}{\bibinfo{person}{Stefano Traverso}, \bibinfo{person}{Mohamed
  Ahmed}, \bibinfo{person}{Michele Garetto}, \bibinfo{person}{Paolo Giaccone},
  \bibinfo{person}{Emilio Leonardi}, {and} \bibinfo{person}{Saverio
  Niccolini}.} \bibinfo{year}{2013}\natexlab{}.
\newblock \showarticletitle{{Temporal Locality in Today's Content Caching: Why
  It Matters and How to Model It}}.
\newblock \bibinfo{journal}{\emph{SIGCOMM Comput. Commun. Rev.}}
  \bibinfo{volume}{43}, \bibinfo{number}{5} (\bibinfo{year}{2013}),
  \bibinfo{pages}{5--12}.
\newblock


\bibitem[Vazirani(2001)]%
        {vazirani2001approximation}
\bibfield{author}{\bibinfo{person}{Vijay~V Vazirani}.}
  \bibinfo{year}{2001}\natexlab{}.
\newblock \bibinfo{booktitle}{\emph{Approximation algorithms}}.
  Vol.~\bibinfo{volume}{1}.
\newblock \bibinfo{publisher}{Springer}.
\newblock


\bibitem[{W. Wang, and C. Lu}(2015)]%
        {projection-simplex}
\bibfield{author}{\bibinfo{person}{{W. Wang, and C. Lu}}.}
  \bibinfo{year}{2015}\natexlab{}.
\newblock \showarticletitle{{Projection onto the Capped Simplex}}.
\newblock \bibinfo{journal}{\emph{{arXiv preprint arXiv:1503.01002}}}
  (\bibinfo{year}{2015}).
\newblock


\bibitem[{X. Huang, S. Bian, X. Gao, W. Wu, Z. Shao, Y. Yang, J. C.S.
  Lui}(2021)]%
        {lui-ton21}
\bibfield{author}{\bibinfo{person}{{X. Huang, S. Bian, X. Gao, W. Wu, Z. Shao,
  Y. Yang, J. C.S. Lui}}.} \bibinfo{year}{2021}\natexlab{}.
\newblock \showarticletitle{{Online VNF Chaining and Predictive Scheduling:
  Optimality and Trade-Offs}}.
\newblock \bibinfo{journal}{\emph{{IEEE/ACM Tran. on Networking}}}
  \bibinfo{volume}{29}, \bibinfo{number}{4} (\bibinfo{year}{2021}),
  \bibinfo{pages}{1867--1880}.
\newblock


\bibitem[{Z. Zhou, X. Chen, W. Wu, D. Wu, and J. Zhang}(2019)]%
        {xu-mobihoc19}
\bibfield{author}{\bibinfo{person}{{Z. Zhou, X. Chen, W. Wu, D. Wu, and J.
  Zhang}}.} \bibinfo{year}{2019}\natexlab{}.
\newblock \showarticletitle{{Predictive Online Server Provisioning for
  Cost-Efficient IoT Data Streaming Across Collaborative Edges}}. In
  \bibinfo{booktitle}{\emph{Proc. of ACM Mobihoc}}.
\newblock


\bibitem[Zink et~al\mbox{.}(2009)]%
        {zink2008watch}
\bibfield{author}{\bibinfo{person}{Michael Zink}, \bibinfo{person}{Kyoungwon
  Suh}, \bibinfo{person}{Yu Gu}, {and} \bibinfo{person}{Jim Kurose}.}
  \bibinfo{year}{2009}\natexlab{}.
\newblock \showarticletitle{{Characteristics of YouTube Network Traffic at a
  Campus Network - Measurements, Models, and Implications}}.
\newblock \bibinfo{journal}{\emph{Comput. Netw.}} \bibinfo{volume}{53},
  \bibinfo{number}{4} (\bibinfo{year}{2009}), \bibinfo{pages}{501--514}.
\newblock


\end{thebibliography}
\bibliographystyle{ACM-Reference-Format}
\appendix{}
\section{Appendix}

\subsection{Comparative Summary with Related Work}\label{appendix:summary-table}
Table \ref{tab:summary} shows the performance and complexity trade-offs for the presented algorithms, and compares them to recent studies of discrete no-regret caching in the literature. The best case refers to the situation where the request predictions are perfect $\tilde{\theta}_t = \theta_t, \forall t$. The worst case refers to the situation where predictions are furthest from the truth $\tilde{\theta}_t = \arg\max_{\theta}\|\theta - \theta_t\|, \forall t$. The previous studies have the best and worst case columns merged as they do not utilize predictions. Furthermore, the works of \cite{abhishek-sigm20} and \cite{tareq-jrnl} assume and utilize knowledge of the time horizon $T$ ( \cite{10.1145/3491047} uses the standard doubling trick) and use the Lipschitz constant for the gradient (i.e., request) vector. Thus, they are not classified as performing Adaptive Learning (\textbf{Adap. Learn.}) as defined by \cite{mcmahan-survey17}, which argues about the advantages of adaptive algorithms of the sort presented here. While the authors in \cite{abhishek-sigm20} discuss the bipartite model, their simpler linear \emph{elastic} model of utility is different than the one considered here (see \cite[Sec. 3.2]{abhishek-sigm20}). Hence, we compare to their single cache result. Finally, for algorithm $5$, we make explicit the dependence on \emph{weights} regret $R_T^{(w)}$, although it is still $R_T^{(w)} \leq O(\sqrt{T})$ to clarify the cause of inferior performance of the experts-based optimism in the worst case compared to adaptive smoothing, which even appears in the simulations.

\begin{table*}[h]
\begin{footnotesize}
	\centering
	\caption{Online discrete caching policies with \emph{adversarial no regret} guarantees: a summary of the contributions and comparison with literature. 
	For the constant $\alpha$, recall that $\alpha=1$ indicates the regular regret. Otherwise, we have $\alpha$-approximate regret (see equation \eqref{eq:def-alpha-regret}).
	}
	\label{tab:summary}
	\begin{tabularx}{0.99\textwidth}{|m{0.7cm}||m{4.0cm}|m{1.3cm}|m{2.25cm}|m{1.1cm}|m{1.0cm}|Y|}
		\hline
		\multirow{2}{*}{\textbf{Alg.}} &   \multirow{2}{*}{ \textbf{Model and Conditions}} & \multicolumn{2}{c|}{\textbf{Guarantees ($R^{(\alpha)}_T\leq$ )}} & \makecell{\textbf{Comput.}\\ \textbf{Complex.}} &\makecell{\textbf{Approx.}\\ \textbf{Const.}\\ \textbf{$\alpha$}}& 
		\multirow{2}{*}{\makecell{\textbf{Adap.} \\\textbf{Learn.}}} 
		\\ \cline{3-4} 
		&  & \textbf{Best case} & \textbf{Worst case}  & & & \\ \hline
		
		1 &
		$\bullet$ Single cache $\bullet$  Predictions & $0$
		& $O(\sqrt{T})$ & $O(N^2)$ &$1$& \checkmark
		\\ \hline 
		
		2 &
		$\bullet$ Single cache $\bullet$  Predictions
		& 0 & $O(\textit{poly-log}(N)\sqrt{T})$ & $O(N)$ &$1$ & \checkmark \\ \hline
		
		3 &
		$\bullet$ Single cache $\bullet$ Predictions \newline $\bullet$ Unequal sizes
		&  $0$
		& $O(\sqrt{T})$ & $O(N^2)$ &$\nicefrac{1}{2}$ & \checkmark 
		\\ \hline 
		
		4 &
		$\bullet$ Single cache $\bullet$ Predictions \newline $\bullet$ Unequal sizes
		&  0 & $O(\textit{poly-log}(N)\sqrt{T})$ & $O(N)$&$\nicefrac{1}{2}$ & \checkmark 
		\\ \hline 

		5 &
		$\bullet$ Bipartite Network $\bullet$ Predictions
		& 0 & $O(\sqrt{T})$ & $O(N^2)$&$1-\nicefrac{1}{e}$ & \checkmark 
		\\ \hline 
		
		6 &
		$\bullet$ Single Cache $\bullet$ Predictions
	   & $b<0$ & $R_T^{(w)} + O(\sqrt{T})$ & $O(N)$ &$1$& \checkmark
		\\ \hline 

		\cite{abhishek-sigm20} &
		$\bullet$ Single cache
		&  \multicolumn{2}{c|}{$
			O(\textit{poly-log}(N)\sqrt{T})$ } & $O(N)$ &$1$& --
		\\ \hline
		
		\cite{tareq-jrnl} &
		 $\bullet$ Single cache
		&  \multicolumn{2}{c|}{$
			O\left(\sqrt{T}\right)$ } &$O(N)$&$1$& --
		\\ \hline
		
		\cite{leadcache} &
		$\bullet$ Bipartite network
		&  \multicolumn{2}{c|}{$
			O(\textit{poly-log}(N)\sqrt{T})$ } & $O(N)$&$1-\nicefrac{1}{e}$ &\checkmark
		\\ \hline
		
		\cite{10.1145/3491047} &
		$\bullet$ General network
		&  \multicolumn{2}{c|}{$
			O(\textit{poly-log}(N)\sqrt{T})$ } & $O(N)$&$1-\nicefrac{1}{e}$ &--
		\\ \hline
	
	\end{tabularx}
\end{footnotesize}	
\end{table*}

\subsection{Madow's Sampling Algorithm } \label{appendix:madow}
Algorithm \ref{alg:ms} describes how we obtain an integral caching vector from the continuous one. We start by sampling a uniform scalar and then loop for $C$ iterations, including in our gradually-built set exactly one item per iteration. Hence we ensure the resulting set satisfies the capacity constraint. During an iteration, we include an item if its probability (continuous variable) falls in a carefully designed range: $[\pi_{j-1},\pi_{j}]$. Hence, each item is included with probability $\pi_{j-1} - \pi_{j} = \hat x_i$. We refer the reader to \cite{madow} for futher details.

\begin{algorithm}
\caption{Madow's Sampling (\texttt{MadowSample})}
\label{alg:ms}
\begin{footnotesize}
\nl \textbf{Input}: $\hat x \in [0, 1]^N, \sum x_{i\in\N} \leq C $.\\%
\nl \textbf{Output}: Random set $S$,\quad \text{s.t} $|S|=C$ \text{and} $\Pr(i \in S ) = x_i$\\%
\nl Sample a uniformly random scalar $U\in[0, 1]$ \\
\nl Define the cumulative probabilities $\pi_0=0, \quad \pi_i = \pi_{i-1} + \hat x_i, \quad \forall\ 1 \leq i \leq N$\\

\nl \For{ $i=0,1,\ldots, C$  }{
    \nl $S \gets S \cup \left\{j: \pi_{j-1} \leq U+i < \pi_j \right\} $
}
\nl return $S$
\end{footnotesize}
\end{algorithm}

\subsection{Dependent Rounding Algorithm ($DepRound$)}
The dependent rounding algorithm operates sequentially. At each iteration, it picks two continuous variables and transfers at least one of them into an integer (through the \emph{if} statements in lines $4$ to $8$), while adjusting the other one (lines $9$ to $12$). Hence, we ensure that when the algorithm terminates, only one item is still fractional. The properties of the resulting vector listed in Lemma \ref{lemm:dep-round} are proved in \cite[Lem. 2.1]{depround}.

\begin{algorithm}[ht]
\caption{Dependent Rounding (\texttt{DepRound}) }
\label{alg:dr}
\begin{footnotesize}
\nl \textbf{Input}: $a \in [0, 1]^N, s \in R_+^N$.\\%
\nl \textbf{Output}: $b$ satisfying points in lemma-\ref{lemm:dep-round}.\\
\nl \While{$a$ contains two or more fractional elements}{
    \nl Denote the two left most fracitonal elements $a_i$ and $a_j$.\\
    \nl \If {$0 \leq s_i\ a_i + s_j\ a_j \leq \min\{a_i, a_j\}$}{
        Set $b_i = 0$ with probability $\nicefrac{s_j a_j}{s_i a_i + s_j a_j}$. With the remaining probability set $b_j = 0$
        }
    \nl \If {$a_i \leq s_i\ a_i + s_j\ a_j \leq a_j$}{
    Set $a_i = 1$ with probability ${a_i}$. With the remaining probability set $a_i = 0$
    }
    \nl \If {$a_j \leq s_i\ a_i + s_j\ a_j \leq a_i$}{
    Set $a_j = 1$ with probability ${a_j}$. With the remaining probability set $a_j = 0$
    }
    \nl \If {$\max\{a_i, a_j\} \leq  s_i\ a_i + s_j\ a_j \leq a_i+ a_j$} {
        Set $b_i = 1$ with probability $\nicefrac{s_j(1-a_j)}{(s_i(1-a_i)) + (s_j(1-a_j))}$. With the remaining set $b_j = 1$
        }
    \nl if $b_i = 0$ set $b_j =  \nicefrac{s_i}{s_j}\ a_i + a_j $\\
    \nl if $b_i = 1$ set $b_j = a_j -  \nicefrac{s_i}{s_j}\ (1-a_i) $\\
    \nl if $b_j = 0$ set $b_i = a_i +  \nicefrac{s_j}{s_i}\ a_j $\\
    \nl if $b_j = 1$ set $b_i = a_i -  \nicefrac{s_j}{s_i}\ (1-a_j) $\\
}
return $b$
\end{footnotesize}
\end{algorithm}

% \begin{algorithm}
% \label{alg:dr}
% \caption{DepRound}
% \nl \textbf{Input}: $a \in [0, 1]^N$.\\%
% \nl \textbf{Output}: $b$\\%
% \nl \While{a contains two or more fractional elements}{
%     \nl Denote the two left most fracitonal elements $a_i$ and $a_j$.\\
%     \nl \If {$0 \leq a_i + a_j \leq 1$}{
%         Set $b_i = 0$ with probability $\nicefrac{a_j}{a_i + a_j}$. With the remaining probability set $b_j = 0$
%         }
%     \nl \If {$1 \leq a_i + a_j \leq 2$} {
%         Set $b_i = 1$ with probability $\nicefrac{1-a_j}{(1-a_i) + (1-a_j)}$. With the remaining probability set $b_j = 1$
%         test}
%     \nl if $b_i = 0$ set $b_j = a_i + a_j $\\
%     \nl if $b_i = 1$ set $b_j = a_j - (1-a_i) $\\
%     \nl if $b_j = 0$ set $b_i = a_i + a_j $\\
%     \nl if $b_j = 1$ set $b_i = a_i - (1-a_j) $\\

% }
% return $b$
% \end{algorithm}

\subsection{Proof of Theorem \ref{thm:oftpl-unequal}}
\label{appendix:theorem-oftpl-unequal}

\begin{proof}
	Since ${y_{ti}} \in \{0, 1\}\ \forall t, i$, and the sampling in line $8$ is uniform, we get $\expec{}{y_{ti}} = \frac{1}{2}$. Hence  
	\begin{eqnarray}
		\expec{}{y_{ti}}  \geq \frac{1}{2} \hat y_{ti}. \label{eq:pw_app}
	\end{eqnarray}
	where we have used that $\hat y_t \in [0, 1]^N$. Now, from the definition of $\nicefrac{1}{2}$-Regret we have:
\begin{align}
R_T^{(\nicefrac{1}{2})}&= \frac{1}{2}\sum_{t=1}^T \langle \theta_t, y^\star \rangle - \mathbb{E}\left[\sum_{t=1}^T \langle \theta_t, {y}_t \rangle\right] \stackrel{(a)}{\leq} \frac{1}{2}\left(\sum_{t=1}^T \langle \theta_t, y^\star \rangle - \sum_{t=1}^T \langle \theta_t, \hat{y}_t \rangle\right) \\
&\stackrel{(b)}{=} 1.84\sqrt{C}\left(\ln\frac{Ne}{C}\right)^{1/4}
\sqrt{\sumT \err_1^2}. 
\end{align}
where inequality $(a)$ follows from the $\nicefrac{1}{2}$-approximation property of the randomized rounding algorithm \eqref{eq:pw_app}; and $(b)$ follows from the result of Theorem\footnote{Theorem \ref{thm:oftpl} operates on integral decisions $y_t$. Nonetheless, even if we allow $y^\star, y_t \in \text{conv}(\mathcal{X})$, they are still integral due to the linear program in line $6$ of Algorithm \ref{alg:oftpl} ($\{0, 1\}$ decision variables with non-negative coefficients).} \ref{thm:oftpl}.
	
\end{proof}

\subsection{Proof of Theorem \ref{thm:oftrl_US}} \label{appendix:theorem-oftrl_US}

First, we show that the $\nicefrac{1}{2}$-point-wise approximation holds for $\{x_t\}_t$. Then, we re-use the result of Theorem \ref{thm:oftrl-m}. In detail, by Lemma \ref{lemm:dep-round}, the \texttt{DepRound} subroutine returns $\bar x_t$ such that $\expec{}{\bar x_t} = \hat x_t$. Then, by the same argument about uniform sampling in the proof of Theorem \ref{thm:oftpl-unequal}, we have that $\expec{}{x_t} \geq \frac{1}{2} \bar x_t$, where  $x_t \in \mathcal{X}_s$. We recover our $\nicefrac{1}{2}$-approximation guarantee for OFTRL iterates $\{x_t\}_t$:
\begin{align}
\expec{}{x_t} \geq \frac{1}{2} \hat{x}_t. \label{eq:pw_app_oftrl}
\end{align}
By the definition of $\nicefrac{1}{2}$-regret guarantee, we have 
\begin{align}
\!\!\!R_T^{(\nicefrac{1}{2})} &= \frac{1}{2}\sum_{t=1}^T \langle \theta_t, \hat x^\star \rangle - \mathbb{E}\left[\sum_{t=1}^T \langle \theta_t, {x}_t \rangle\right] \stackrel{(a)}{\leq} \frac{1}{2}\big(\sum_{t=1}^T \langle \theta_t, \hat x^\star \rangle - \sum_{t=1}^T \langle \theta_t, \hat{x}_t \rangle\big) \stackrel{(b)}=  \sqrt{C}\sqrt{\sumT \err_2^2}. \notag
\end{align}
%\cmt{Please check the norm above}
where inequality $(a)$ follows from the $\nicefrac{1}{2}$-approximation property of the randomized rounding algorithm \eqref{eq:pw_app_oftrl}; and $(b)$ follows from the result of Theorem \ref{thm:oftrl-m}. Although that theorem states the bound for the regret of the integral decisions $\{x_t\}_t$, we have seen in its proof that the bound is essentially the same for the continuous actions.

% In fact, that theorem operates on integral decisions $x_t$, yet we can recall that $x^\star$ is integral and that we showed $\expec{}{x_t}$ = $\hat{x}_t$; thus, the same guarantee can be used here.

\subsection{Proof of Theorem \ref{thm:oftrl-m-bp}}
\label{appendix:oftrl-m-bp}
We start from the result of \cite[Thm. 1]{naram-jrnl} (or its earlier version from \cite{ocol}), which provides guarantees for the regret of the continuous variables $\hat x$. For the moment, assume that we have the following point-wise approximation for the decision variables $\mathbb{E}[\hat k] = k$, $\expec{}{\hat u} \geq (1-\nicefrac{1}{e})u$, and that the capacity constraints are respected. Then, due to the linearity of the objective function $f_t(\cdot)$, we get that our $\alpha$-regret (with $\alpha = 1-\nicefrac{1}{e}$) is:
\begin{align}
R_T^{(1-\nicefrac{1}{e})} &= (1-\frac{1}{e})\sum_{t=1}^T \langle \theta_t, \hat x^\star \rangle - \mathbb{E}\sum_{t=1}^T \langle \theta_t, {x}_t \rangle \stackrel{(a)}{\leq} 1 - \frac{1}{e}\big(\sum_{t=1}^T \langle \theta_t, \hat x^\star \rangle - \sum_{t=1}^T \langle \theta_t, \hat{x}_t \rangle\big) \notag \\
&\stackrel{(b)}{=} (2-\frac{2}{e})\sqrt{1+JC}\sqrt{\sumT \err_2^2}.  \notag
\end{align}
where inequality $(a)$ follows from the $(1-\nicefrac{1}{e})$-approximation property of the randomized rounding algorithm; and $(b)$ follows from the result of Theorem \cite[Thm. 1]{naram-jrnl}.

Now, to show that $\mathbb{E}[\hat k] = k$, we follow the same argument in the proof of Theorem \ref{thm:oftrl-m}. Namely, due to Madow's sampling, each file is included with probability $\hat k$, and at most $C_j$ files are included at each cache. Hence, $\expec{}{ k_{nj}} = \hat k_{nj}, \forall{n, j}$. Regarding the point-wise approximation for $u$, we define the set $\mathcal{J}^{i}$ of  caches connected to user $i$: 
\[
\mathcal{J}^{i} = \left\{j \in \mathcal{J} \big|\  d_{ij} = 1\right\}.
\]
Then, we have
\begin{align}
 \expec{}{u_{nij}} &\stackrel{(a)} = \Pr[u_{nij} = 1] \stackrel{(b)}= \Pr[\vee_{j \in \mathcal{J}^i} k_{nj} = 1 ] \stackrel{(c)}= 1 - \prod_{j \in \mathcal{J}^i} \big(1 - \hat k_{nj}\big) 
    \\
    &\stackrel{(d)}\geq 1 - e^{-\sum_{j\in\mathcal{J}^i} \hat k_{nj}}  \stackrel{(e)}\geq 1 - e^{-\hat u_{nij}} \stackrel{(f)}\geq \left(1-\frac{1}{e}\right)\hat u_{nij} \notag.
\end{align}
Where in the above chain of inequalities, $(a)$ follows from  $u_{nij}$ being binary variable; $(b)$ by the construction of the algorithm (step $8$); and $(c)$ from the independent rounding for each cache. Also, inequality $(d)$ follows from from $e^x \geq 1 + x, \forall{x} \in \mathbb{R}$; $(e)$ from the relaxed version of the caching/routing constraint of $\mathbb P_2$, and finally $(f)$ from the concavity of $1-e^{-x}$ and the domain of $\hat u_{nij}$ being restricted to $[0, 1]$. 

%\eqref{eq:opt1b}

\subsection{Additional Simulations} 

\subsubsection{Probabilistic Predictions} \label{appendix:simulations}

Let us first demonstrate, with a simple example, that using probabilistic predictions is beneficial for the performance of optimistic algorithms. The regret bound of the proposed algorithms depend on the terms $\err$, $\forall t$. Now, consider a prediction $\ti\theta_t $ that places $\epsilon$ probability mass on the correct file, and the remaining uniformly over the rest of the files in the library. Then, we get: \[
\err_2 \approx 1-\epsilon, \quad \text{since} \quad \frac{(1-\epsilon)^2}{(N-1)} \approx 0,
\]
compared to a mis-prediction (or, mistaken) one-hot $\ti\theta_t$ which will have $\err_2=\sqrt{2}$. Using the $\ell_1$ norm, a one-hot mistake costs $2$ compared to $2-2\epsilon$ for the probabilistic one. We stress again that all the results presented in this work hold both for probabilistic and for deterministic predictions. The former can be taken directly from the output of a forecasting model, while one can create the latter by simply using the highest-probability request.

We continue by presenting experimental results for a probabilistic prediction model with varying accuracy. In detail, in Fig. \ref{fig:soft} we measure the regret of the proposed OFTRL and OFTPL policies after $5k$ time steps (i.e., file request) using the well-known YouTube request trace \cite{zink2008watch} with $N=10^4$ and $C=150$. $R_{5k}$ is measured using prediction vectors with varying density that is placed on the file to be requested. Namely, if at step $t$, the requested file is $n$, we feed the optimistic algorithms with a prediction vector: 
\[
\ti\theta \quad  \text{with:} \quad  \ti\theta^t_{n} = \zeta \quad \text{and} \quad \ti\theta^t_{n'} = (1-\zeta)/(N-1),\ \forall n'\neq n.
\]
That is, the prediction vector has $\zeta$ probability placed on the file to be requested, and the remaining ($1-\zeta$) uniformly distributed across the remaining files. We note that when the prediction vector is almost uniform (i.e., $\ti\theta$ contains no useful information), the optimistic versions nearly match the non optimistic ones. 

% We can see that at $\zeta=0.1$, both OFTRL and OFTPL already start outperforming their non-optimistic counterparts, by $9.6\%$ and $7.8\%$, respectively. Also, they outperform the best-in-hindsight benchmark $x^\star$ when the accuracy becomes reasonably high ($\zeta\geq0.8$). Lastly, we see that OFTRL has a performance advantage of up to $28.7\%$ compared to OFTPL, when fed the same predictions, at the expense of its additional computation complexity.  

We can see that at $\zeta=0.1$, both OFTRL and OFTPL already start outperforming their non-optimistic counterparts, by $9.8\%$ and $1.4\%$, respectively. Also, they outperform the best-in-hindsight benchmark $x^\star$ when the accuracy becomes reasonably high ($\zeta\geq0.8$). Lastly, we see that OFTRL has a performance advantage of up to $59.6\%$ compared to OFTPL, when fed the same predictions, at the expense of its additional computation complexity. 

\begin{figure}
     \centering
     \includegraphics[width=0.75\textwidth]{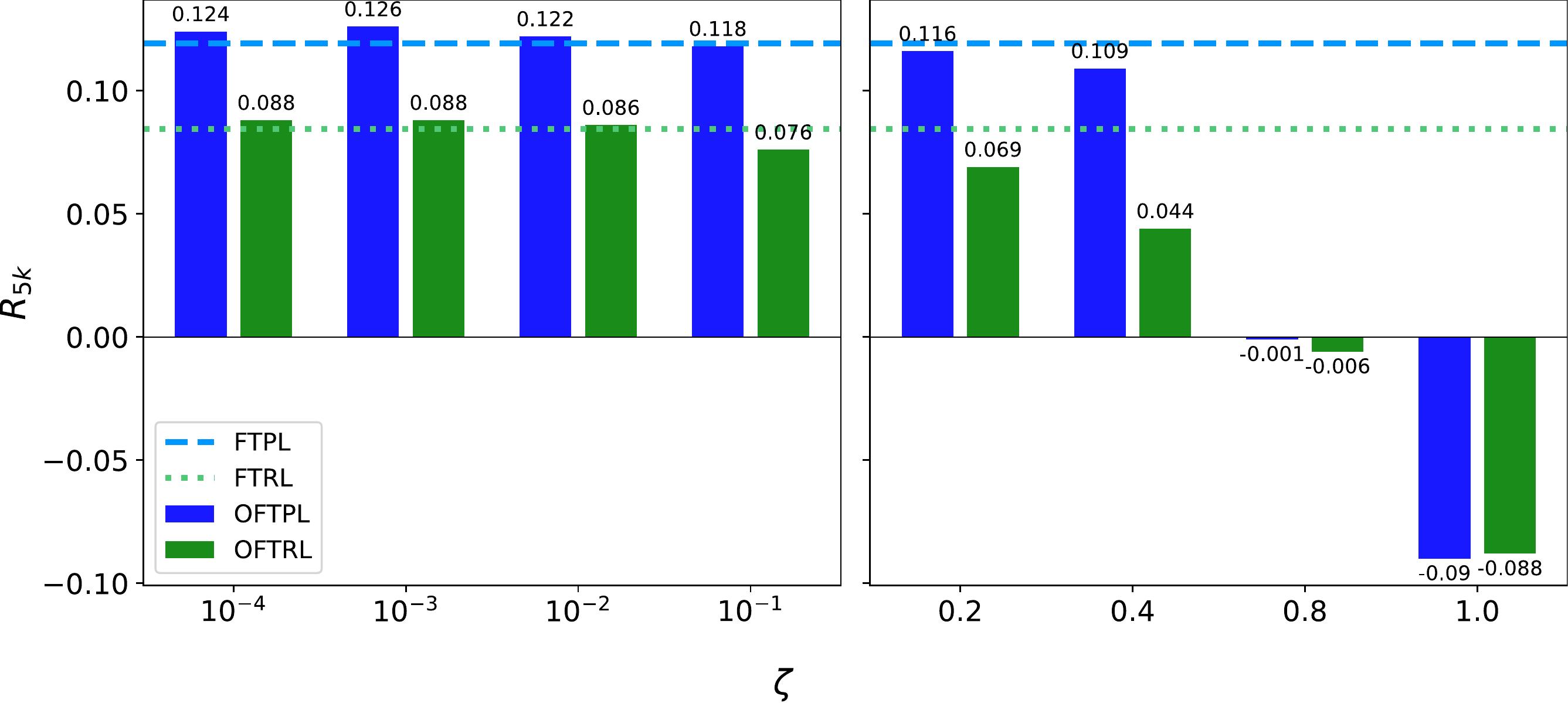}
     \vspace{-5mm}
     \captionof{figure}{Regret with varying probability mass placed on the correct file in the prediction vector.}
     \label{fig:soft}
\end{figure}

\subsubsection{Algorithms \ref{alg:oftrl-bp} and \ref{alg:exps}}

\begin{figure*}
\begin{minipage}{.35\textwidth}
     \includegraphics[width=0.95\textwidth]{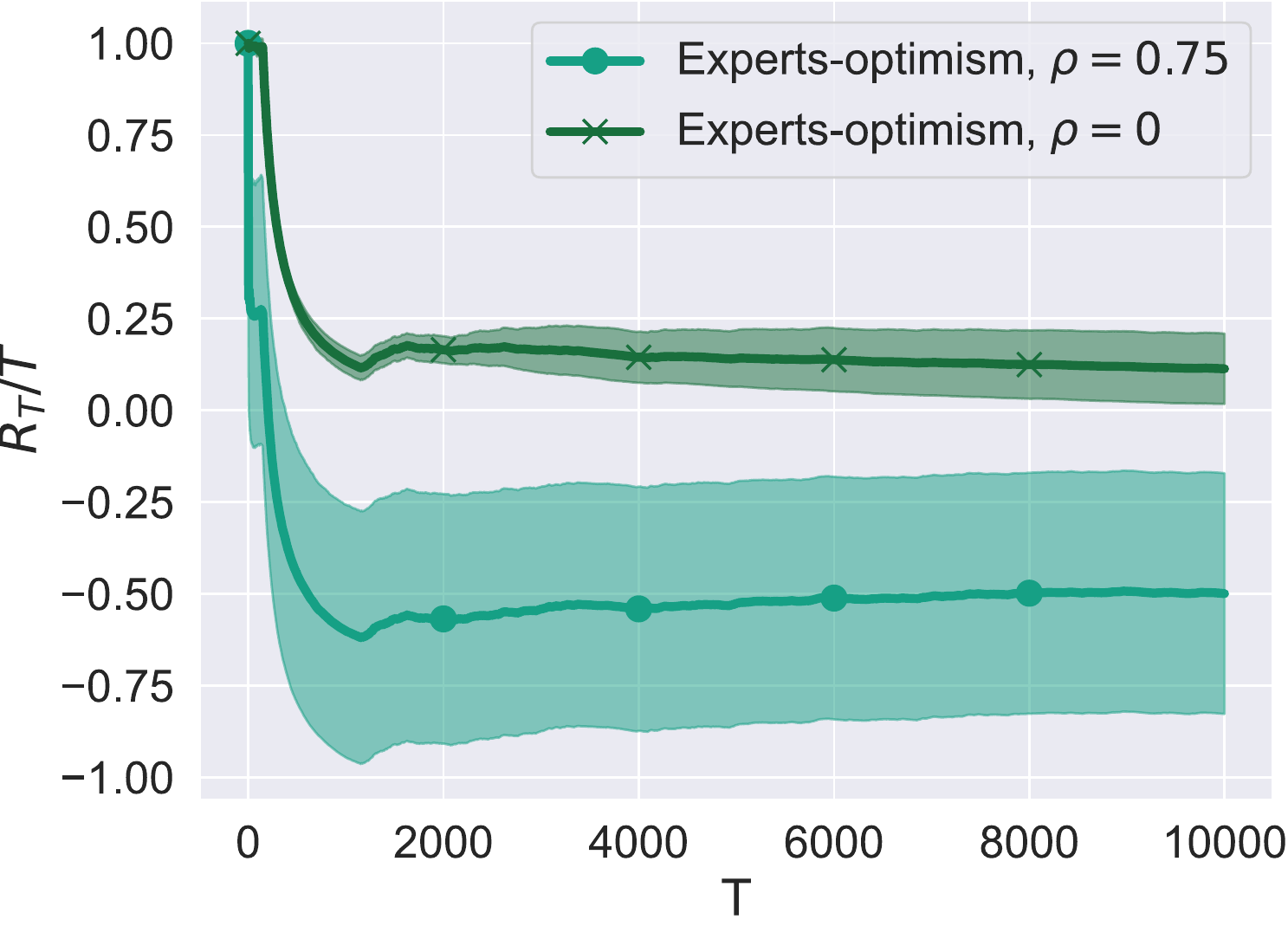}
    \vspace{-1mm}     
     \captionof{figure}{Average regret of experts-based optimism.}
     \label{fig:experts_eval}
\end{minipage}\ \ 
\begin{minipage}{.35\textwidth}
     \includegraphics[width=0.95\textwidth]{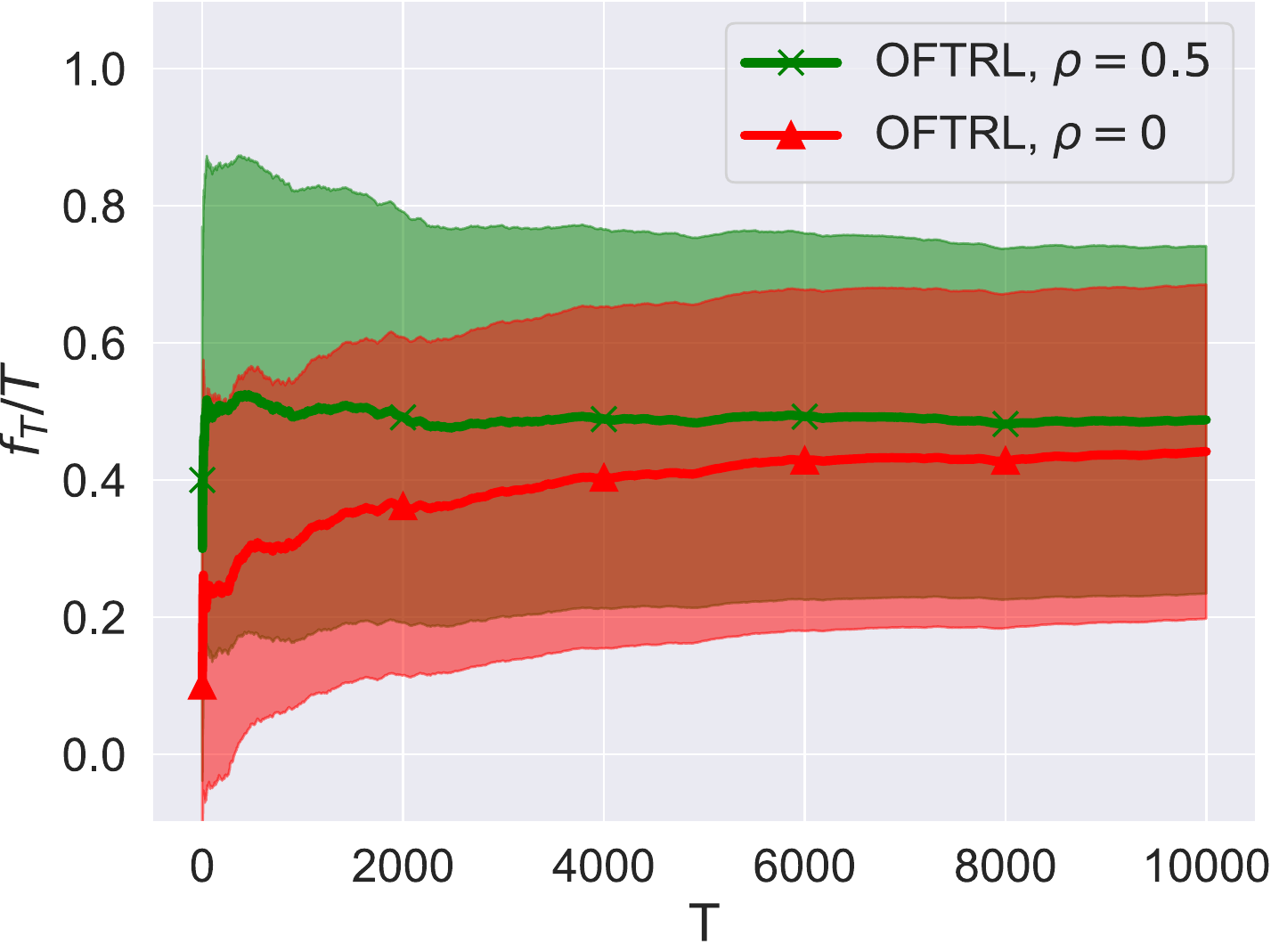}
     \vspace{-1mm}
     \captionof{figure}{\small{Average OFTRL hit-rate  for cache network.}}
     \label{fig:bp}
\end{minipage}
\end{figure*}

Fig. \ref{fig:experts_eval} plots the regret for the expert-based Algorithm \ref{alg:exps}. Note that the $R_T$ can reach negative values, i.e., outperform better the benchmark, when $\rho=0.5$. This is aligned with the bound in \eqref{experts-regret} and hints to the fact that stronger benchmarks can be used for this algorithm. However, it performs worse than the regularization-based optimism in the case where $\rho=0$, achieving regret $R_T=0.113$ at time $T=10k$ compared to $R_T=0.075$ (OFTRL). Lastly, the bipartite \emph{utility} is shown in Fig. \ref{fig:bp}, the \emph{hit-ratio} of OFTRL is approximately $0.49$ when $\rho=0.5$. Expectantly, the performance drops when $\rho=0$, but steadily increases from $0.30$ at $T=500$, to $0.44$ at $T=10k$.

\end{document}